\newtheorem{theorem}{Theorem}
\newtheorem{Lemma}{Lemma}
\newtheorem{assumption}{Assumption}
\newtheorem{remark}{Remark}
\newcommand{\squeeze}{}
\newcommand{\eqdef}{\mathrel{\mathop:}=}
\def\eqref#1{(\ref{#1})}
\def\1{\bm{1}}
\def\eps{{\epsilon}}
\newcommand{\diag}{\mathrm{Diag}}
\DeclareMathAlphabet{\mathsfit}{\encodingdefault}{\sfdefault}{m}{sl}
\SetMathAlphabet{\mathsfit}{bold}{\encodingdefault}{\sfdefault}{bx}{n}
\newcommand{\E}{\mathbb{E}}
\newcommand{\R}{\mathbb{R}}
\newcommand{\grad}{g} % stochastic gradient
\newcommand{\error}{\xi} % error feddback
\newcommand{\acum}{A} % accumulator = gradient + error
\newcommand{\bcum}{B} % conv comb of accumulator and latest high-dim momentum (smooth space exploration)
\newcommand{\orthfull}{\bar{\mathcal{U}}} % square orthonormal matrix, e.g. from SVD
\newcommand{\orthfullT}{\orthfull^\top} % transpose of \orthfull
\newcommand{\orthpart}{\mathcal{U}} % first r columns of \orthfull
\newcommand{\orthpartT}{\orthpart^\top} % transpose of \orthpart
\newcommand{\acumlow}{a} % projected low dimensional accumulator
\newcommand{\mlowpre}[1]{m_{#1-1/2}} % previous low-dim momentum moved into new low-dim space
\newcommand{\vlowpre}[1]{v_{#1-1/2}} % previous low-dim 2nd momentum moved into new low-dim space
\newcommand{\mlow}{m} % low-dim momentum used in the model update (conv comb of \mlowpre and \acumlow)
\newcommand{\vlow}{v} % low-dim 2nd momentum used in the model update ignoring AMSGrad trick (conv comb of \vlowpre and \acumlow^2)
\newcommand{\vlowhat}{\tilde{v}} % low-dim 2nd momentum used in the model update after AMSGrad trick
\newcommand{\mhigh}{M} % high-dim momentum, i.e. \orthpart x \mlow
\newcommand{\cmark}{\ding{51}}%
\newcommand{\xmark}{\ding{55}}%
\renewcommand{\paragraph}[1]{\noindent\textbf{#1}}
\title{LDAdam: Adaptive Optimization from Low-Dimensional Gradient Statistics}
\author{
  Thomas Robert$^{1*}$,
  \thanks{$^*$Work performed as an intern at ISTA. Correspondence to  \texttt{dan.alistarh@ist.ac.at}.}
  Mher Safaryan$^2$,
  Ionut-Vlad Modoranu$^2$,
  Dan Alistarh$^2$ \\
  $^1$Institut Polytechnique de Paris (IPP) \\
  $^2$Institute of Science and Technology Austria (ISTA)\\ 
}
\begin{document}

\maketitle

\begin{abstract}
We introduce LDAdam, a memory-efficient optimizer for training large models, that performs adaptive optimization steps within lower dimensional subspaces, while consistently exploring the full parameter space during training. This strategy keeps the optimizer's memory footprint to a fraction of the model size. LDAdam relies on a new projection-aware update rule for the optimizer states that allows for transitioning between subspaces, i.e., estimation of the statistics of the projected gradients. 
To mitigate the errors due to low-rank projection, LDAdam integrates a new generalized error feedback mechanism, which explicitly accounts for both gradient and optimizer state compression. 
We prove the convergence of LDAdam under standard assumptions, and show that LDAdam allows for accurate and efficient fine-tuning and pre-training of language models. Code is available at \url{https://github.com/IST-DASLab/LDAdam}.
\end{abstract}

\section{Introduction}

Scaling up deep neural networks leads to ever-improving models~\citep{kaplan2020scalinglawsneurallanguage}, but also to ever-increasing memory and compute requirements. In this work, we focus on the memory cost of \emph{optimizer states}, i.e., the additional first- and second-order gradient statistics required during the adaptive optimization process, which in the case of the baseline algorithm Adam \citep{kingma2017adammethodstochasticoptimization, loshchilov2019decoupledweightdecayregularization} reaches up to twice the size of the model. Our goal is to replicate Adam's performance and theoretical guarantees while significantly reducing its memory footprint.

We start from the observation that the optimization landscape of deep networks exhibits low intrinsic dimensionality~\citep{li2018measuringintrinsicdimensionobjective}. Low-rank learning has been successfully leveraged for model \emph{fine-tuning}, with the LoRA technique~\citep{hu2021loralowrankadaptationlarge} and its many extensions~\citep{dettmers2023qloraefficientfinetuningquantized, liu2024doraweightdecomposedlowrankadaptation, nikdan2024rosaaccurateparameterefficientfinetuning} becoming standard. However,  such adapter-based methods do not extend to end-to-end accurate training~\citep{lialin2023relorahighranktraininglowrank, zhao2024galorememoryefficientllmtraining}, nor even to fine-tuning on more demanding tasks~\citep{zhang2023adaloraadaptivebudgetallocation, nikdan2024rosaaccurateparameterefficientfinetuning, liu2024doraweightdecomposedlowrankadaptation}. Despite promising results such as GaLore~\citep{zhao2024galorememoryefficientllmtraining}, designing memory-efficient full-parameter optimizers which leverage low-dimensional gradient statistics 
with provable convergence and good practical performance remains an open challenge.

\paragraph{Contributions.}
We propose \emph{Low-Dimensional Adam (LDAdam)}, a memory-efficient adaptive optimizer which, for the first time, incorporates low-rank compression of \emph{both gradients and optimizer states}, with convergence guarantees under standard assumptions. The goal of the algorithm is to perform optimization steps within subspaces of lower dimension, while introducing mechanisms to 1) maintain relevant information from the last seen subspaces and 2) recover information lost due to low-rank compression. To reach this goal, our work is based on three main techniques. First, we formulate projection-aware update rules that allow us to perform adaptive optimization on dynamically changing coordinate systems and subspaces. Second, we leverage \emph{block power iteration}~\citep{Bentbib2015} at each step to efficiently and accurately project the gradient and optimizer states. Third, we use a generalized error feedback mechanism for both gradients and optimizer states. Moreover, we {prove the convergence of LDAdam under standard assumptions} and provide empirical evidence supporting its training efficiency.

When performing adaptive optimization \emph{within low-dimensional subspaces}, a key challenge is to adapt the optimizer states to changes in the projection map. Adapting gradient statistics to new subspaces is not obvious and comes with two challenges. First, standard adaptive optimization algorithms are anisotropic: that is, the parameter-wise learning rate tuning implies that the choice of coordinate system used to express the gradient matrix impacts the optimization process. Second, the second-order statistics are nonlinear, which complicates estimation of the gradient statistics and error compensation. We address the first challenge by carefully selecting the coordinate system used for low-rank compression, i.e. the basis given by the most significant singular vectors. The second challenge is addressed by approximating the gradient statistics in the current subspace given the raw moments estimates in the previous subspace. Thus, with LDAdam, optimizer step is performed from the projection-aware intermediate optimizer states and the projected gradient, rather than from the optimizer states at the previous iteration and the full-rank gradient~\citep{kingma2017adammethodstochasticoptimization}. In turn, this leads to stronger convergence guarantees, and better practical performance.

The choice of projection map at each step is key to low-rank compression. Performing singular value decomposition (SVD) at each step would add prohibitive overhead; instead, we use \emph{block power iteration with warm initialization}~\citep{Bentbib2015, vogels2020powersgdpracticallowrankgradient}, which can achieve a tight approximation of the  most significant singular vectors of successive gradients, at low cost. To preserve the information from the last iterations kept within the optimizer states, we show that one can perform the block power iteration on a barycenter of the newly computed gradient and an average of the previous gradients. Finally, to mitigate the errors due to low-rank projection, we introduce a new variant of the error feedback mechanism~\citep{seide20141, karimireddy2019errorfeedbackfixessignsgd, alistarh2018convergencesparsifiedgradientmethods} that supports compression of \emph{both gradient and optimizer states}. Moreover, we propose a memory efficient implementation of it by reusing the space allocated to gradient accumulation.

The resulting LDAdam method  is memory-efficient: for a weight layer of shape $n \times m$, the space cost of optimizer states is $nr+2rm$, where $r$ is the projection rank, while the cost of Adam's state is $2nm$. This matches the efficiency of the popular GaLore algorithm~\citep{zhao2024galorememoryefficientllmtraining}, the closest prior work in this area; and can bring major gains in practice, as usually $r\ll\min(m, n)$. Relative to GaLore, LDAdam has the advantages of tracking gradient evolution in the adapted subspace at each step, and correcting for the projection error. This leads to better practical performance, but also to theoretical guarantees of convergence under standard assumptions. Specifically, for smooth non-convex objectives, LDAdam can preserve the asymptotic convergence rate of AMSGrad \citep{reddi2019AMSGrad}, the provably-convergent version of Adam (see Theorem \ref{TheoremNonconvex}). Moreover, we show even faster rates for objectives that obey the Polyak-Łojasiewicz condition (see Theorem \ref{TheoremPLShort}).

We validate the practical performance of LDAdam via an efficient PyTorch implementation. We apply LDAdam for \emph{fine-tuning} RoBERTa~\citep{liu2019robertarobustlyoptimizedbert} and Llama-family~\citep{touvron2023llama2openfoundation} models on the GLUE \citep{wang-etal-2018-glue} and Grade-School Math (GSM) \citep{cobbe2021trainingverifierssolvemath} benchmarks, respectively. In addition, we provide \emph{pre-training} results for Llama-type models with 130M and 350M parameters. Across all tasks and models, LDAdam shows competitive accuracy relative to Adam, but with much smaller optimizer states relative to the baseline, and comparable runtime. Relative to GaLore, LDAdam shows consistently higher accuracy, for matching memory costs and similar runtime. Ablations show that this is a consequence of both our projection-aware intermediate updates and error correction.

In summary, we show that it is possible to achieve both theoretical guarantees and strong practical performance when performing adaptive optimization from low-dimensional gradient statistics. Our techniques may also be relevant to areas such as distributed optimization, while our implementation is relevant in reducing the practical memory overheads of training large language models (LLMs).

% \vspace{-0.5em}
\section{Related work}
% \vspace{-0.5em}

\paragraph{Memory-Efficient First-Order Methods in Optimization.}
Gradient statistics are used to determine the descent direction \citep{rumelhart1986learning} and to adapt the learning rate per parameter \citep{JMLR:v12:duchi11a, Hinton2012RMSProp}. Adam \citep{kingma2017adammethodstochasticoptimization} combines both ideas in a bias-corrected design, and AdamW \citep{loshchilov2019decoupledweightdecayregularization} adds weight decay. Although it is the de facto optimizer for deep neural networks, Adam/AdamW has been the subject of several extensions aimed at reducing its memory footprint. For example, Adafactor \citep{shazeer2018adafactoradaptivelearningrates}, CAME \citep{luo2023cameconfidenceguidedadaptivememory} and Adam-mini \citep{zhang2024adamminiusefewerlearning} rely on a factorized representation of the second-order statistic to reduce memory. Quantization has also been proposed to reduce the  footprint of optimizer states \citep{dettmers20228bitoptimizersblockwisequantization}, and fusing computation of the backward pass and the optimizer step \citep{lv2024parameterfinetuninglargelanguage,lv2024adalomolowmemoryoptimizationadaptive} has also proven effective. 

\paragraph{Error Feedback in Distributed Optimization.}
Several methods have been proposed to reduce the communication overhead in distributed optimization: quantization \citep{alistarh2017qsgdcommunicationefficientsgdgradient}, sparsification \citep{alistarh2018convergencesparsifiedgradientmethods, lin2020deepgradientcompressionreducing} and low-rank approximation \citep{vogels2020powersgdpracticallowrankgradient}. Error feedback \citep{seide20141} has been shown to be effective in improving the empirical performance of biased gradient compression methods \citep{karimireddy2019errorfeedbackfixessignsgd, vogels2020powersgdpracticallowrankgradient} methods and key to ensuring their convergence~\citep{stich2018sparsifiedsgdmemory, alistarh2018convergencesparsifiedgradientmethods}. 
MicroAdam \citep{modoranu2024microadam} only stores sparse gradients from each optimization step, and uses error feedback for correction, but reconstructs the full optimizer state at each step, instead of compressing it. 

\paragraph{Parameter-Efficient Fine-Tuning (PEFT).}
A natural approach for reducing memory overheads is to train less parameters~\citep{han2024parameterefficientfinetuninglargemodels}. This idea led to additive PEFT, where new trainable modules are added to the model architecture \citep{houlsby2019parameterefficienttransferlearningnlp, li2021prefixtuningoptimizingcontinuousprompts}; selective PEFT, where only certain parameters of the original model are retrained \citep{guo2021parameterefficienttransferlearningdiff}; and reparametrized PEFT, where new modules are trained and merged with the original parameters \citep{hu2021loralowrankadaptationlarge, nikdan2024rosaaccurateparameterefficientfinetuning, liu2024doraweightdecomposedlowrankadaptation}. 
However, methods that train significantly less parameters tend to underperform for pre-training and fine-tuning on challenging tasks.

\paragraph{Optimization via Low-Rank Gradient Projection.}
GaLore~\citep{zhao2024galorememoryefficientllmtraining}, which partly motivated our work, performs adaptive optimization in a low-dimensional subspace that is  updated periodically. The adaptive optimizer is fed with a low-rank representation of the gradients obtained via SVD, thus saving memory, and produces a low-rank representation of the descent direction, which is then up-projected to update the model parameters. The idea is similar to PEFT in that low-rank gradient projection yields selective training in the coordinate system induced by SVD. The addition of low-rank components to achieve full-rank training is also proposed in ReLoRA \citep{lialin2023relorahighranktraininglowrank} or COLA \citep{xia2024chainloraefficientfinetuning}. However, unlike PEFT methods, GaLore computes and exploits the full gradient of the original model. The final algorithm allows pre-training of 7B parameter language models on consumer GPUs. However, it can only do so with a very small batch size and without the ability to accumulate gradients over multiple mini-batches.

\paragraph{GaLore Comparison.} Relative to GaLore, we propose the following new techniques: (1) we introduce a new projection-aware update rule (see Sections~\ref{sec:update_rule} and~\ref{sec:our_update}), which avoids accumulating low-rank gradients from \emph{different} subspaces into the optimizer states, and performs frequent updates; (2) to correct for projection errors in \emph{both} gradients and optimizer states, we introduce a new generalized error feedback mechanism (see Section~\ref{subsec:gen_EF}). The convergence of GaLore is only guaranteed under a very strong ``stable-rank'' assumption, whereas we show convergence under standard conditions. Experimentally, we show that these improvements lead to consistently improved accuracy, and that they can be implemented efficiently in terms of both memory and runtime.

Several orthogonal improvements such as quantization \citep{dettmers20228bitoptimizersblockwisequantization, zhang2024qgalorequantizedgaloreint4}, per-layer weight update \citep{lv2024parameterfinetuninglargelanguage}, and weight factorization \citep{jaiswal2024galorewelorelowrankweights} have been proposed to further improve projection-based methods. Many of these improvements  are compatible with LDAdam, and we plan to investigate them in future work.

% \vspace{-0.5em}
\section{The Low-Dimensional Adam (LDAdam) Algorithm}
% \vspace{-0.5em}
\subsection{The Need for a Projection-Aware Update Rule}
\label{sec:update_rule}

\paragraph{Standard Adaptive Optimization.} We denote $f$ the loss function to minimize, $\theta_t$ the model parameters at step $t$, $G_t=\nabla_{\theta}f(\theta_t) \in \mathbb{R}^{n \times m}$ the gradient at $\theta_t$, $\grad_t$ its mini-batch stochastic counterpart, $\eta_t$ the learning rate, and $\eps$ a positive scalar used for numerical stability. The standard adaptive optimization relies on the gradient statistics estimates $\mlow_t$ and $\vlow_t$, which are obtained from the exponential moving average of the gradients at rate $\beta_1$ and the squared gradients at rate $\beta_2$, respectively. With $\hat{m}_t = \tfrac{\mlow_t}{1 - \beta_1^t}$ and $\hat{\vlow}_t = \tfrac{\vlow_t}{1 - \beta_2^t}$ their unbiased counterpart, Adam's update is: $ \theta_{t+1} = \theta_t - \eta_t \tfrac{\hat{m}_t}{\sqrt{\hat{v}_t} + \epsilon}.$

\paragraph{The Challenges Behind Low-Rank Updates.}
When adapting the learning subspaces to capture the low-rank gradient structure, one expects the optimizer states to retain information from previous iterations. However, frequent updates of the low-rank projection causes significant challenges: first, projections are lossy and gradient information is lost; second, since the projections may be different between steps, subspace adaptation may alter the representation of the low-dimensional optimizer states. To illustrate this second point, we briefly discuss the GaLore solution ~\citep{zhao2024galorememoryefficientllmtraining}. 

Let $r$ be the compression rank, and $\orthpartT_t$ and $\orthpart_t$ be the projection and back-projection matrices we use at step $t$. $\orthpartT_t \in \mathbb{R}^{r \times n} $ yields a truncation to its first $r$ rows of the gradient matrix written in an adapted coordinate system, leading to a low-dimensional representation of the gradient with shape $r \times m$. Multiplication with the matrix $\orthpart_t$ is then used to express the compressed gradient in the canonical high-dimensional coordinate system. From now on, let $\mlow_t$ and $\vlow_t$ (resp. $\hat{\mlow}_t$ and $\hat{\vlow}_t$) be the first and second moment estimates (resp. unbiased estimates) in the \emph{low-dimensional space} induced at step $t$ by $\orthpartT_t$, and let $M_t = \orthpart_t\cdot \mlow_t$ denote the back-projected low-rank counterpart of $\mlow_t$.

In this context, Equation \ref{eq:galore_wrong_update} describes the GaLore dynamics~\citep[Algorithm \ref{alg:GaLore}]{zhao2024galorememoryefficientllmtraining}: 
\begin{equation}
\squeeze
    \theta_{t+1} = \theta_t - \eta_t \bm{\textcolor{blue}{\orthpart_t}}\cdot \frac{\hat{\mlow}_t}{\sqrt{\hat{\vlow}_t} + \eps} 
    = \theta_t - \frac{\eta_t}{\sqrt{\bm{\hat{\vlow}_t}} + \eps} \frac{1-\beta_1}{1-\beta_1^t} \sum_{\tau=1}^{t} \beta_1^{t-\tau} \bm{\textcolor{blue}{\orthpart_t}} \cdot \bm{\orthpart_{\tau}^{\top}} \cdot \grad_{\tau}.
    \label{eq:galore_wrong_update}
\end{equation}
Notice that, in the above equation, the change in the low-dimensional projection basis between steps is ignored: as can be seen in the right-hand sum, gradients are projected and back-projected using \emph{different maps}, i.e. $\bm{{\orthpart_t}} \textnormal{ and } \bm{\orthpart_{\tau}^{\top}}$, respectively. Moreover, the statistics of gradients projected into different subspaces are accumulated on \emph{the same momentum buffers}. The issue we raise here (highlighted in bold in Equation \ref{eq:galore_wrong_update}) is partially mitigated in practice by heuristics: occasional updates of the projection map and exponential decay rates lead to only a fraction of the steps being effectively resulting from accumulation of compressed gradients in different subspaces. However, we will see experimentally that this leads to a drop in accuracy relative to the ``correct'' projections.

\subsection{The LDAdam Update Rule} 
\label{sec:our_update}

\paragraph{Overview.} With LDAdam, we address two key challenges in low-rank updates: we introduce a projection-aware update rule that accounts for the change in the low-dimensional representation of the optimizer states, and we provide a generalized error feedback mechanism that accounts for the loss of gradient accuracy due to low-rank compression. Compared to standard adaptive optimization step, optimizer states are replaced by low-dimensional intermediate optimizer states, denoted by $\mlowpre{t}$, $\vlowpre{t}$; and the gradient is replaced by the accumulator $\acumlow_t = \orthpartT_t \cdot \acum_t$ obtained by low-rank projection of the high-dimensional accumulator $\acum_t = \grad_t + \error_t$, where $\error_t$ is the (generalized) error feedback buffer, storing projection errors. In the following, we  build the LDAdam algorithm step-by-step. The end-to-end construction is provided in Algorithm~\ref{alg:LDAdam_both}.

Our first objective is to find a mechanism to transfer gradient information from one lower dimensional vector space to another. For this, we take inspiration from orthogonal projections, which provide the best approximation in terms of $\ell_2$ error. The projection-aware update rule for the first moment estimate (Algorithm \ref{alg:LDAdam_both} Line \ref{line:first_update}) follows from the multiplication with the matrix $\orthpartT_t \cdot \orthpart_{t-1}$, which stands for both the projection matrix and the change of basis matrix. The matrix $\orthpartT_t \cdot \orthpart_{t-1}\in \mathbb{R}^{r \times r}$ is low-dimensional and thus allows efficient transition between subspaces, as no intermediate high-dimensional vector is created. We obtain the following update:  
\begin{equation}\label{eq:mt_intermediate_update}
\squeeze
    \mlow_t
    = \beta_1 \textsc{Proj}_{\orthpart_t}(M_{t-1})  + (1-\beta_1) \acumlow_t 
    = \beta_1 \underbrace{\orthpartT_t \cdot \orthpart_{t-1} \cdot \mlow_{t-1}}_{\mlowpre{t}}  + (1-\beta_1) \acumlow_t.
\end{equation}
\textbf{Optimizer States Compression.}
Note already that projection yields compression. Yet, the discrepancy between successive subspaces leads to a loss of  information contained in the optimizer states. We  address this by enforcing slight shifts between subspaces, and by generalizing error feedback to account for the updating of optimizer states. The information lost on gradient momentum due to compression, and that needs to be reintroduced via the error feedback mechanism, is given by: 
\begin{equation}\label{eq:xi_update}
\squeeze
    \mhigh_{t-1} - \orthpart_t \orthpartT_t \cdot \mhigh_{t-1}= \orthpart_{t-1} \cdot \mlow_{t-1} - \orthpart_t\cdot \mlowpre{t}.
\end{equation}

\paragraph{Estimating the Statistics of Projected Gradients.}
Although reliable, the orthogonal projection of the optimizer state does not apply to the estimation of the second-order statistic. In particular, Adam does not rely solely on linear operations and is anisotropic (see Appendix \ref{appendix:challenges_vt}). To overcome these challenges, we interpret Adam's optimizer states as statistical estimates of the first two moments of each gradient coordinate. Indeed, Adam's optimization step $t$ holds for an approximation of $(\mathbb{E}_{t, \beta_1}[G^{e_i}]/\sqrt{\mathbb{E}_{t, \beta_2}[(G^{e_i})^2]})_{i\leq n}$, where $\mathcal{E}_t=(e_1, \ldots, e_n)$ is the basis provided by the canonical parameterization of the model,  $G^{e_i}=\langle G, e_i\rangle$ a column of the gradient matrix, and $\mathbb{E}_{t, \beta}[\cdot]$ is the exponential time-weighted expectation from time $t$ with decay-rate $\beta$.

We rely on this observation to rewrite Adam's update in any given coordinate system $\tilde{\mathcal{E}}_{t+1}$: we build $\hat{m}_{t+1/2}$ and $\hat{v}_{t+1/2}$ as statistical estimates of $(\mathbb{E}_{t, \beta_1}[G^{\tilde{e_i}}])_{i\leq n}$ and $(\mathbb{E}_{t, \beta_2}[(G^{\tilde{e_i}})^2])_{i\leq n}$. It follows from $G^{\tilde{e_i}} = \sum_{j \leq n} \langle \tilde{e_i}, e_j \rangle G^{e_j}$ that the matrix of change of basis $(\langle \tilde{e_i}, e_j \rangle)_{i,j \leq n}$ provides transition between coordinate systems. Its truncation to $i,j \leq r$ induces a rank-$r$ orthogonal projection of the gradients, and thus holds for the transition between subspaces:

\begin{equation}\label{eq:mean_statistic_change_of_basis}
\squeeze
    \mathbb{E}_{t, \beta_1}[G^{\tilde{e_i}}]
    = \sum_{j=1}^{n} \langle \tilde{e_i}, e_j \rangle \mathbb{E}_{t, \beta_1}[G^{e_j}]
    \approx \sum_{j=1}^{r} \langle \tilde{e_i}, e_j \rangle (\hat{\mlow}_t)_j
    = (\orthpartT_{t+1} \cdot \orthpart_t\cdot \hat{\mlow}_t)_i.
\end{equation}

Linearity of expectation leads to an autoregressive update rule for the first moment estimate (see Equation \ref{eq:mean_statistic_change_of_basis}). With $(\langle \tilde{e_i}, e_j \rangle)_{i,j} = (\orthpartT_{t+1} \cdot \orthpart_t)_{i,j}$ the estimation of the first-order statistic in the coordinate system $\tilde{\mathcal{E}}_{t+1}$ resolves to an orthogonal projection when considering only the first $r$ components (i.e., after truncation to $i,j \leq r$), and the above discussion on compression of optimizer states remains valid.
Following our statistical approach, we obtain the projection-aware update rule for the second moment estimates (Algorithm \ref{alg:LDAdam_both} Line \ref{line:second_update}) from the approximation below:
\begin{align} 
\squeeze
    \mathbb{E}_{t, \beta_2}[(G^{\tilde{e_i}})^2]
    &= \squeeze\sum_{j=1}^{n} \langle \tilde{e_i}, e_j \rangle ^2 \mathbb{E}_{t, \beta_2}[(G^{e_j})^2] + \sum_{k \neq l}^{n} \langle \tilde{e_i}, e_k \rangle \langle \tilde{e_i}, e_l \rangle \mathbb{E}_{t, \beta_2}[G^{\tilde{e_k}} G^{\tilde{e_l}}] \nonumber  \\
    &\approx \squeeze\sum_{j=1}^{r} \langle \tilde{e_i}, e_j \rangle ^2 (\hat{\vlow}_t)_j + \sum_{k \neq l}^{r} \langle \tilde{e_i}, e_k \rangle \langle \tilde{e_i}, e_l \rangle (\hat{\mlow}_t)_k (\hat{\mlow}_t)_l. \label{eq:variance_statistic_change_of_basis}
\end{align}
The approximation of the second-order statistics in a new coordinate system requires the estimation of the covariance between the gradient coordinates (see Equation \ref{eq:variance_statistic_change_of_basis}). Although too large to store, the covariance can be approximated by the product of first-order moment estimates, assuming independence between gradient coordinates. The latter assumption is enforced on average over the gradient matrix columns by the nature of the coordinate system inherited from SVD. In practice, we guarantee a positive estimate of the second-order statistics by clipping value to zero if necessary.

\subsection{Learning Subspace Adaptation}

\looseness=-1
LDAdam's memory savings stem from the low-dimensional structure of the moments estimate. The compression strategy, defined by a choice of truncated coordinate systems, sets the exploration of the parameter space, and its design meets two criteria: (i) the learning subspace must be adapted at each step to integrate error feedback; (ii) the compression operator must be set to approximate the current gradient and to preserve the information stored in the optimizer states.

\textbf{Projection Map Computation.}
Although singular value decomposition (SVD) provides an optimal low-rank approximation, its computational cost makes it prohibitive in our case. We follow PowerSGD~\citep{vogels2020powersgdpracticallowrankgradient} to efficiently approximate the most significant singular vectors: we perform a single block power iteration \citep{Bentbib2015} initialized with the approximation from the previous optimization step; and apply the Gram-Schmidt process to derive the projection map (Algorithm \ref{alg:LDAdam_both} Line \ref{line:power_iteration}). In addition, we propose to start the whole process by computing SVD of the first gradient.

\textbf{Learning A Smooth Subspace  Transition.}
LDAdam relies at each step on compression of the gradient and the optimizer states via a unique low-rank projection map. Instead of fitting two projection maps and interpolate between them, we propose to fit a single projection map to the interpolation of the gradient and an average of the previous gradients (Algorithm \ref{alg:LDAdam_both} Line \ref{line:interpolation}). We introduce an interpolation factor $\rho$ and perform a block power iteration to approximate SVD of the matrix: $B_t = \rho\;\orthpart_{t-1} \cdot \hat{\mlow}_{t-1} + (1-\rho) \acum_{t}$. The interpolation factor helps to balance the accuracy of the gradient compression with the preservation of the optimizer states. We suggest setting $\rho = \beta_1$, i.e. computing the optimal subspace for gradient momentum compression.

\subsection{The Generalized Error Feedback Mechanism}
\label{subsec:gen_EF}

Error Feedback was introduced for distributed optimization \citep{seide20141} to account for gradient compression errors by reintroducing them into the  next iteration gradient. We extend this mechanism to account for loss of information on both the gradient and the optimizer states. We keep the same structure of the error feedback mechanism: the optimizer is fed by a unique accumulator that is a sum of the gradient and the error buffer.

\textbf{Unbiased Error Buffer Loading.}
With gradient and optimizer states decayed at different rates within the optimizer step, we must adjust the error buffer loading strategy of the optimizer states projection error by the factor $\beta_1 / (1-\beta_1)$ to recover the first moment estimate. Furthermore, the condition under which the error buffer loading strategy is consistent with the second moment estimate is given by: $\beta_2 = (1-\beta_2) (\tfrac{\beta_1}{1-\beta_1})^2$. We suggest setting $\beta_2 = 0.99$ and $\beta_1 \approx 0.908$, as the slight decrease in the decay rate of the second moment estimate helps to adapt faster to a new learning subspace. Equation \ref{eq:error_reserve_loading} describes the loading strategy of the generalized error feedback mechanism (Algorithm \ref{alg:LDAdam_both} Line \ref{line:error_feedback}).
\begin{equation} \label{eq:error_reserve_loading}
\squeeze
    \error_{t+1} = [A_t - \orthpart_t\cdot\acumlow_t] + \frac{\beta_1(\beta_2)}{1-\beta_1(\beta_2)}  [\orthpart_{t-1} \cdot \mlow_{t-1} - \orthpart_t\cdot \mlowpre{t}]. 
\end{equation}
\paragraph{Implementing Error Feedback.}
A naive implementation of the error feedback mechanism would add memory overhead equal to the model size. Since the gradient and the error buffer are added together before the optimization step is performed, we store the error buffer in the variable used for gradient accumulation (Algorithm \ref{alg:LDAdam_both} Lines \ref{line:error_feedback} and \ref{line:accumulation})(e.g. in PyTorch, modify the \texttt{optimizer.zero\_grad} function to store the error buffer in the \texttt{p.grad} variable). This results in an error feedback mechanism that does not require any further memory for optimizer states.

\begin{algorithm}[h]
	\caption{\label{alg:LDAdam_both}LDAdam ($\diamond$ Practical View Only, $\grad_t \in \mathbb{R}^{n \times m}$ / \textcolor{blue}{$\diamond$ Analytical View Only, $\grad_t \in \mathbb{R}^d$} )}
	\begin{algorithmic}[1]
        \Statex \rm \textbf{Hyperparameters:} step size $\eta_t$; decay rates $\beta_1$, $\beta_2$
        \Statex \rm \textbf{LDAdam Hyperparameters:} projection rank $r$; interpolation factor $\rho$
        \Statex \rm \textbf{LDAdam Hyperparameters:} {\color{blue} $\diamond$ contraction factor $q_r\in[0,1)$; $\rho  = \beta_1$}
        \State \textbf{Initialization:} $\mlow_0 = 0$; $\vlow_0 = 0$; $\acum_0 = 0$, $\diamond$ $\orthpart_0 = \textsc{SVD}(\grad_0)$;  {\color{blue} $\diamond$ $\vlowhat_0 = 0; \error_1 = 0$}
		\For{$t=\{1, 2, \dots, T\}$}
        \Statex{\rm \hspace{1em} \textbf{Gradient accumulation and error buffer unloading}}
        \State \hspace{1em}  $\diamond$ $\acum_t = \acum_{t} + \grad_t$ \label{line:accumulation}
		\State \hspace{1em} {\color{blue} $\diamond$  $\acum_t = \error_{t} + \grad_t $}
		\Statex{\rm \hspace{1em} \textbf{Learning subspace adaptation}}
		\State \hspace{1em} $\bcum_t = \rho\; \orthpart_{t-1}\hat{\mlow}_{t-1} + (1-\rho)\acum_t$ \label{line:interpolation}
        \State \hspace{1em} $\diamond$ $ \orthpart_t = \textsc{Gram-Schmidt}(\bcum_t\bcum_t^{\top} \cdot \orthpart_{t-1} )$ \label{line:power_iteration}
		\State \hspace{1em} {\color{blue} $\diamond$ $\orthpart_t$ is any $d\times r$ orthogonal matrix such that $\|(I - \orthpart_t\orthpartT_t)\bcum_t\| \le q_r\|\bcum_t\|$}
		\Statex{\rm \hspace{1em} \textbf{Optimizer states projection-aware update}}
		\State \hspace{1em} $\mlowpre{t} = \orthpartT_t \orthpart_{t-1} \mlow_{t-1}$ \label{line:first_update}
		\State \hspace{1em} $\vlowpre{t} = (1-\beta_2^{t-1}) \left| (\orthpartT_t\orthpart_{t-1})^2 \cdot (\hat{\vlow}_{t-1} - (\hat{\mlow}_{t-1})^2)+ (\orthpartT_t\orthpart_{t-1} \cdot \hat{\mlow}_{t-1})^2 \right|$ \label{line:second_update}
		\Statex{\rm \hspace{1em} \textbf{Optimizer states Adam-type update}}
		\State \hspace{1em} $\acumlow_t = \orthpartT_t \acum_t$
		\State \hspace{1em} $\mlow_t = \beta_1  \mlowpre{t} + (1-\beta_1) \acumlow_t$
		\State \hspace{1em} $\vlow_t = \beta_2 \vlowpre{t} + (1-\beta_2) \acumlow_t^2$
		\State \hspace{1em} {\color{blue} $\diamond$ $\vlowhat_t = \max(\vlow_t,\|\vlowhat_{t-1}\|_{\max})$ } \label{line:AMSGrad}
		\Statex{\rm \hspace{1em} \textbf{Model update}}
		\State \hspace{1em} $\diamond$ $\theta_{t+1} = \theta_{t} -  \eta_t \orthpart_t \cdot \frac{\hat{\mlow}_t}{\sqrt{\hat{\vlow}_t} + \epsilon}$
		\State \hspace{1em} {\color{blue} $\diamond$ $\theta_{t+1} = \theta_{t} -  \eta_t \orthpart_t \cdot \frac{\mlow_t}{\sqrt{\vlowhat_t + \epsilon}}$}
		\Statex{\rm \hspace{1em} \textbf{Error buffer loading}}
		\State \hspace{1em} $\acum_{t+1} = (\acum_t - \orthpart_t \cdot \acumlow_t) + \frac{\beta_1}{1-\beta_1} (\orthpart_{t-1} \cdot \mlow_{t-1} - \orthpart_t \cdot \mlowpre{t})$ {\color{blue} $\diamond$ $= \error_{t+1}$} \label{line:error_feedback}
		\EndFor
	\end{algorithmic}
\end{algorithm}

% \vspace{-0.5em}
\section{Convergence Guarantees for LDAdam}

\paragraph{Analytical Overview.} We now present our theoretical convergence results for LDAdam. We first provide/discuss an ``analytical" perspective on our approach in Algorithm \ref{alg:LDAdam_both}. Then we introduce and discuss the analytical assumptions used in the theory, along with two theoretical convergence rates.

The outlined steps closely resemble those in the practical view of the algorithm. One notable difference is the introduction of a new AMSGrad-type normalization, $\vlowhat_t = \max(\vlow_t, ||\vlowhat_{t-1}||_{\max})$ in line \ref{line:AMSGrad}. This represents uniform version of the original AMSGrad technique \citep{reddi2019AMSGrad}, $\vlowhat_t = \max(\vlow_t, \vlowhat_{t-1})$, which enforces coordinate-wise monotonicity for the diagonal preconditioning. In our setup, as we transition between different low-dimensional spaces, the preconditioning matrix is no longer diagonal, and scaling factors may change after such a transition. To ensure that adaptive preconditioning remains monotonic, we enforce a monotonic spectrum of the preconditioning.

To avoid unnecessary complications in the notations and derivations,  in this section we assume that the stochastic gradients $\grad_t $ and other variables derived from the gradients, such as momentum, are \emph{vectors}. We remind the reader that, in practice and in our implementation, gradients are viewed/represented as 2D matrices. In our analysis, vector gradients should be treated as the columns of the gradient matrix. 
Nevertheless, all steps in Algorithm \ref{alg:LDAdam_both} are still applicable when dealing with matrix gradients. Specifically, given a matrix gradient $\grad_t$, we obtain a matrix $\bcum_t$, for which we assume that, regardless of the low-rank compression method used (e.g., SVD or power iterations), the contractive inequality $\|(I - \orthpartT_t\orthpart_t)\bcum_t\|_{\rm F} \le q_r\|\bcum_t\|_{\rm F}$ holds with respect to the Frobenious norm. We also incorporate the debiasing factor $\sqrt{1 - \beta_2^t} / (1 - \beta_1^t)$ in $\eta_t$.

\paragraph{Convergence Guarantees for General Smooth Non-convex Functions.}
We first outline the analytical assumptions under which we establish LDAdam's convergence guarantees.

\begin{assumption}[Lower bound  and smoothness]\label{ass:smooth}
    The loss function $f\colon\R^d\to\R$ is $L$-smooth and lower bounded by some $f^* \in \mathbb{R}$:
    $$\|\nabla f(\theta) - \nabla f (\theta')\| \leq L \|\theta-\theta'\|, \quad \text{for any } \theta,\theta'\in\R^d.$$ 
\end{assumption}

\begin{assumption}[Unbiased and bounded stochastic gradient]\label{ass:boundgrad}
    For all iterates $t \ge 1$, the stochastic gradient $\grad_t$ at $\theta_t$ is unbiased and uniformly bounded by some constant $G\ge 0$:
    $$\E[\grad_{t}] = \nabla f(\theta_t), \quad \|g_{t}\| \leq G.$$
\end{assumption}

\begin{assumption}[Bounded variance]\label{ass:var}
    For all iterates $t\ge1$, the variance of the stochastic gradient $\grad_t$ at $\theta_t$ is uniformly bounded by some constant $\sigma^2\ge0$:
    $$\E[\|g_{t} - \nabla f(\theta_t)\|^2] \le \sigma^2.$$
\end{assumption}

All three assumptions, including the bounded gradient condition, are standard and commonly used in adaptive optimization literature \citep{reddi2019AMSGrad,chen2019AdamType,defossez2022a, modoranu2024microadam}.
Attempting to relax the bounded gradient condition, some works \citep{Shi2020Rmsprop,Zhang2022Adam,Wang2024Adam} resorted to using a strong growth condition (which is not necessarily a relaxation over bounded gradient condition) or bounded stochastic noise conditions \citep{Li2023Adam,Hong2024Adam}. \cite{Taniguchi2024ADOPT} achieved an optimal rate by modifying Adam and only slightly relaxing the bounded gradient assumption by requiring the expected gradient norm to be bounded.
The bounded gradient condition in Assumption \ref{ass:boundgrad} directly implies a bounded variance condition with a constant $\sigma^2 = 2G^2$. Therefore, for the purpose of asymptotic analysis, one can omit Assumption \ref{ass:var}. However, we will show that in the non-convex convergence rate, the $\sigma^2$-term decays at a rate of $1/\sqrt{T}$, while all other $G$-terms decay at a faster rate of $1/T$. Theorem \ref{TheoremNonconvex} states our general non-convex convergence result.

\begin{theorem}[\bf Non-convex convergence rate]\label{TheoremNonconvex}
	Let Assumptions \ref{ass:smooth}, \ref{ass:boundgrad} and \ref{ass:var} hold. Then, choosing step-size $\eta = \min(\frac{\epsilon}{4LC_0\sqrt{1+C_2}}, \frac{1}{\sqrt{T}})$, LDAdam (Algorithm \ref{alg:LDAdam_both}) satisfies
	\begin{equation*}
    \squeeze
		\frac{1}{T}\sum_{t=1}^T \E[\|\nabla f(\theta_t)\|^2]
		\leq \frac{2C_0}{\sqrt{T}}\left( f(\theta_1)-f^* + \frac{L \sigma^2}{\epsilon}\right)
		  + \mathcal{O}\left(\frac{G^3}{T}\right)
	\end{equation*}
    with constants $C_0\eqdef \sqrt{\frac{1+\beta_2}{1-\beta_2}\frac{(1 - \beta_1(1-q_r))^2}{(1-\beta_1)^2(1-q_r)^2}G^2 + \epsilon}$ and $C_2 = \frac{\beta_1 + (1-\beta_1)q_r^2}{(1-\beta_1)^2(1-q_r)^2}$.
\end{theorem}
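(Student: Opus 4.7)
My plan is to adapt the AMSGrad-style non-convex convergence template of \citet{reddi2019AMSGrad,defossez2022a,modoranu2024microadam} to the projected, error-fed-back dynamics of LDAdam. The starting point is the $L$-smoothness descent lemma (Assumption~\ref{ass:smooth}) applied to the update $\theta_{t+1} = \theta_t - \eta_t \orthpart_t \mlow_t / \sqrt{\vlowhat_t + \epsilon}$, yielding a per-step inequality with a linear term $-\eta_t\langle \nabla f(\theta_t), \orthpart_t \mlow_t / \sqrt{\vlowhat_t + \epsilon}\rangle$ and a quadratic $\mathcal{O}(\eta_t^2 L \|\orthpart_t \mlow_t / \sqrt{\vlowhat_t + \epsilon}\|^2)$ term. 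Summing over $t=1,\dots,T$ telescopes $f(\theta_1) - f^*$ and reduces the theorem to (i) lower-bounding the summed expected linear term by a positive multiple of $\sum_t \E[\|\nabla f(\theta_t)\|^2]$, and (ii) uniformly upper-bounding the quadratic term.

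\textbf{Uniform a priori bounds.} The next step is to control $\acum_t$, $\mhigh_t \eqdef \orthpart_t \mlow_t$, and $\sqrt{\vlowhat_t + \epsilon}$ uniformly in $t$. Writing $P_t \eqdef \orthpart_t\orthpartT_t$, the contractivity $\|(I - P_t)\bcum_t\| \le q_r\|\bcum_t\|$ together with the error-feedback identity
\begin{equation*}
\acum_{t+1} = \grad_{t+1} + (I - P_t)\acum_t + \tfrac{\beta_1}{1-\beta_1}(I - P_t)\mhigh_{t-1}
\end{equation*}
yield, by a geometric-series argument and $\|\grad_t\|\le G$ (Assumption~\ref{ass:boundgrad}), the uniform bound $\|\acum_t\|\le (1-\beta_1(1-q_r))G/[(1-\beta_1)(1-q_r)]$ and an analogous bound for $\|\mhigh_t\|$. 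Substituting into the second-moment recursion and invoking the AMSGrad-type monotonicity enforced on line~\ref{line:AMSGrad} yields the uniform upper bound $\sqrt{\vlowhat_t + \epsilon}\le C_0$. The squared-norm version of the same argument, needed for the quadratic term of the descent lemma, is precisely where the constant $C_2$ appears.

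\textbf{Virtual-gradient tracking.} The technical core is to show that $\orthpart_t \mlow_t$ tracks, in expectation, a standard Adam momentum of the true stochastic gradients, despite the time-varying subspaces. Recursively unrolling $\mlow_t = \beta_1 \orthpartT_t \orthpart_{t-1}\mlow_{t-1} + (1-\beta_1)\orthpartT_t \acum_t$, substituting $\acum_t = \grad_t + \error_t$, and applying the error-feedback identity for $\error_t$ reveals that the ``out-of-subspace contamination'' $(I - P_\tau)(\cdot)$ either telescopes or decays geometrically at rate $q_r$ at every step. Combined with $\E[\grad_t] = \nabla f(\theta_t)$, the bounded-variance Assumption~\ref{ass:var} (which contributes an $\mathcal{O}(L\sigma^2/\epsilon)$ term per step via Young's inequality), and the AMSGrad monotonicity of $\vlowhat_t$, this produces a summed lower bound of the form $\sum_t \eta_t \E[\langle \nabla f(\theta_t), \orthpart_t \mlow_t/\sqrt{\vlowhat_t+\epsilon}\rangle] \ge \tfrac{1}{2C_0}\sum_t \eta_t \E[\|\nabla f(\theta_t)\|^2] - \mathcal{O}(L\sigma^2 T / \epsilon) - \mathcal{O}(G^3)$.

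\textbf{Assembly and step-size tuning.} Plugging the tracking lower bound and the uniform quadratic bound into the summed descent inequality and dividing by $\eta T$ gives, up to numerical constants,
\begin{equation*}
\frac{1}{T}\sum_{t=1}^T \E[\|\nabla f(\theta_t)\|^2] \;\le\; \frac{2C_0 (f(\theta_1) - f^*)}{\eta T} + \frac{C_0 L \eta \sigma^2}{\epsilon} + \mathcal{O}(G^3/T).
\end{equation*}
The choice $\eta = \min(\epsilon/(4LC_0\sqrt{1+C_2}), 1/\sqrt{T})$ balances the $1/(\eta T)$ and $\eta$ terms and recovers the advertised rate. The hardest step will be the virtual-gradient tracking: the combination of time-varying projectors $\orthpart_t$, the non-linear second-moment update $\vlow_t$, and error feedback breaks the classical identity $\mlow_t = (1-\beta_1)\sum_\tau \beta_1^{t-\tau}\grad_\tau$ used in the standard Adam analysis; rescuing it requires carefully peeling off the $(I - P_\tau)$ residuals using the contractivity $q_r$, which is precisely what produces the factor $(1-\beta_1(1-q_r))/[(1-\beta_1)(1-q_r)]$ appearing inside $C_0$.
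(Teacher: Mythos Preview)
Your overall architecture (descent lemma, uniform moment bounds, step-size balancing) is sound, and your a-priori bounds paragraph essentially recovers the paper's Lemmas~\ref{lem:bound-perturb} and~\ref{lem:bound-v}. But the ``virtual-gradient tracking'' step, which you correctly flag as the crux, does not go through as written, because it omits the device that actually makes the residuals telescope. The paper does \emph{not} apply smoothness to $\theta_t$; it defines virtual iterates $x_{t+1} = \theta_{t+1} - \eta\Gamma_t\big((1-\beta_1)\error_{t+1} + \tfrac{\beta_1}{1-\beta_1}\bcum_t\big)$ and verifies the exact recursion $x_{t+1} = x_t - \eta\Gamma_t g_t + \eta\Delta\Gamma_t\big(\tfrac{\beta_1}{1-\beta_1}\bcum_{t-1} + (1-\beta_1)\error_t\big)$. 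This is the standard error-feedback trick: it converts the compressed, momentum-laden update into a preconditioned SGD step on the \emph{raw} stochastic gradient $g_t$, pushing all projection residuals into the bounded offset $x_t-\theta_t$. Your plan to unroll $\mlow_t$ and hope that the $(I-P_\tau)$ terms ``either telescope or decay geometrically'' is exactly what this construction achieves; without it, the residuals are paired with a \emph{different} $\nabla f(\theta_t)$ and a \emph{different} preconditioner at each $t$, and there is no telescoping inside $\sum_t\langle\nabla f(\theta_t),\Gamma_t M_t\rangle$.

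Second, you never isolate the time-variation of the preconditioner. Since $\vlowhat_t$ depends on $g_t$, one cannot simply take $\E[\langle\nabla f(\theta_t),\Gamma_t g_t\rangle]=\langle\nabla f(\theta_t),\Gamma_t\nabla f(\theta_t)\rangle$. The paper substitutes $\Gamma_t=\Gamma_{t-1}-\Delta\Gamma_t$, uses that $\Gamma_{t-1}$ is measurable at time $t$, and controls all $\Delta\Gamma_t$ corrections via a separate lemma proving $\sum_t\|\Delta\Gamma_t\|\le 2/\sqrt{\epsilon}$ and $\sum_t\|\Delta\Gamma_t\|^2\le 2/\epsilon$; this is where the $\mathcal{O}(G^3/T)$ term actually comes from, and it relies nontrivially on the uniform AMSGrad normalization $\vlowhat_t=\max(\vlow_t,\|\vlowhat_{t-1}\|_{\max})$ because the preconditioner is non-diagonal and lives in changing coordinate frames. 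Related to this, the operator $\orthpart_t\diag^{-1/2}(\vlowhat_t+\epsilon)\orthpartT_t$ you work with is only positive \emph{semi}-definite, so the inequality $\langle\nabla f,\Gamma\nabla f\rangle\ge C_0^{-1}\|\nabla f\|^2$ your tracking step needs is false as stated; the paper repairs this by extending to a full-rank $\Gamma_t$ that agrees with the low-rank one on $\range(\orthpart_t)$ before applying smoothness.
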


{\bf Discussion.} Compared to the standard stochastic gradient descent \citep{Ghadimi2016AGD}, the leading term $1 / \sqrt{T}$ of the obtained rate recovers the optimal non-convex convergence speed. Additionally, the asymptotic rate $\mathcal{O}(\frac{1+\sigma^2}{\sqrt{T}} + \frac{1}{T})$ aligns with the rate of uncompressed AMSGrad in the stochastic non-convex setup \citep{zhou2024on}. Therefore, the introduced low-rank compression framework along with the error feedback mechanism does not hurt the asymptotic convergence with respect to $T$.
The slowdown caused by compression in the leading term is only through the constant $C_0 = \mathcal{O}(\frac{1}{1-q_r})$, where the factor $1-q_r \in (0,1]$ measures the aggressiveness of the compression. Note that our theory applies to any low-rank compression with $r \geq 1$. More compression corresponds to a smaller rank $r$ for the projections and a smaller factor $1 - q_r$. For instance, when using low-rank compression via SVD decomposition, one can prove $1-q_r = \Theta(\frac{r}{d})$ for any rank $r \geq 1$. In this case, the leading term of the rate becomes $\mathcal{O}(\frac{d}{r}\frac{1}{\sqrt{T}})$, indicating that the convergence slows down in the worst case by a factor of $d/r$ due to compression. The proof can be found in Appendix~\ref{apx:non-cvx-thoery}.

\paragraph{Convergence Rate for Non-Convex Functions under the PL Condition.}
Next, we show faster convergence for LDAdam when the loss function satisfies the PL condition.

\begin{assumption}[\textrm{PL}-condition]\label{ass:PL}
For some constant $\mu>0$ the loss function $f$ satisfies the Polyak-Łojasiewicz (PL) inequality: 
$$\|\nabla f(\theta)\|^2\geq 2\mu (f(\theta)-f^*), \quad \text{for any } \theta\in\R^d.$$ 
\end{assumption}

\begin{theorem}{\bf(PL convergence rate)}\label{TheoremPLShort}
Let Assumptions \ref{ass:smooth}, \ref{ass:boundgrad}, \ref{ass:var} and \ref{ass:PL} hold. Then, choosing step-size $\eta = \min(\eta_0, \frac{2C_0\log T}{\mu T})$, LDAdam (Algorithm \ref{alg:LDAdam_both}) satisfies
\begin{equation*}
\squeeze
\E[f(\theta_{T+1})] - f^*
\le \frac{\log T}{T} \left(\frac{2L C_0^2 \sigma^2}{\mu^2\epsilon}
+ \frac{6C_0(1+C_1)G^2}{\mu\sqrt{\epsilon}} \right)
+ \widetilde{\mathcal{O}}\left(\frac{G^4}{T^2} \right)
\end{equation*}
with constants $C_1 \eqdef \frac{\beta_1 + (1-\beta_1)q_r}{(1-\beta_1)(1-q_r)}$ and $\eta_0 \eqdef \min(\frac{\epsilon}{16LC_0}, \frac{C_0(1-\beta_1)(1-q_r)}{2\mu}, \frac{\epsilon^{3/4}}{6L\sqrt{C_0C_2}})$.
\end{theorem}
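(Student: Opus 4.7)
My plan is to derive the same kind of one-step descent recursion that must underlie Theorem~\ref{TheoremNonconvex}, and then exploit the PL inequality to turn it into a linearly contracting recursion on the optimality gap $\Delta_t \eqdef \E[f(\theta_t)] - f^*$. The first step is to combine $L$-smoothness (Assumption~\ref{ass:smooth}) with the update rule $\theta_{t+1} = \theta_t - \eta\, \orthpart_t\, \mlow_t/\sqrt{\vlowhat_t + \epsilon}$ to obtain
\begin{equation*}
f(\theta_{t+1}) \leq f(\theta_t) - \eta \Big\langle \nabla f(\theta_t),\, \orthpart_t \tfrac{\mlow_t}{\sqrt{\vlowhat_t + \epsilon}} \Big\rangle + \tfrac{L\eta^2}{2} \Big\| \tfrac{\mlow_t}{\sqrt{\vlowhat_t + \epsilon}} \Big\|^2 .
\end{equation*}
The inner product is biased from three sources: (i) momentum, (ii) the low-rank projection with contraction $\|(I-\orthpart_t\orthpartT_t)\bcum_t\| \le q_r\|\bcum_t\|$, and (iii) the adaptive preconditioner. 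Following the pattern of error-feedback analyses, I would introduce a virtual iterate $\tilde\theta_t \eqdef \theta_t - \eta \cdot (\text{scaled error buffer } \error_t)$ so that $\E[\tilde\theta_{t+1}-\tilde\theta_t \mid \mathcal{F}_t]$ aligns with a true (preconditioned) gradient step. The AMSGrad-type enforcement $\vlowhat_t \succeq \|\vlowhat_{t-1}\|_{\max} I$ (Line~\ref{line:AMSGrad}) is exactly what makes the preconditioner monotone across the transition between subspaces, so the usual AMSGrad comparison works despite the change of basis.

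Next I would bound the compression contribution. Using the contractive property of $\orthpart_t$ on $\bcum_t$ together with the error-buffer recursion (Equation~\ref{eq:error_reserve_loading}), one establishes a geometric recursion of the form $\E\|\error_{t+1}\|^2 \le (1-c(1-q_r)^2)\E\|\error_t\|^2 + C\, \E\|\grad_t\|^2$, which under Assumption~\ref{ass:boundgrad} yields a uniform bound $\E\|\error_t\|^2 = \mathcal{O}(C_1^2 G^2)$. Similar uniform bounds on $\|\mlow_t\|$ and $\|\vlowhat_t\|$ give the constants $C_0, C_1, C_2$. Assembling these with Assumption~\ref{ass:var} to handle the noise in $g_t$, the targeted per-step recursion is
\begin{equation*}
\E[\Delta_{t+1}] \;\le\; \E[\Delta_t] \;-\; \tfrac{\eta}{2C_0}\E\|\nabla f(\theta_t)\|^2 \;+\; \tfrac{L C_0 \sigma^2}{\epsilon}\eta^2 \;+\; \tfrac{(1+C_1)G^2}{\sqrt{\epsilon}}\eta^2 \;+\; \widetilde{\mathcal{O}}(G^4 \eta^3),
\end{equation*}
valid provided $\eta \le \eta_0$; each of the three bounds in $\eta_0$ enforces a different inequality (descent sign, geometric contraction of $\error_t$, and a preconditioning ratio involving $\sqrt{C_0 C_2}$).

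With the recursion in hand, the PL inequality (Assumption~\ref{ass:PL}) replaces $\|\nabla f(\theta_t)\|^2$ by $2\mu\Delta_t$, yielding $\E[\Delta_{t+1}] \le (1-\mu\eta/C_0)\E[\Delta_t] + A\eta^2 + B\eta^3$ with $A = \tfrac{L C_0\sigma^2}{\epsilon} + \tfrac{(1+C_1)G^2}{\sqrt{\epsilon}}$ and $B = \widetilde{\mathcal{O}}(G^4)$. Unrolling over $t = 1, \dots, T$ and plugging in $\eta = \min(\eta_0, \tfrac{2C_0\log T}{\mu T})$ gives $(1-\mu\eta/C_0)^T \le \exp(-\mu\eta T/C_0) \le T^{-2}$, so the $\Delta_1$ term is $\widetilde{\mathcal{O}}(T^{-2})$. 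Summing the geometric series contributes $A\eta/(\mu/C_0) = \tfrac{2 C_0 A \log T}{\mu^2 T}$, which—after substituting $A$—produces exactly the two leading terms $\tfrac{2LC_0^2\sigma^2 \log T}{\mu^2 \epsilon T}$ and $\tfrac{2C_0(1+C_1)G^2\log T}{\mu\sqrt{\epsilon}\, T}$ (the stated coefficient $6$ subsumes a constant slack from the Lyapunov bookkeeping), while the $B\eta^3$ tail collapses into the $\widetilde{\mathcal{O}}(G^4/T^2)$ residual.

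\textbf{Main obstacle.} I expect the bookkeeping for the error-feedback mechanism to be the main hurdle: the generalized buffer in Equation~\ref{eq:error_reserve_loading} couples first- and second-moment projection residuals through \emph{different} decay rates, and the carefully tuned condition $\beta_2 = (1-\beta_2)\beta_1^2/(1-\beta_1)^2$ is what makes a single contraction argument work for both. Choosing a Lyapunov function of the form $V_t = \Delta_t + \lambda_1 \E\|\error_t\|^2 + \lambda_2 \E\|\mlow_{t-1}\|^2$ with the right $\lambda_i$ so that its drift simultaneously inherits the $1-\mu\eta/C_0$ contraction and absorbs the compression error terms (with the precise constants $C_0, C_1, C_2$ rather than loose $q_r, \beta_1$ dependencies) is where I would expect most of the work to sit, and it is exactly the step that must be shared with the proof of Theorem~\ref{TheoremNonconvex}.
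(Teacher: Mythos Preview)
Your high-level plan (virtual iterates to unbias the error feedback, descent lemma, then PL to obtain a contracting recursion and unroll) matches the paper's architecture. However, the per-step recursion you write down is \emph{not} achievable, and the discrepancy is exactly where the paper's argument does real work.

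The offending term is your $\tfrac{(1+C_1)G^2}{\sqrt{\epsilon}}\eta^2$. In the actual descent inequality this contribution arises from inner products of the form $\eta\langle \nabla f(\theta_t),\,(\Gamma_{t-1}-\Gamma_t)g_t\rangle$ and its analogue for the momentum/error perturbation, which are of order $\eta\,(1+C_1)G^2\,\|\Delta\Gamma_t\|$ with $\Delta\Gamma_t\eqdef\Gamma_{t-1}-\Gamma_t$. There is no per-step bound $\|\Delta\Gamma_t\|=\mathcal{O}(\eta)$; the preconditioner change is driven by gradients, not by the step size. If you instead bound $\|\Delta\Gamma_t\|\le\|\Gamma_{t-1}\|\le 1/\sqrt{\epsilon}$ and then sum against the geometric weights $(1-\eta\mu/C_0)^{T-t}$, you get a residual $\eta\cdot\tfrac{(1+C_1)G^2}{\sqrt{\epsilon}}\cdot\tfrac{C_0}{\eta\mu}=\tfrac{C_0(1+C_1)G^2}{\mu\sqrt{\epsilon}}$, a \emph{constant} independent of $T$, and the theorem fails. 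The paper avoids this by a telescoping lemma: the uniform AMSGrad normalization $\vlowhat_t=\max(\vlow_t,\|\vlowhat_{t-1}\|_{\max})$ forces $\Gamma_t\preceq\Gamma_{t-1}$, so $\sum_{t=1}^T\|\Delta\Gamma_t\|\le 2/\sqrt{\epsilon}$ and $\sum_{t=1}^T\|\Delta\Gamma_t\|^2\le 2/\epsilon$. After unrolling, one simply bounds $(1-\eta\mu/C_0)^{T-t}\le 1$ for these terms and uses the telescoping sum directly, yielding the correct $\eta\cdot\tfrac{2(1+C_1)G^2}{\sqrt{\epsilon}}$ scale (hence $C_0/\mu$, not $C_0^2/\mu^2$, in the final bound). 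Your sketch mentions the monotonicity of the preconditioner but never uses it; without this telescoping device the recursion you target cannot be established.

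Your ``main obstacle'' is also misdirected. The condition $\beta_2=(1-\beta_2)\beta_1^2/(1-\beta_1)^2$ is a practical heuristic for the second-moment error loading and plays \emph{no role} in the analytical algorithm or its proof; the analysis treats $\beta_1,\beta_2$ as free parameters. Correspondingly, the paper does not build a coupled Lyapunov $V_t=\Delta_t+\lambda_1\|\error_t\|^2+\lambda_2\|\mlow_{t-1}\|^2$. Instead it works on the virtual iterates $x_t=\theta_t-\eta\Gamma_{t-1}\bigl((1-\beta_1)\error_t+\tfrac{\beta_1}{1-\beta_1}\bcum_{t-1}\bigr)$, for which $x_{t+1}-x_t$ isolates a clean $-\eta\Gamma_t g_t$ plus a $\Delta\Gamma_t$-perturbation, and then handles the error/momentum residual through a separate geometrically-weighted bound $\sum_t\gamma^{T-t}\|\tfrac{\beta_1}{1-\beta_1}\bcum_{t-1}+(1-\beta_1)\error_t\|^2\le 2C_2(\tfrac{\sigma^2}{1-\gamma}+\sum_t\gamma^{T-t}\|\nabla f(\theta_t)\|^2)$, valid precisely when $1-\gamma\le\tfrac12(1-\beta_1)(1-q_r)$. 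That last inequality is the reason the second component of $\eta_0$ is $\tfrac{C_0(1-\beta_1)(1-q_r)}{2\mu}$. The proof closes by converting $f(x_{T+1})$ back to $f(\theta_{T+1})$ via smoothness, which contributes the remaining constant that turns $2$ into $6$ in the $G^2$ coefficient.
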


{\bf Discussion.} Notably, the leading term of the convergence rate under the PL condition improves to $\mathcal{O}(\frac{\log T}{T})$, compared to $\mathcal{O}(\frac{1}{\sqrt{T}})$ in the general non-convex setting. Faster rates of $\widetilde{\mathcal{O}}(\frac{1}{T})$ for adaptive optimization algorithms under the PL condition have been established when $\beta_2 \to 1$ \citep{He2023Adam} or through the use of the AMSGrad trick \citep{modoranu2024microadam}. In LDAdam, we apply a uniform version of AMSGrad normalization with arbitrary $\beta_2<1$ and derive the rate for compression with any rank $r \geq 1$. Hence, similar to the general non-convex case, we show the best-known convergence rate in the leading term, up to a logarithmic factor. While the last term has higher-order constant dependencies, these are negligible due to the $T^2$ damping effect. Thus, the theory suggests that the algorithm’s convergence rate remains comparable to that of the uncompressed version, with only a constant factor affected by the compression rank.
To quantify the slowdown caused by compression, we again consider low-rank compression via SVD decomposition, as discussed earlier. As noted before, $C_0 = \mathcal{O}(\frac{1}{1-q_r}) = \mathcal{O}(\frac{d}{r})$, and similarly, from the expression of $C_1$, we have $C_1 = \mathcal{O}(\frac{1}{1-q_r}) = \mathcal{O}(\frac{d}{r})$. Therefore, the leading term of the rate becomes $\mathcal{O}((\frac{d}{r})^2 \frac{\log T}{T})$, with a compression slowdown factor of $(\frac{d}{r})^2$. The detailed proof is deferred to Appendix \ref{apx:PL-theory}.

% ======================================== %
% \vspace{-0.25em}
\section{Training Memory Footprint and Runtime}
% \vspace{-0.25em}

LDAdam's memory savings over Adam \citep{kingma2017adammethodstochasticoptimization} (see Table \ref{table:memory} for comparison) follow from the low dimensionality of the optimizer states: for gradients of shape $n \times m$ with $n \leq m$, the gradient statistics in a learning subspace of rank $r$ are stored as tensors of shape $r \times m$, and the additional projection matrix is of shape $n \times r$. As a result, the memory footprint of LDAdam's optimizer states is only a fraction of the size of the model. For an $n \times n$ layer, the compression ratio of LDAdam relative to Adam is of $\frac{3}{2} \frac{r}{n}$. Table \ref{table:peak_memory} and Figure \ref{fig:rank} reports peak memory for fine-tuning and pre-training tasks.

With the error buffer stored in the gradient variable, the error feedback mechanism is memory neutral. However, this comes at the cost of not supporting gradient clipping and per-layer weight updates \citep{lv2024parameterfinetuninglargelanguage, zhao2024galorememoryefficientllmtraining}. The latter allows memory savings proportional to the model size, but at the expense of gradient accumulation.
To be precise, the standard GaLore algorithm does not support gradient accumulation (see Appendix \ref{explanation_grad_acc} for a detailed explanation), while its newer ``retaining grad''  version \citep{zhao2024galorememoryefficientllmtraining} does, but at the expense of a \emph{memory overhead equal to the size of the model}.
Although applicable to most optimizers, per-layer weight updates is not widely adopted, due to the large batch size used for large language model training  \citep{touvron2023llama2openfoundation} and the memory-intensive nature of activations in attention layers.

\begin{table}[h]
\small
    \setlength{\abovecaptionskip}{0pt} % Space abovecaption
    \caption{Optimizer comparison: parameter count during training for a weight layer of shape $n \times m$ with $n \leq m$ (i.e., left projection), training capabilities, and estimates of optimizer states memory footprint in half precision (models architecture is described in Table \ref{table:architecture}).}
    \label{table:memory}
    \begin{center}
    \begin{tabular}{lccccc}
    \toprule
    & Adam & LDAdam & GaLore (retaining grad) & GaLore \\
    \midrule
    \textbf{Token count} \\
    Weights & $n m$ & $n m$ & $n m$ & $n m$ \\
    Gradients & $n m$ & $n m$ & $n m$ &    \\
    Optimizer States & $ 2 n m$ & $n r + 2 r m$ & $n r + 2 r m$ & $n r + 2 r m$ \\
    \midrule
    Gradient Clipping & \cmark & \xmark & \cmark & \xmark  \\
    Gradient Accumulation & \cmark & \cmark & \cmark & \xmark \\
    \midrule
    \textbf{Memory estimates} \\
    RoBERTa-base (\texttt{r=8})& 0.46 GB & 0.15 GB & 0.15 GB & 0.15 GB\\
    Llama 350M (\texttt{r=256}) & 1.37 GB & 0.95 GB & 0.95 GB & 0.95 GB \\
    Llama-2 7B (\texttt{r=32}) & 25.1 GB & 1.22 GB & 1.22 GB & 1.22 GB \\
    Llama-2 7B (\texttt{r=512}) & 25.1 GB & 4.87 GB & 4.87 GB & 4.87 GB \\
    \bottomrule
    \end{tabular}
    \end{center}
\end{table}

Compared to GaLore, LDAdam adds optimizer states projection-aware update and the generalized error feedback mechanism. Yet, the practical runtime differences are not substantial: on Llama-350M pre-training, LDAdam has a 10\% longer runtime compared to GaLore, and is 15\% slower than Adam.
Table \ref{table:runtime} and Figure \ref{fig:rank} report runtimes, and Table \ref{table:complexity} reports theoretical complexity in terms of standard matrix operations.

\section{Experiments}

    We empirically evaluate LDAdam for fine-tuning and pre-training large language models against baseline Adam \citep{kingma2017adammethodstochasticoptimization} and the memory-efficient GaLore \citep{zhao2024galorememoryefficientllmtraining} (Algorithm \ref{alg:GaLore}). Similar to most parameter-efficient fine-tuning (PEFT) methods and to GaLore, we apply low-rank compression only to the two-dimensional matrices of the self-attention layers, e.g. wide embedding and output layers are trained using standard Adam optimizer. For a fair comparison, in all experiments we do not apply additional layer-wise  learning rate scaling, and apply hyperparameters suggested in the papers introducing the algorithm. To ensure reproducibility, the full list of hyperparameters we used is provided in the Appendix \ref{appendix:hyperparameters}.

\paragraph{Fine-tuning on GLUE.} We evaluate LDAdam for fine-tuning RoBERTa-base model \citep{liu2019robertarobustlyoptimizedbert} on the General Language Understanding Evaluation (GLUE) benchmark \citep{wang-etal-2018-glue}. RoBERTa-base is a 125M parameters large language model, built upon the encoder only attention-based BERT architecture \citep{devlin2019bertpretrainingdeepbidirectional}, and pre-trained for natural language understanding (NLU). GLUE benchmark includes a wide variety of NLU tasks with training samples of various size. We mimic memory-constrained setups by training for only 3 epochs with a batch size of 16, and used rank $\texttt{r=8}$ compression for LDAdam and GaLore. For a fair comparison, we tune the learning rate over the set $\{$1e-5, 2e-5, $\ldots$, 5e-5$\}$. Table \ref{table:glue} reports the best average over 3 seeds (Table \ref{table:glue_std} reports standard deviation), and Figure \ref{fig:glue_norm} shows that the error buffer norm is of the same order of magnitude as the gradient norm, and that their ratio is stable throughout training.

\begin{table}[h]
    \small \setlength{\abovecaptionskip}{0pt} % Space abovecaption
    \caption{Results of fine-tuning RoBERTa-base model on the GLUE benchmark.}
    \label{table:glue}
    \begin{center}
    \begin{tabular}{lcccccccccc}
    \toprule
    & \textbf{MRPC} & \textbf{STS-B} & \textbf{CoLA} & \textbf{SST-2} & \textbf{QNLI} & \textbf{QQP} & \textbf{MNLI} & \textbf{Avg} \\
    \small Training Samples & \small 3.7k & \small 7k & \small 8.5k & \small 67k & \small 105k & \small 364k & \small 393k \\
    \midrule
    \normalsize % Revert to normal size
    Adam & 88.97 & 90.08 & 58.57 & 94.34 & 92.81 & 91.38 & 87.85 & 86.28 \\
    \midrule
    LDAdam & 88.40 & 90.11 & 59.91 & 95.00 & 92.87 & 91.28 & 87.81 & 86.48 \\
    LDAdam no-EF & 88.00 & 89.88 & 56.86 & 95.00 & 92.32 & 89.75 & 86.99 & 85.54 \\
    GaLore & 86.19 & 88.97 & 55.12 & 94.15 & 92.01 & 89.86 & 86.80 & 84.73 \\
    \bottomrule
    \end{tabular}
    \end{center}
\end{table}

\paragraph{Fine-tuning on GSM8K.} We evaluate LDAdam for fine-tuning Llama-2 7B model \citep{touvron2023llama2openfoundation} on the GSM8K mathematical reasoning dataset \citep{cobbe2021trainingverifierssolvemath} containing grade school math word problems. We run experiments for ranks $\texttt{r=32}$ and $\texttt{r=512}$, on 3 epochs, with a batch size of 32, and tune the learning rate over $\{$5e-5, 6e-5, $\ldots$, 9e-5, 1e-4, 2e-4, \ldots, 5e-4$\}$. Table \ref{table:GSM8K} reports the best average accuracy. We also provide peak memory usage, but also the fixed memory cost of storing model, gradient and activations.

\begin{table}[!h]
    \small \setlength{\abovecaptionskip}{0pt} % Space abovecaption
    \caption{Results of fine-tuning Llama-2 7B on the GSM8K dataset.}
    \label{table:GSM8K}
    \begin{center}
    \begin{tabular}{lccccccc}
    \toprule
    & Adam & \multicolumn{2}{c}{LDAdam} & \multicolumn{2}{c}{LDAdam no-EF} & \multicolumn{2}{c}{GaLore} \\
    \cmidrule(lr){3-4} \cmidrule(lr){5-6} \cmidrule(lr){7-8} 
    & & \texttt{r=32} & \texttt{r=512} & \texttt{r=32} & \texttt{r=512} & \texttt{r=32} & \texttt{r=512} \\
    \midrule
    \textbf{Accuracy} & 34.72 & 32.53 & 35.86 & 30.70 & 35.78 & 27.07 & 35.18 \\
    \midrule
    \textbf{Peak Memory} & 55.34 GB & 32.08 GB & 35.58 GB & 32.08 GB & 35.58 GB & 32.52 GB & 35.58 GB \\
    \bottomrule
    \end{tabular}
    \end{center}
\end{table}

Across all fine-tuning experiments, LDAdam  outperforms GaLore, and is either on par or better than Adam. When ablating the LDAdam techniques, we observe that LDAdam without error feedback also outperforms GaLore, supporting LDAdam's continuous subspace adaptation strategy over GaLore's sequential update approach. The fact that we can match Adam once error feedback is enabled suggests that this technique is key.

\paragraph{Pre-training on C4.} We evaluate LDAdam for pre-training Llama models \citep{touvron2023llama2openfoundation} on the C4 dataset \citep{raffel2023exploringlimitstransferlearning}. Due to limited compute resources, we pre-train smaller 130M, 350M and 1.3B parameter variants \citep{lialin2023relorahighranktraininglowrank, zhang2019rootmeansquarelayer, shazeer2020gluvariantsimprovetransformer} of the Llama-2 models. C4 is a clean version of Common Crawl's web crawl corpus. We adjust the number of training steps according to the Chinchilla scaling law \citep{hoffmann2022trainingcomputeoptimallargelanguage}, i.e. 20 training tokens per model parameter; and use batch size 512 and sequence length 256, accounting to 131 072 tokens per step, thus using the same settings as GaLore. For a fair comparison, we tune the learning rate over the set $\{$5e-4, 1e-3, 5e-3$\}$, and report the best results for ranks $\texttt{r=8}$ and $\texttt{r=256}$ in Table \ref{table:c4}. Figure \ref{fig:C4_dynamics} depicts the pre-training dynamics.

\begin{table}[h] \setlength{\abovecaptionskip}{0pt} % Space abovecaption
    \caption{Results (validation perplexity $\downarrow$) of pre-training Llama models on the C4 dataset.}
    \label{table:c4}
    \small
    \begin{center}
    \begin{tabular}{lccccccc}
    \toprule
    & Adam & \multicolumn{2}{c}{LDAdam} & \multicolumn{2}{c}{LDAdam no-EF} & \multicolumn{2}{c}{GaLore} \\
    \cmidrule(lr){3-4} \cmidrule(lr){5-6} \cmidrule(lr){7-8} 
    & & \texttt{r=8} & \texttt{r=256} & \texttt{r=8} & \texttt{r=256} & \texttt{r=8} & \texttt{r=256} \\
    \midrule
    \textbf{Llama 130M} & 24.64 & 23.82 & 22.65 & 45.17 & 25.32 & 67.78 & 26.04 \\
    \midrule
    \textbf{Llama 350M} & 18.08 & 18.37 & 17.30 & - & 19.85 & - & 20.03 \\
    \bottomrule
    \end{tabular}
    \end{center}
\end{table}

Furthermore, we scale the experiments up to the Llama 1.3B parameter model, using the same hyperparameters but fixing the learning rate at $5e-4$ due to limited computational resources. Adam achieves a validation perplexity of 14.86, while LDAdam with rank $\texttt{r=16}$ achieves 14.09.

\begin{figure}[H]
    \small \setlength{\abovecaptionskip}{0pt} % Space abovecaption
    \centering
    \caption{\label{fig:C4_dynamics}  Pre-training dynamics for Llama 350M (left) and Llama 1.3B (right) on the C4 dataset.}
    \begin{tabular}{cc}
        \includegraphics[width=0.45\textwidth]{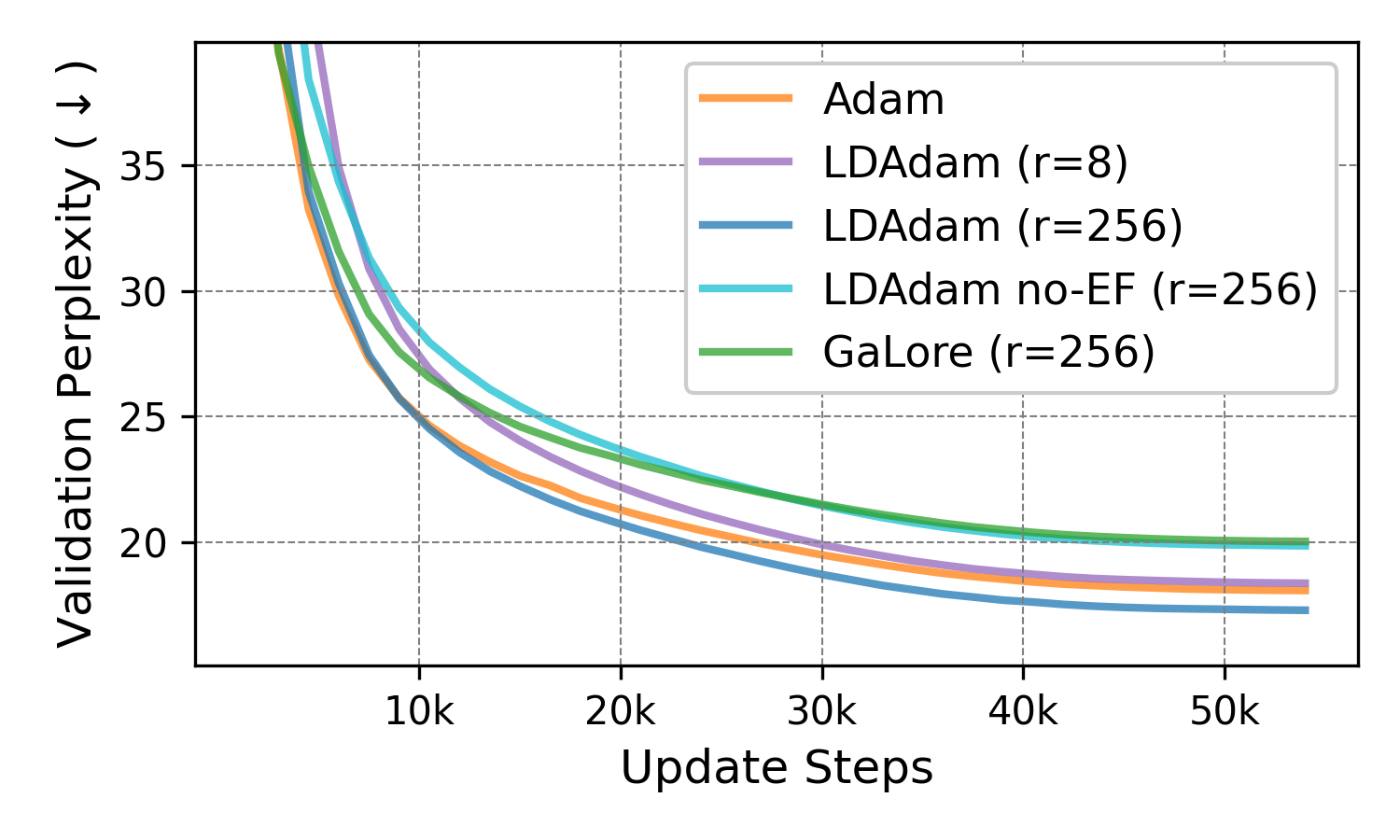} &
        \includegraphics[width=0.45\textwidth]{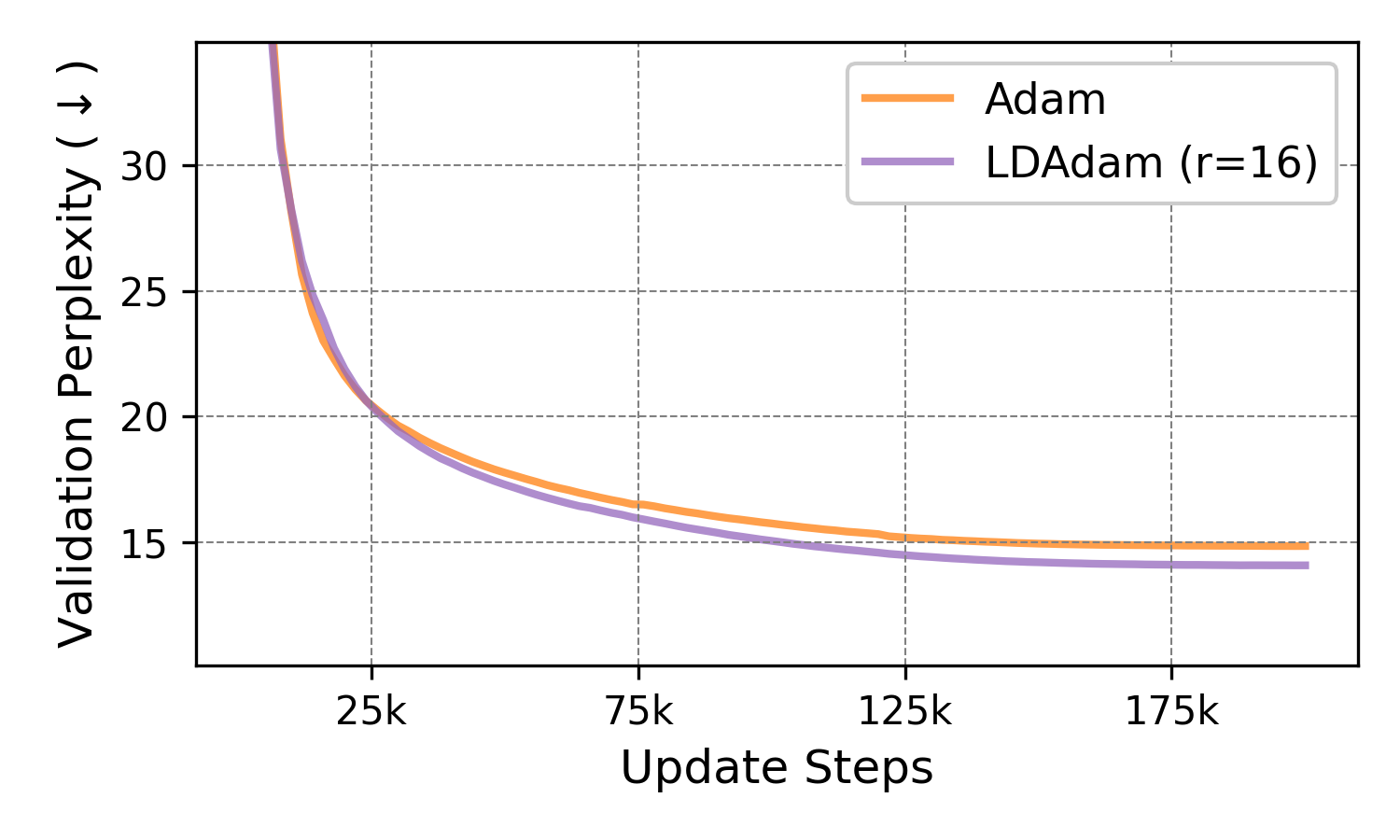}
    \end{tabular}
    % \vspace{-2em}
\end{figure}

Pre-training experiments also show that LDAdam has comparable performance to Adam, both in terms of convergence and validation accuracy, and confirm the improvements over GaLore. Further, for very high compression rates (i.e., \texttt{r=8}), LDAdam still converges, but its no-EF variant does not, nor does GaLore. This highlights the need for an error correction mechanism for pre-training tasks, which are known to require models updates of high rank. For  lower compression (i.e. \texttt{r=256}), LDAdam outperforms the no-EF version and GaLore, which is correlated to their consistent projection errors. For more details on the impact of the rank on training, see Figure \ref{fig:impact_rank}.
LDAdam even appears to slightly outperform Adam. We attribute this small improvement to compression, as sparse updates might enhance implicit regularization \citep{zhang2017understandingdeeplearningrequires, neyshabur2017implicitregularizationdeeplearning}.

% \vspace{-0.5em}
\section{Discussion}
% \vspace{-0.5em}

We proposed a new low-rank learning method with the memory-efficient LDAdam optimizer, {analyzed its convergence under standard assumptions}, and provided empirical evidence of its ability to match the performance of Adam at a fraction of its memory cost.
LDAdam relies on Adam to estimate gradient statistics, but its development required the design of a specific intermediate optimizer state update rule. Therefore, efficient implementations of Adam are not directly applicable to LDAdam, and extending our work to other optimizers requires further work. 
On the experimental side, a natural next step would be to execute large-scale billion-parameter \emph{pre-training} experiments. 

\section{Ethics Statement}
This paper presents work that aims to advance the field of machine learning. We believe that memory-efficient optimization is a step toward democratizing large-scale model training, and thus provides opportunities to foster both the development of new applications and the research in the field. There are many societal concerns about the rapidly growing use of artificial intelligence, we feel that none of them is specifically related to our work.

\section*{Acknowledgements}

The authors thank Théo Gaboriaud for his valuable feedback. Mher Safaryan has received funding from the European Union's Horizon 2020 research and innovation program under the Marie Sklodowska-Curie grant agreement No 101034413.

\bibliography{main.bib}

\begin{thebibliography}{60}
\providecommand{\natexlab}[1]{#1}
\providecommand{\url}[1]{\texttt{#1}}
\expandafter\ifx\csname urlstyle\endcsname\relax
  \providecommand{\doi}[1]{doi: #1}\else
  \providecommand{\doi}{doi: \begingroup \urlstyle{rm}\Url}\fi

\bibitem[Alistarh et~al.(2017)Alistarh, Grubic, Li, Tomioka, and
  Vojnovic]{alistarh2017qsgdcommunicationefficientsgdgradient}
Dan Alistarh, Demjan Grubic, Jerry Li, Ryota Tomioka, and Milan Vojnovic.
\newblock {QSGD: Communication-Efficient SGD via Gradient Quantization and
  Encoding}, 2017.
\newblock URL \url{https://arxiv.org/abs/1610.02132}.

\bibitem[Alistarh et~al.(2018)Alistarh, Hoefler, Johansson, Khirirat,
  Konstantinov, and Renggli]{alistarh2018convergencesparsifiedgradientmethods}
Dan Alistarh, Torsten Hoefler, Mikael Johansson, Sarit Khirirat, Nikola
  Konstantinov, and Cédric Renggli.
\newblock The convergence of sparsified gradient methods, 2018.
\newblock URL \url{https://arxiv.org/abs/1809.10505}.

\bibitem[Bentbib \& Kanber(2015)Bentbib and Kanber]{Bentbib2015}
Abdeslem Bentbib and A.~Kanber.
\newblock {Block Power Method for SVD Decomposition}.
\newblock \emph{Analele Stiintifice ale Universitatii Ovidius Constanta, Seria
  Matematica}, 23:\penalty0 45--58, 06 2015.
\newblock \doi{10.1515/auom-2015-0024}.

\bibitem[Chen et~al.(2019)Chen, Liu, Sun, and Hong]{chen2019AdamType}
Xiangyi Chen, Sijia Liu, Ruoyu Sun, and Mingyi Hong.
\newblock {On the Convergence of A Class of Adam-Type Algorithms for Non-Convex
  Optimization}.
\newblock \emph{International Conference on Learning Representations}, 2019.
\newblock URL \url{https://arxiv.org/abs/1808.02941}.

\bibitem[Cobbe et~al.(2021)Cobbe, Kosaraju, Bavarian, Chen, Jun, Kaiser,
  Plappert, Tworek, Hilton, Nakano, Hesse, and
  Schulman]{cobbe2021trainingverifierssolvemath}
Karl Cobbe, Vineet Kosaraju, Mohammad Bavarian, Mark Chen, Heewoo Jun, Lukasz
  Kaiser, Matthias Plappert, Jerry Tworek, Jacob Hilton, Reiichiro Nakano,
  Christopher Hesse, and John Schulman.
\newblock Training verifiers to solve math word problems, 2021.
\newblock URL \url{https://arxiv.org/abs/2110.14168}.

\bibitem[D{\'e}fossez et~al.(2022)D{\'e}fossez, Bottou, Bach, and
  Usunier]{defossez2022a}
Alexandre D{\'e}fossez, Leon Bottou, Francis Bach, and Nicolas Usunier.
\newblock A simple convergence proof of adam and adagrad.
\newblock \emph{Transactions on Machine Learning Research}, 2022.
\newblock ISSN 2835-8856.
\newblock URL \url{https://openreview.net/forum?id=ZPQhzTSWA7}.

\bibitem[Dettmers et~al.(2022)Dettmers, Lewis, Shleifer, and
  Zettlemoyer]{dettmers20228bitoptimizersblockwisequantization}
Tim Dettmers, Mike Lewis, Sam Shleifer, and Luke Zettlemoyer.
\newblock 8-bit optimizers via block-wise quantization, 2022.
\newblock URL \url{https://arxiv.org/abs/2110.02861}.

\bibitem[Dettmers et~al.(2023)Dettmers, Pagnoni, Holtzman, and
  Zettlemoyer]{dettmers2023qloraefficientfinetuningquantized}
Tim Dettmers, Artidoro Pagnoni, Ari Holtzman, and Luke Zettlemoyer.
\newblock Qlora: Efficient finetuning of quantized llms, 2023.
\newblock URL \url{https://arxiv.org/abs/2305.14314}.

\bibitem[Devlin et~al.(2019)Devlin, Chang, Lee, and
  Toutanova]{devlin2019bertpretrainingdeepbidirectional}
Jacob Devlin, Ming-Wei Chang, Kenton Lee, and Kristina Toutanova.
\newblock Bert: Pre-training of deep bidirectional transformers for language
  understanding, 2019.
\newblock URL \url{https://arxiv.org/abs/1810.04805}.

\bibitem[Duchi et~al.(2011)Duchi, Hazan, and Singer]{JMLR:v12:duchi11a}
John Duchi, Elad Hazan, and Yoram Singer.
\newblock Adaptive subgradient methods for online learning and stochastic
  optimization.
\newblock \emph{Journal of Machine Learning Research}, 12\penalty0
  (61):\penalty0 2121--2159, 2011.
\newblock URL \url{http://jmlr.org/papers/v12/duchi11a.html}.

\bibitem[Ghadimi \& Lan(2016)Ghadimi and Lan]{Ghadimi2016AGD}
Saeed Ghadimi and Guanghui Lan.
\newblock Accelerated gradient methods for nonconvex nonlinear and stochastic
  programming.
\newblock \emph{Mathematical Programming}, 2016.
\newblock ISSN 156(1-2):59–99.
\newblock URL \url{https://arxiv.org/abs/1310.3787}.

\bibitem[Guo et~al.(2021)Guo, Rush, and
  Kim]{guo2021parameterefficienttransferlearningdiff}
Demi Guo, Alexander~M. Rush, and Yoon Kim.
\newblock Parameter-efficient transfer learning with diff pruning, 2021.
\newblock URL \url{https://arxiv.org/abs/2012.07463}.

\bibitem[Han et~al.(2024)Han, Gao, Liu, Zhang, and
  Zhang]{han2024parameterefficientfinetuninglargemodels}
Zeyu Han, Chao Gao, Jinyang Liu, Jeff Zhang, and Sai~Qian Zhang.
\newblock Parameter-efficient fine-tuning for large models: A comprehensive
  survey, 2024.
\newblock URL \url{https://arxiv.org/abs/2403.14608}.

\bibitem[He et~al.(2023)He, Liang, Liu, and Xu]{He2023Adam}
Meixuan He, Yuqing Liang, Jinlan Liu, and Dongpo Xu.
\newblock Convergence of adam for non-convex objectives: Relaxed
  hyperparameters and non-ergodic case.
\newblock \emph{Journal of Machine Learning Research}, 2023.
\newblock URL \url{https://arxiv.org/abs/2307.11782}.

\bibitem[Hinton(2012)]{Hinton2012RMSProp}
Geoffrey Hinton.
\newblock Lecture 6e rmsprop: Divide the gradient by a running average of its
  recent magnitude.
\newblock COURSERA: Neural Networks for Machine Learning, 2012.
\newblock
  \url{https://www.cs.toronto.edu/~tijmen/csc321/slides/lecture_slides_lec6.pdf}.

\bibitem[Hoffmann et~al.(2022)Hoffmann, Borgeaud, Mensch, Buchatskaya, Cai,
  Rutherford, de~Las~Casas, Hendricks, Welbl, Clark, Hennigan, Noland,
  Millican, van~den Driessche, Damoc, Guy, Osindero, Simonyan, Elsen, Rae,
  Vinyals, and Sifre]{hoffmann2022trainingcomputeoptimallargelanguage}
Jordan Hoffmann, Sebastian Borgeaud, Arthur Mensch, Elena Buchatskaya, Trevor
  Cai, Eliza Rutherford, Diego de~Las~Casas, Lisa~Anne Hendricks, Johannes
  Welbl, Aidan Clark, Tom Hennigan, Eric Noland, Katie Millican, George van~den
  Driessche, Bogdan Damoc, Aurelia Guy, Simon Osindero, Karen Simonyan, Erich
  Elsen, Jack~W. Rae, Oriol Vinyals, and Laurent Sifre.
\newblock Training compute-optimal large language models, 2022.
\newblock URL \url{https://arxiv.org/abs/2203.15556}.

\bibitem[Hong \& Lin(2024)Hong and Lin]{Hong2024Adam}
Yusu Hong and Junhong Lin.
\newblock {On Convergence of Adam for Stochastic Optimization under Relaxed
  Assumptions}.
\newblock \emph{Advances in Neural Information Processing Systems}, 2024.
\newblock URL \url{https://arxiv.org/abs/2402.03982}.

\bibitem[Houlsby et~al.(2019)Houlsby, Giurgiu, Jastrzebski, Morrone,
  de~Laroussilhe, Gesmundo, Attariyan, and
  Gelly]{houlsby2019parameterefficienttransferlearningnlp}
Neil Houlsby, Andrei Giurgiu, Stanislaw Jastrzebski, Bruna Morrone, Quentin
  de~Laroussilhe, Andrea Gesmundo, Mona Attariyan, and Sylvain Gelly.
\newblock Parameter-efficient transfer learning for nlp, 2019.
\newblock URL \url{https://arxiv.org/abs/1902.00751}.

\bibitem[Hu et~al.(2021)Hu, Shen, Wallis, Allen-Zhu, Li, Wang, Wang, and
  Chen]{hu2021loralowrankadaptationlarge}
Edward~J. Hu, Yelong Shen, Phillip Wallis, Zeyuan Allen-Zhu, Yuanzhi Li, Shean
  Wang, Lu~Wang, and Weizhu Chen.
\newblock Lora: Low-rank adaptation of large language models, 2021.
\newblock URL \url{https://arxiv.org/abs/2106.09685}.

\bibitem[Jaiswal et~al.(2024)Jaiswal, Yin, Zhang, Liu, Zhao, Tian, and
  Wang]{jaiswal2024galorewelorelowrankweights}
Ajay Jaiswal, Lu~Yin, Zhenyu Zhang, Shiwei Liu, Jiawei Zhao, Yuandong Tian, and
  Zhangyang Wang.
\newblock From galore to welore: How low-rank weights non-uniformly emerge from
  low-rank gradients, 2024.
\newblock URL \url{https://arxiv.org/abs/2407.11239}.

\bibitem[Kaplan et~al.(2020)Kaplan, McCandlish, Henighan, Brown, Chess, Child,
  Gray, Radford, Wu, and Amodei]{kaplan2020scalinglawsneurallanguage}
Jared Kaplan, Sam McCandlish, Tom Henighan, Tom~B. Brown, Benjamin Chess, Rewon
  Child, Scott Gray, Alec Radford, Jeffrey Wu, and Dario Amodei.
\newblock Scaling laws for neural language models, 2020.
\newblock URL \url{https://arxiv.org/abs/2001.08361}.

\bibitem[Karimireddy et~al.(2019)Karimireddy, Rebjock, Stich, and
  Jaggi]{karimireddy2019errorfeedbackfixessignsgd}
Sai~Praneeth Karimireddy, Quentin Rebjock, Sebastian~U. Stich, and Martin
  Jaggi.
\newblock Error feedback fixes signsgd and other gradient compression schemes,
  2019.
\newblock URL \url{https://arxiv.org/abs/1901.09847}.

\bibitem[Kingma \& Ba(2017)Kingma and
  Ba]{kingma2017adammethodstochasticoptimization}
Diederik~P. Kingma and Jimmy Ba.
\newblock Adam: A method for stochastic optimization, 2017.
\newblock URL \url{https://arxiv.org/abs/1412.6980}.

\bibitem[Li et~al.(2018)Li, Farkhoor, Liu, and
  Yosinski]{li2018measuringintrinsicdimensionobjective}
Chunyuan Li, Heerad Farkhoor, Rosanne Liu, and Jason Yosinski.
\newblock Measuring the intrinsic dimension of objective landscapes, 2018.
\newblock URL \url{https://arxiv.org/abs/1804.08838}.

\bibitem[Li et~al.(2023)Li, Jadbabaie, and Rakhlin]{Li2023Adam}
Haochuan Li, Ali Jadbabaie, and Alexander Rakhlin.
\newblock {Convergence of Adam under relaxed assumptions}.
\newblock \emph{Advances in Neural Information Processing Systems}, 2023.
\newblock URL \url{https://arxiv.org/abs/2304.13972}.

\bibitem[Li \& Liang(2021)Li and
  Liang]{li2021prefixtuningoptimizingcontinuousprompts}
Xiang~Lisa Li and Percy Liang.
\newblock Prefix-tuning: Optimizing continuous prompts for generation, 2021.
\newblock URL \url{https://arxiv.org/abs/2101.00190}.

\bibitem[Lialin et~al.(2023)Lialin, Shivagunde, Muckatira, and
  Rumshisky]{lialin2023relorahighranktraininglowrank}
Vladislav Lialin, Namrata Shivagunde, Sherin Muckatira, and Anna Rumshisky.
\newblock Relora: High-rank training through low-rank updates, 2023.
\newblock URL \url{https://arxiv.org/abs/2307.05695}.

\bibitem[Lin et~al.(2020)Lin, Han, Mao, Wang, and
  Dally]{lin2020deepgradientcompressionreducing}
Yujun Lin, Song Han, Huizi Mao, Yu~Wang, and William~J. Dally.
\newblock Deep gradient compression: Reducing the communication bandwidth for
  distributed training, 2020.
\newblock URL \url{https://arxiv.org/abs/1712.01887}.

\bibitem[Liu et~al.(2024)Liu, Wang, Yin, Molchanov, Wang, Cheng, and
  Chen]{liu2024doraweightdecomposedlowrankadaptation}
Shih-Yang Liu, Chien-Yi Wang, Hongxu Yin, Pavlo Molchanov, Yu-Chiang~Frank
  Wang, Kwang-Ting Cheng, and Min-Hung Chen.
\newblock Dora: Weight-decomposed low-rank adaptation, 2024.
\newblock URL \url{https://arxiv.org/abs/2402.09353}.

\bibitem[Liu et~al.(2019)Liu, Ott, Goyal, Du, Joshi, Chen, Levy, Lewis,
  Zettlemoyer, and Stoyanov]{liu2019robertarobustlyoptimizedbert}
Yinhan Liu, Myle Ott, Naman Goyal, Jingfei Du, Mandar Joshi, Danqi Chen, Omer
  Levy, Mike Lewis, Luke Zettlemoyer, and Veselin Stoyanov.
\newblock Roberta: A robustly optimized bert pretraining approach, 2019.
\newblock URL \url{https://arxiv.org/abs/1907.11692}.

\bibitem[Loshchilov \& Hutter(2019)Loshchilov and
  Hutter]{loshchilov2019decoupledweightdecayregularization}
Ilya Loshchilov and Frank Hutter.
\newblock Decoupled weight decay regularization, 2019.
\newblock URL \url{https://arxiv.org/abs/1711.05101}.

\bibitem[Luo et~al.(2023)Luo, Ren, Zheng, Jiang, Jiang, and
  You]{luo2023cameconfidenceguidedadaptivememory}
Yang Luo, Xiaozhe Ren, Zangwei Zheng, Zhuo Jiang, Xin Jiang, and Yang You.
\newblock Came: Confidence-guided adaptive memory efficient optimization, 2023.
\newblock URL \url{https://arxiv.org/abs/2307.02047}.

\bibitem[Lv et~al.(2024{\natexlab{a}})Lv, Yan, Guo, Lv, and
  Qiu]{lv2024adalomolowmemoryoptimizationadaptive}
Kai Lv, Hang Yan, Qipeng Guo, Haijun Lv, and Xipeng Qiu.
\newblock Adalomo: Low-memory optimization with adaptive learning rate,
  2024{\natexlab{a}}.
\newblock URL \url{https://arxiv.org/abs/2310.10195}.

\bibitem[Lv et~al.(2024{\natexlab{b}})Lv, Yang, Liu, Gao, Guo, and
  Qiu]{lv2024parameterfinetuninglargelanguage}
Kai Lv, Yuqing Yang, Tengxiao Liu, Qinghui Gao, Qipeng Guo, and Xipeng Qiu.
\newblock Full parameter fine-tuning for large language models with limited
  resources, 2024{\natexlab{b}}.
\newblock URL \url{https://arxiv.org/abs/2306.09782}.

\bibitem[Modoranu et~al.(2024)Modoranu, Safaryan, Malinovsky, Kurtic, Robert,
  Richtarik, and Alistarh]{modoranu2024microadam}
Ionut-Vlad Modoranu, Mher Safaryan, Grigory Malinovsky, Eldar Kurtic, Thomas
  Robert, Peter Richtarik, and Dan Alistarh.
\newblock Microadam: Accurate adaptive optimization with low space overhead and
  provable convergence, 2024.
\newblock URL \url{https://arxiv.org/abs/2405.15593}.

\bibitem[Neyshabur(2017)]{neyshabur2017implicitregularizationdeeplearning}
Behnam Neyshabur.
\newblock Implicit regularization in deep learning, 2017.
\newblock URL \url{https://arxiv.org/abs/1709.01953}.

\bibitem[Nikdan et~al.(2024)Nikdan, Tabesh, Crnčević, and
  Alistarh]{nikdan2024rosaaccurateparameterefficientfinetuning}
Mahdi Nikdan, Soroush Tabesh, Elvir Crnčević, and Dan Alistarh.
\newblock Rosa: Accurate parameter-efficient fine-tuning via robust adaptation,
  2024.
\newblock URL \url{https://arxiv.org/abs/2401.04679}.

\bibitem[Pytorch\_Tutorials(n.d.)]{pytorch_optimizer_step}
Pytorch\_Tutorials.
\newblock How to save memory by fusing the optimizer step into the backward
  pass, n.d.
\newblock URL
  \url{https://pytorch.org/tutorials/intermediate/optimizer_step_in_backward_tutorial.html}.
\newblock Accessed: 2024-11-16.

\bibitem[Raffel et~al.(2023)Raffel, Shazeer, Roberts, Lee, Narang, Matena,
  Zhou, Li, and Liu]{raffel2023exploringlimitstransferlearning}
Colin Raffel, Noam Shazeer, Adam Roberts, Katherine Lee, Sharan Narang, Michael
  Matena, Yanqi Zhou, Wei Li, and Peter~J. Liu.
\newblock Exploring the limits of transfer learning with a unified text-to-text
  transformer, 2023.
\newblock URL \url{https://arxiv.org/abs/1910.10683}.

\bibitem[Reddi et~al.(2019)Reddi, Kale, and Kumar]{reddi2019AMSGrad}
Sashank~J Reddi, Satyen Kale, and Sanjiv Kumar.
\newblock {On the convergence of Adam and beyond}.
\newblock \emph{arXiv preprint arXiv:1904.09237}, 2019.
\newblock URL \url{https://arxiv.org/abs/1904.09237}.

\bibitem[Rumelhart et~al.(1986)Rumelhart, Hinton, and
  Williams]{rumelhart1986learning}
David~E Rumelhart, Geoffrey~E Hinton, and Ronald~J Williams.
\newblock Learning representations by back-propagating errors.
\newblock \emph{Nature}, 323\penalty0 (6088):\penalty0 533--536, 1986.

\bibitem[Seide et~al.(2014)Seide, Fu, Droppo, Li, and Yu]{seide20141}
Frank Seide, Hao Fu, Jasha Droppo, Gang Li, and Dong Yu.
\newblock 1-bit stochastic gradient descent and its application to
  data-parallel distributed training of speech {DNNs}.
\newblock In \emph{Fifteenth Annual Conference of the International Speech
  Communication Association}, 2014.

\bibitem[Shazeer(2020)]{shazeer2020gluvariantsimprovetransformer}
Noam Shazeer.
\newblock Glu variants improve transformer, 2020.
\newblock URL \url{https://arxiv.org/abs/2002.05202}.

\bibitem[Shazeer \& Stern(2018)Shazeer and
  Stern]{shazeer2018adafactoradaptivelearningrates}
Noam Shazeer and Mitchell Stern.
\newblock Adafactor: Adaptive learning rates with sublinear memory cost, 2018.
\newblock URL \url{https://arxiv.org/abs/1804.04235}.

\bibitem[Shi et~al.(2020)Shi, Li, Hong, and Sun]{Shi2020Rmsprop}
Naichen Shi, Dawei Li, Mingyi Hong, and Ruoyu Sun.
\newblock Rmsprop converges with proper hyperparameter.
\newblock \emph{International Conference on Learning Representations}, 2020.
\newblock URL \url{https://openreview.net/pdf?id=3UDSdyIcBDA}.

\bibitem[Stich et~al.(2018)Stich, Cordonnier, and
  Jaggi]{stich2018sparsifiedsgdmemory}
Sebastian~U. Stich, Jean-Baptiste Cordonnier, and Martin Jaggi.
\newblock Sparsified sgd with memory, 2018.
\newblock URL \url{https://arxiv.org/abs/1809.07599}.

\bibitem[Taniguchi et~al.(2024)Taniguchi, Harada, Minegishi, Oshima, Jeong,
  Nagahara, Iiyama, Suzuki, Iwasawa, and Matsuo]{Taniguchi2024ADOPT}
Shohei Taniguchi, Keno Harada, Gouki Minegishi, Yuta Oshima, Seong~Cheol Jeong,
  Go~Nagahara, Tomoshi Iiyama, Masahiro Suzuki, Yusuke Iwasawa, and Yutaka
  Matsuo.
\newblock {ADOPT: Modified Adam Can Converge with Any $\beta_2$ with the
  Optimal Rate}.
\newblock \emph{Advances in Neural Information Processing Systems}, 2024.
\newblock URL \url{https://arxiv.org/abs/2411.02853}.

\bibitem[Touvron et~al.(2023)Touvron, Martin, Stone, Albert, Almahairi, Babaei,
  Bashlykov, Batra, Bhargava, Bhosale, Bikel, Blecher, Ferrer, Chen, Cucurull,
  Esiobu, Fernandes, Fu, Fu, Fuller, Gao, Goswami, Goyal, Hartshorn, Hosseini,
  Hou, Inan, Kardas, Kerkez, Khabsa, Kloumann, Korenev, Koura, Lachaux, Lavril,
  Lee, Liskovich, Lu, Mao, Martinet, Mihaylov, Mishra, Molybog, Nie, Poulton,
  Reizenstein, Rungta, Saladi, Schelten, Silva, Smith, Subramanian, Tan, Tang,
  Taylor, Williams, Kuan, Xu, Yan, Zarov, Zhang, Fan, Kambadur, Narang,
  Rodriguez, Stojnic, Edunov, and Scialom]{touvron2023llama2openfoundation}
Hugo Touvron, Louis Martin, Kevin Stone, Peter Albert, Amjad Almahairi, Yasmine
  Babaei, Nikolay Bashlykov, Soumya Batra, Prajjwal Bhargava, Shruti Bhosale,
  Dan Bikel, Lukas Blecher, Cristian~Canton Ferrer, Moya Chen, Guillem
  Cucurull, David Esiobu, Jude Fernandes, Jeremy Fu, Wenyin Fu, Brian Fuller,
  Cynthia Gao, Vedanuj Goswami, Naman Goyal, Anthony Hartshorn, Saghar
  Hosseini, Rui Hou, Hakan Inan, Marcin Kardas, Viktor Kerkez, Madian Khabsa,
  Isabel Kloumann, Artem Korenev, Punit~Singh Koura, Marie-Anne Lachaux,
  Thibaut Lavril, Jenya Lee, Diana Liskovich, Yinghai Lu, Yuning Mao, Xavier
  Martinet, Todor Mihaylov, Pushkar Mishra, Igor Molybog, Yixin Nie, Andrew
  Poulton, Jeremy Reizenstein, Rashi Rungta, Kalyan Saladi, Alan Schelten, Ruan
  Silva, Eric~Michael Smith, Ranjan Subramanian, Xiaoqing~Ellen Tan, Binh Tang,
  Ross Taylor, Adina Williams, Jian~Xiang Kuan, Puxin Xu, Zheng Yan, Iliyan
  Zarov, Yuchen Zhang, Angela Fan, Melanie Kambadur, Sharan Narang, Aurelien
  Rodriguez, Robert Stojnic, Sergey Edunov, and Thomas Scialom.
\newblock Llama 2: Open foundation and fine-tuned chat models, 2023.
\newblock URL \url{https://arxiv.org/abs/2307.09288}.

\bibitem[Vogels et~al.(2020)Vogels, Karimireddy, and
  Jaggi]{vogels2020powersgdpracticallowrankgradient}
Thijs Vogels, Sai~Praneeth Karimireddy, and Martin Jaggi.
\newblock Powersgd: Practical low-rank gradient compression for distributed
  optimization, 2020.
\newblock URL \url{https://arxiv.org/abs/1905.13727}.

\bibitem[Wang et~al.(2018)Wang, Singh, Michael, Hill, Levy, and
  Bowman]{wang-etal-2018-glue}
Alex Wang, Amanpreet Singh, Julian Michael, Felix Hill, Omer Levy, and Samuel
  Bowman.
\newblock {GLUE}: A multi-task benchmark and analysis platform for natural
  language understanding.
\newblock In Tal Linzen, Grzegorz Chrupa{\l}a, and Afra Alishahi (eds.),
  \emph{Proceedings of the 2018 {EMNLP} Workshop {B}lackbox{NLP}: Analyzing and
  Interpreting Neural Networks for {NLP}}, pp.\  353--355, Brussels, Belgium,
  November 2018. Association for Computational Linguistics.
\newblock \doi{10.18653/v1/W18-5446}.
\newblock URL \url{https://aclanthology.org/W18-5446}.

\bibitem[Wang et~al.(2024)Wang, Zhang, Zhang, Meng, Ma, Liu, and
  Chen]{Wang2024Adam}
Bohan Wang, Yushun Zhang, Huishuai Zhang, Qi~Meng, Zhi-Ming Ma, Tie-Yan Liu,
  and Wei Chen.
\newblock Provable adaptivity in adam.
\newblock \emph{International Conference on Knowledge Discovery and Data
  Mining}, 2024.
\newblock URL \url{https://arxiv.org/abs/2208.09900}.

\bibitem[Xia et~al.(2024)Xia, Qin, and
  Hazan]{xia2024chainloraefficientfinetuning}
Wenhan Xia, Chengwei Qin, and Elad Hazan.
\newblock Chain of lora: Efficient fine-tuning of language models via residual
  learning, 2024.
\newblock URL \url{https://arxiv.org/abs/2401.04151}.

\bibitem[Zhang \& Sennrich(2019)Zhang and
  Sennrich]{zhang2019rootmeansquarelayer}
Biao Zhang and Rico Sennrich.
\newblock Root mean square layer normalization, 2019.
\newblock URL \url{https://arxiv.org/abs/1910.07467}.

\bibitem[Zhang et~al.(2017)Zhang, Bengio, Hardt, Recht, and
  Vinyals]{zhang2017understandingdeeplearningrequires}
Chiyuan Zhang, Samy Bengio, Moritz Hardt, Benjamin Recht, and Oriol Vinyals.
\newblock Understanding deep learning requires rethinking generalization, 2017.
\newblock URL \url{https://arxiv.org/abs/1611.03530}.

\bibitem[Zhang et~al.(2023)Zhang, Chen, Bukharin, Karampatziakis, He, Cheng,
  Chen, and Zhao]{zhang2023adaloraadaptivebudgetallocation}
Qingru Zhang, Minshuo Chen, Alexander Bukharin, Nikos Karampatziakis, Pengcheng
  He, Yu~Cheng, Weizhu Chen, and Tuo Zhao.
\newblock Adalora: Adaptive budget allocation for parameter-efficient
  fine-tuning, 2023.
\newblock URL \url{https://arxiv.org/abs/2303.10512}.

\bibitem[Zhang et~al.(2022)Zhang, Chen, Shi, Sun, and Luo]{Zhang2022Adam}
Yushun Zhang, Congliang Chen, Naichen Shi, Ruoyu Sun, and Zhi-Quan Luo.
\newblock Adam can converge without any modification on update rules.
\newblock \emph{Advances in Neural Information Processing Systems}, 2022.
\newblock URL \url{https://arxiv.org/abs/2208.09632}.

\bibitem[Zhang et~al.(2024{\natexlab{a}})Zhang, Chen, Li, Ding, Wu, Ye, Luo,
  and Sun]{zhang2024adamminiusefewerlearning}
Yushun Zhang, Congliang Chen, Ziniu Li, Tian Ding, Chenwei Wu, Yinyu Ye,
  Zhi-Quan Luo, and Ruoyu Sun.
\newblock Adam-mini: Use fewer learning rates to gain more, 2024{\natexlab{a}}.
\newblock URL \url{https://arxiv.org/abs/2406.16793}.

\bibitem[Zhang et~al.(2024{\natexlab{b}})Zhang, Jaiswal, Yin, Liu, Zhao, Tian,
  and Wang]{zhang2024qgalorequantizedgaloreint4}
Zhenyu Zhang, Ajay Jaiswal, Lu~Yin, Shiwei Liu, Jiawei Zhao, Yuandong Tian, and
  Zhangyang Wang.
\newblock Q-galore: Quantized galore with int4 projection and layer-adaptive
  low-rank gradients, 2024{\natexlab{b}}.
\newblock URL \url{https://arxiv.org/abs/2407.08296}.

\bibitem[Zhao et~al.(2024)Zhao, Zhang, Chen, Wang, Anandkumar, and
  Tian]{zhao2024galorememoryefficientllmtraining}
Jiawei Zhao, Zhenyu Zhang, Beidi Chen, Zhangyang Wang, Anima Anandkumar, and
  Yuandong Tian.
\newblock Galore: Memory-efficient llm training by gradient low-rank
  projection, 2024.
\newblock URL \url{https://arxiv.org/abs/2403.03507}.

\bibitem[Zhou et~al.(2024)Zhou, Chen, Cao, Yang, and Gu]{zhou2024on}
Dongruo Zhou, Jinghui Chen, Yuan Cao, Ziyan Yang, and Quanquan Gu.
\newblock On the convergence of adaptive gradient methods for nonconvex
  optimization.
\newblock \emph{Transactions on Machine Learning Research}, 2024.
\newblock ISSN 2835-8856.
\newblock URL \url{https://openreview.net/forum?id=Gh0cxhbz3c}.
\newblock Featured Certification.

\end{thebibliography}
\bibliographystyle{iclr2025_conference}

\clearpage

\appendix

% ========================= %
{
	\tableofcontents
	\newpage
}
% ========================= %

\clearpage

\section{Challenges for Estimating the Second-order Statistic of Projected Gradients}\label{appendix:challenges_vt}

\textbf{Adam Does Not Rely Solely on Linear Operations.} Adam follows the proposition from AdaGrad \citep{JMLR:v12:duchi11a} to use diagonal approximation to estimate: $((1-\beta_2)  \sum_{\tau=0}^{t} \beta_2^{t-\tau}  \grad_{\tau} \cdot \grad_{\tau}^T)^{-\frac{1}{2}}$. This leads to $\vlow_t^{Adam} = (1-\beta_2)  \sum_{\tau=0}^{t} \beta_2^{t-\tau}  \grad_{\tau} \odot \grad_{\tau}$ and to the matrix inverse square root being replaced by coordinate wise inverse square root operations. Such an approximation saves both memory and computation, but at the cost of leaving the linear framework. Therefore, one cannot expect to draw an autoregressive linear algebra formula to update the second moment estimate.

\textbf{Adam is Anisotropic.} Adaptive optimization learning rate tuning for each parameter is equivalent to a direction-wise rescaling of the training space. It implies that the choice of coordinate system affects the optimization steps. Let $\orthfull$ be a fixed orthogonal matrix which acts as rotation in the parameter space and consider the following two minimization problems:

\begin{equation} \label{eq:problem}
\squeeze
    (P_I) : \arg\min_{\theta \in \mathbb{R}^{n \times m}} f(\theta)
\end{equation}
\begin{equation} \label{eq:problem_Q}
\squeeze
    (P_{\orthfull}) : \arg\min_{\tilde{\theta} \in \mathbb{R}^{n \times m}} f(\orthfull \cdot  \tilde{\theta})
\end{equation}

These two problems are equivalent up to a change of parametrization. However, applying Adam to each of them will lead to optimization process that are not equivalent: adaptive estimate of the second-order statistics of the gradient leads to $\theta_t \neq \orthfull \cdot \tilde{\theta_t}$ :

\begin{equation} \label{eq:iter_Adam}
\squeeze
    \theta_{t+1}
    = \theta_t - \eta_t \frac{\sqrt{1-\beta_2^t}}{1-\beta_1^t} \frac{(1-\beta_1)\sum_{\tau=1}^t\beta_1^{t-\tau} \nabla f(\theta_t)}{\sqrt{(1-\beta_2)\sum_{\tau=1}^t\beta_2^{t-\tau} (\nabla f(\theta_t))^2} + \epsilon}
    \approx \theta_t - \eta_t \frac{\mathbb{E}_{t, \beta_1}[\nabla f(\theta_t)]}{\sqrt{\mathbb{E}_{t, \beta_2}[(\nabla f(\theta_t))^2]} + \epsilon}
\end{equation}
\begin{equation} \label{eq:iter_Adam_Q}
\textstyle
    \tilde{\theta}_{t+1} = \tilde{\theta}_t - \eta_t \frac{\sqrt{1-\beta_2^t}}{1-\beta_1^t} \frac{(1-\beta_1)\sum_{\tau=1}^t\beta_1^{t-\tau} \orthfull^\top \nabla f(\orthfull \cdot  \tilde{\theta}_t)}{\sqrt{(1-\beta_2)\sum_{\tau=1}^t\beta_2^{t-\tau} (\orthfull^\top \nabla f(\orthfull \cdot  \tilde{\theta}_t))^2} + \epsilon}
    \approx \tilde{\theta}_t - \eta_t \frac{\mathbb{E}_{t, \beta_1}[\orthfull^\top \nabla f(\orthfull \cdot \tilde{\theta}_t)]}{\sqrt{\mathbb{E}_{t, \beta_2}[(\orthfull^\top \nabla f(\orthfull \cdot \tilde{\theta}_t))^2]} + \epsilon}
\end{equation}

Low-rank adaptive optimization follows from $(P_{\orthpart})$ where $\orthpart \in \mathbb{R}^{n \times r}$ is the truncation of the rotation matrix $\orthfull$ to its first $r$ rows, and thus yields reparametrization and projection to a lower-dimensional space. However, this leads to $\tilde{\theta} \in \mathbb{R}^{r \times m}$ and the optimization set is restricted to a low-rank subspace. To enable full-parameter training, GaLore's strategy is to keep the model parameter high-rank while performing low-dimensional updates, leading to:

\begin{equation} \label{eq:iter_GaLore_Q}
\textstyle
    \theta_{t+1}^{\textsc{GaLore}} 
    = \theta_t^{\textsc{GaLore}} 
    - \orthpart \cdot \left(  \eta_t \frac{\sqrt{1-\beta_2^t}}{1-\beta_1^t} 
    \frac{(1-\beta_1)\sum_{\tau=1}^t\beta_1^{t-\tau} \orthpart^\top \nabla f(\theta_t^{\textsc{GaLore}})}
    {\sqrt{(1-\beta_2)\sum_{\tau=1}^t\beta_2^{t-\tau} (\orthpart^\top \nabla f(\theta_t^{\textsc{GaLore}}))^2} + \epsilon} \right) \\
    = \theta_t^{\textsc{GaLore}} - \orthpart \cdot \left(  \eta_t \frac{\hat{\mlow}_t}{\sqrt{\hat{\vlow}_t} + \epsilon} \right)
\end{equation}

With the projection map $\orthpart = \orthpart_t$ occasionally updated, GaLore's low-rank gradient statistic estimate $\hat{\mlow}_t$ (resp. $\hat{\vlow}_t$) fails to approximate $\mathbb{E}_{t, \beta_1}[\orthpart_t^\top \nabla f(\theta_t^{\textsc{GaLore}})]$ (resp. $\mathbb{E}_{t, \beta_2}[(\orthpart_t^\top \nabla f(\theta_t^{\textsc{GaLore}}))^2]$), because gradients projected into different subspaces are accumulated on the same momentum buffers. A simple idea would be to fix a coordinate system for the momentum buffers. However, this would have to be high-dimensional and thus would prevent memory savings. The purpose of LDAdam's projection-aware update rule is to adapt the optimizer states to the new learning subspace and coordinate system, leading to $\hat{m}_{t-1/2} \approx \mathbb{E}_{t-1, \beta_1}[\orthpart_t^\top \nabla f(\theta_{t-1}^{\textsc{LDAdam}})]$, and $\hat{v}_{t-1/2} \approx \mathbb{E}_{t-1, \beta_2}[(\orthpart_t^\top \nabla f(\theta_{t-1}^{\textsc{LDAdam}}))^2]$ and enabling gradient accumulation within low-dimensional momentum buffers. One can thus rewrite LDAdam's heuristic as performing the following update:

\begin{equation} \label{eq:iter_LDAdam_Q}
\squeeze
    \theta_{t+1}^{\textsc{LDAdam}} \approx  \theta_t^{\textsc{LDAdam}} - \orthpart_t \cdot \left( \eta_t \frac{\mathbb{E}_{t, \beta_1}[\orthpart_t^\top \nabla f(\theta_t^{\textsc{LDAdam}})]}{\sqrt{\mathbb{E}_{t, \beta_2}[(\orthpart_t^\top \nabla f(\theta_t^{\textsc{LDAdam}}))^2]} + \epsilon} \right).
\end{equation}

As a result, while Adam is anisotropic, LDAdam is isotropic since it doesn't rely on any specific coordinate system but instead always uses the one induced by the SVD of its gradient.

\clearpage

\section{Additional Measurements}\label{appendix:additional_results}

\subsection{Training Memory Footprint and Runtime}

Table \ref{table:complexity} reports theoretical complexity in terms of standard matrix operations. We write $\textsc{Ortho}(r,m)$ the complexity of orthogonalizing a family of $r$ vectors of dimension $m$. In our implemention, similar to powerSGD \citep{vogels2020powersgdpracticallowrankgradient}, we apply power iteration the Gram-Schmidt process which has complexity $\mathcal{O}(r^2*m)$.

\begin{table}[h]
\small
    \setlength{\abovecaptionskip}{0pt} % Space abovecaption
    \caption{Optimizer comparison: matrix operation count for a weight layer of shape $n \times m$ with $n \leq m$ (i.e., left projection).}
    \label{table:complexity}
    \begin{center}
    \begin{tabular}{lcc}
    \toprule
    & Adam & LDAdam \\
    Learning Subspace Adaptation &  & $n*r*m+3*n*m + r*m + \textsc{Ortho}(r,m)$ \\
    Projection-aware Update &  & $3*r^2*m + 4*r*m + r^2$ \\
    Adam-type Update & $9*n*m$ & $9*r*m$ \\
    Descent Direction & $3*n*m$ & $n*r*m + 3*n*m$ \\
    Error Buffer loading & & $n*r*m + 3*n*m$ \\
    \bottomrule
    \end{tabular}
    \end{center}
\end{table}

Table \ref{table:peak_memory} reports peak memory for fine-tuning and pre-training on a single NVIDIA H100 80GB GPU with micro batch size 1 and without activation checkpointing.

\begin{table}[h]
    \setlength{\abovecaptionskip}{0pt} % Space abovecaption
    \caption{Peak memory for fine-tuning (FT) and pre-training (PT) tasks for micro batch size of 1.}
    \label{table:peak_memory}
    \begin{center}
    \begin{tabular}{lllcc}
    \toprule
    Model & Task & Rank & \multicolumn{2}{c}{Optimizer} \\
    \cmidrule(lr){4-5}
    & & & Adam & LDAdam \\
    \midrule
    \textbf{RoBERTa-base} & \textbf{MNLI (FT)} & \texttt{r=8} & 2.39 GB & 1.60 GB
    % \sout{1.61 GB}
    \\
    \midrule
    \textbf{Llama 350M} & \textbf{C4 (PT)} & \texttt{r=8} & 4.20 GB & 2.67 GB
    % \sout{2.78} GB
    \\
    \textbf{Llama 350M} & \textbf{C4 (PT)} & \texttt{r=256} & 4.20 GB & 3.01 GB
    % \sout{3.12} GB
    \\
    \midrule
    \textbf{Llama 1.3B} & \textbf{C4 (PT)} & \texttt{r=16} & 15.20 GB & 8.67 GB
    % \sout{8.69} GB
    \\
    \midrule
    \textbf{Llama-2 7B} & \textbf{GSM8K (FT)} &\texttt{r=32} & 57.62 GB & 31.98 GB
    % \sout{31.98} GB 
    \\
    \textbf{Llama-2 7B} & \textbf{GSM8K (FT)} & \texttt{r=512} & 57.62 GB & 35.90 GB
    % \sout{35.90} GB 
    \\
    \bottomrule
    \end{tabular}
    \end{center}
\end{table}

Figure \ref{fig:time_train} depicts the pre-training dynamics over time, allowing comparison of training efficiency with respect to time.

\begin{figure}[H]
    \small \setlength{\abovecaptionskip}{0pt} % Space abovecaption
    \centering
    \caption{\label{fig:time_train}  Pre-training dynamics over time for Llama 350M (left) and Llama 1.3B (right) on the C4 dataset.}
    \begin{tabular}{cc}
        \includegraphics[width=0.45\textwidth]{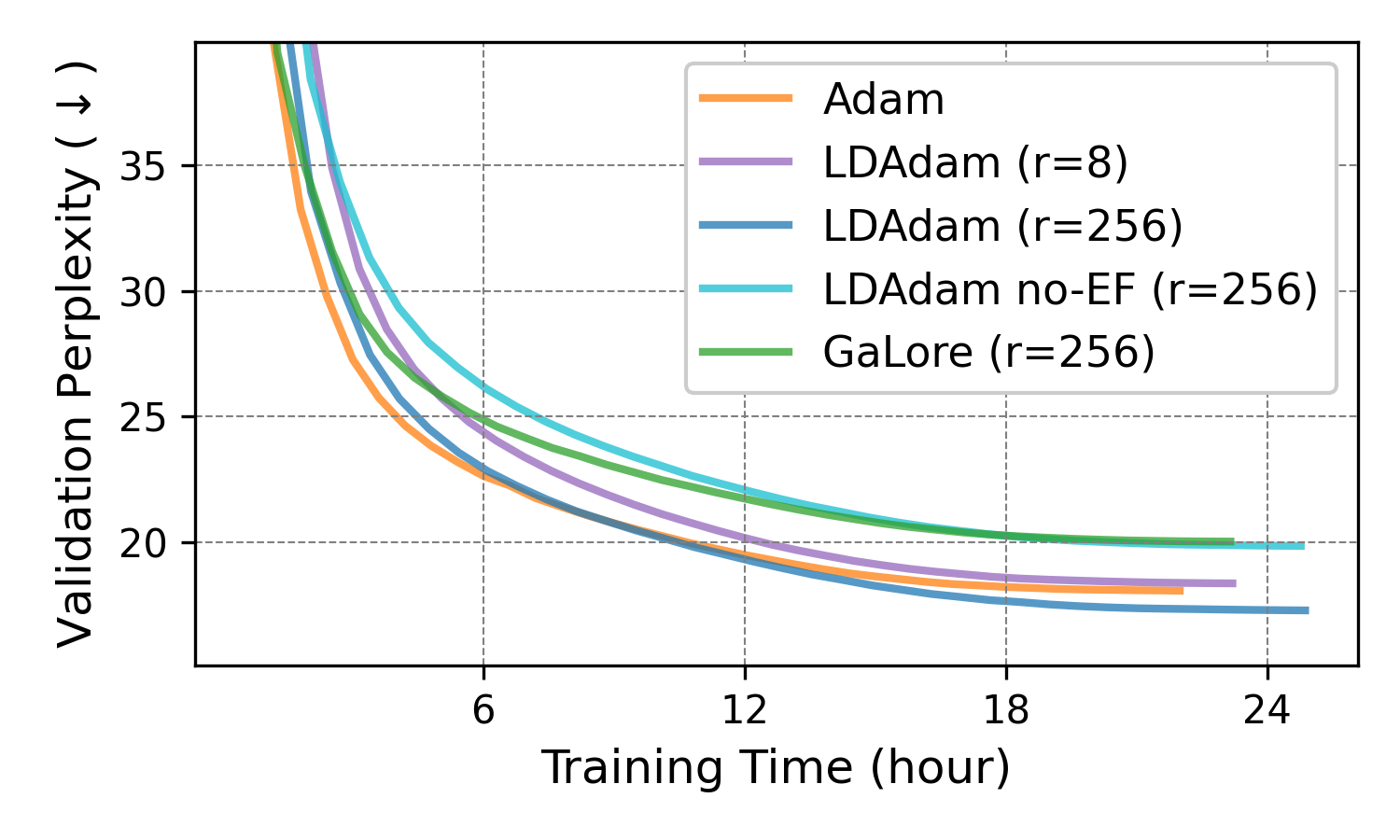} &
        \includegraphics[width=0.45\textwidth]{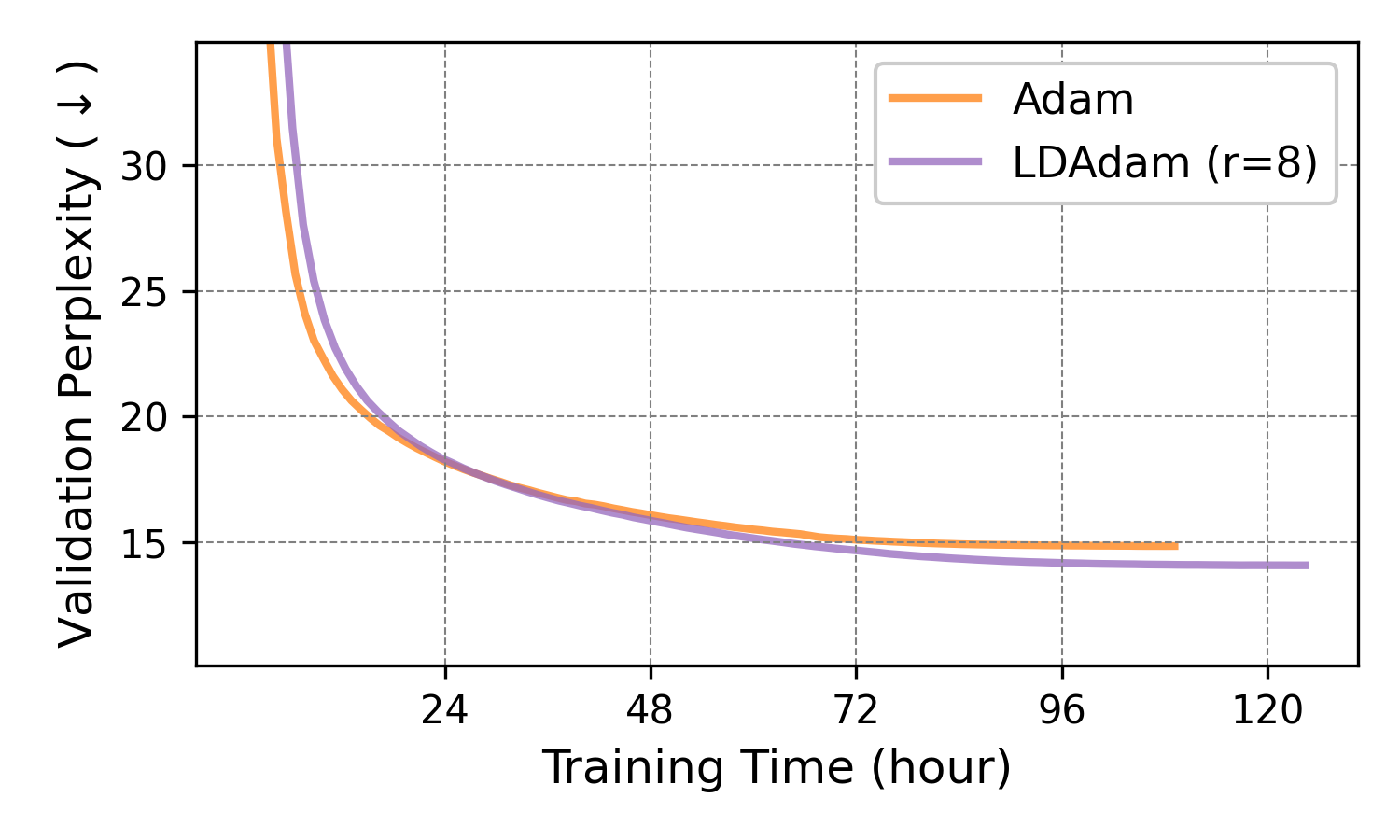}
    \end{tabular}
\end{figure}

Figure \ref{fig:rank} shows the throughput (token per second) and peak memory (GB) of Adam and LDAdam with respect to rank for pre-training the Llamma 350M model on the C4 dataset. We used a micro batch size of 1 for both to replicate the memory constrained setting. This helps to isolate the effect of the optimisation algorithm itself. See Table \ref{table:runtime} and Table \ref{table:peak_memory} for results closer to the standard use case.

\begin{figure}[H]
    \small \setlength{\abovecaptionskip}{0pt} % Space abovecaption
    \centering
    \caption{\label{fig:rank} Throughput (token per second) and peak memory (GB) of Adam and LDAdam with respect to rank for pre-training the Llamma 350M model on the C4 dataset, on a single NVIDIA H100 80BG GPU, using micro batch size of 1.}
    \begin{tabular}{cc}
        \includegraphics[width=0.45\textwidth]{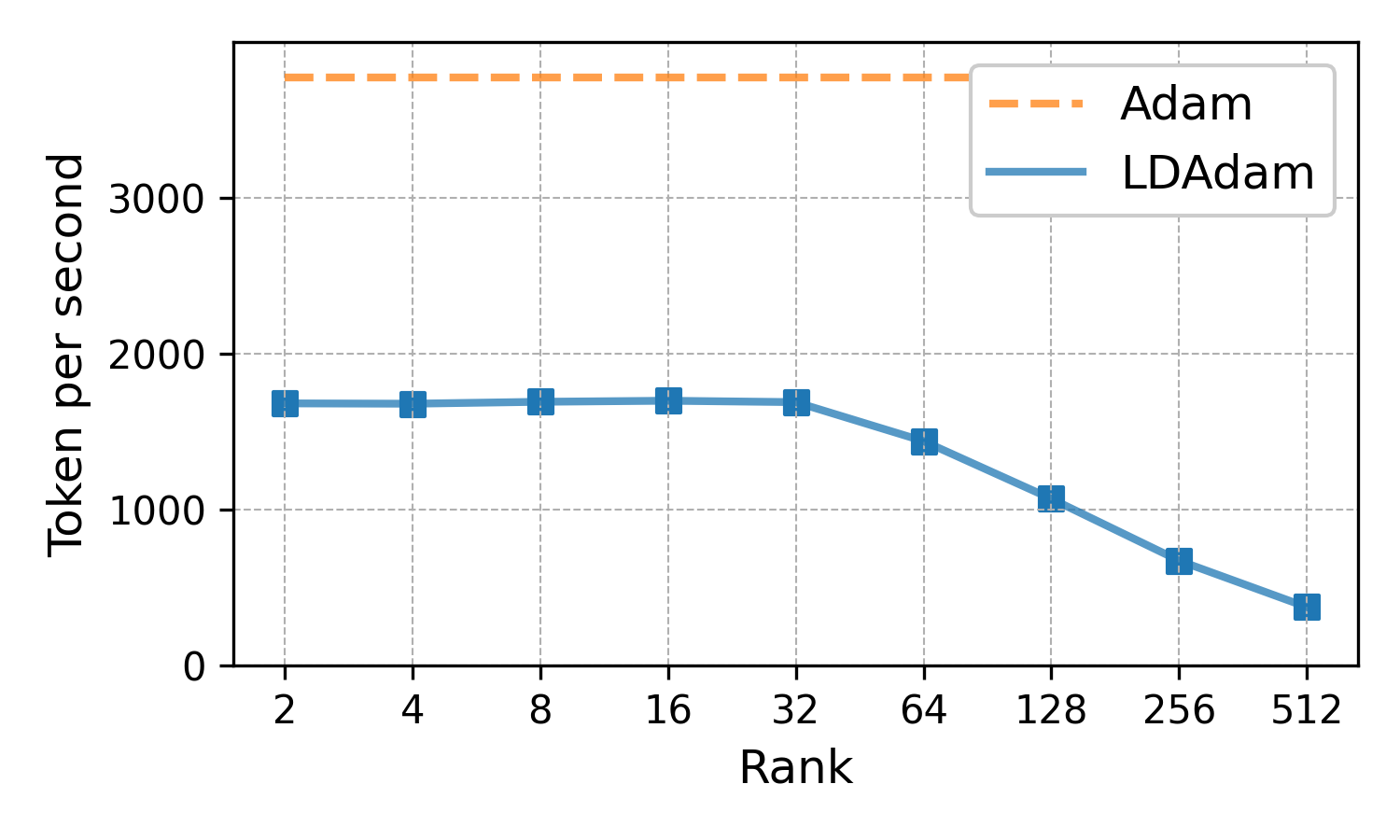} &
        \includegraphics[width=0.45\textwidth]{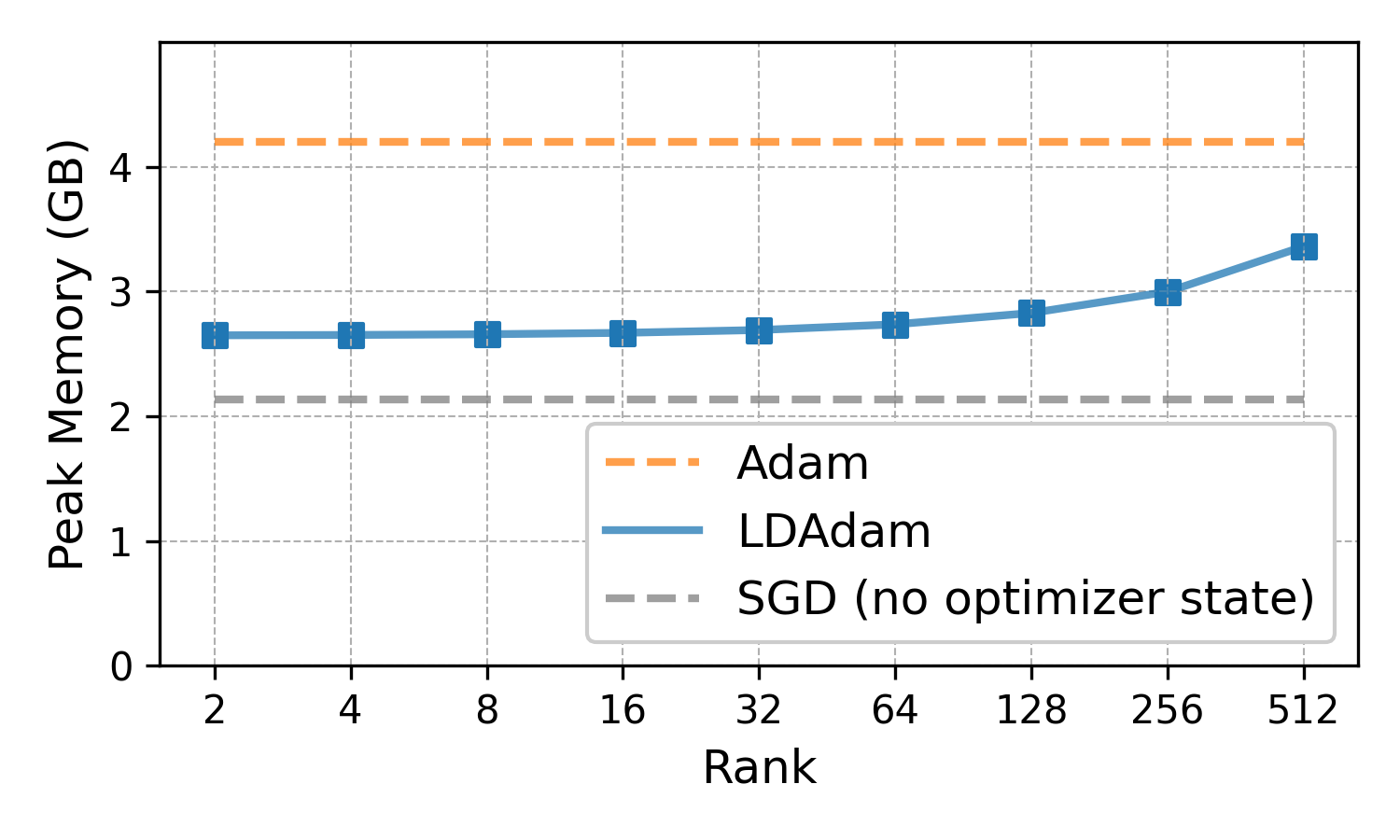}
    \end{tabular}
\end{figure}

Table \ref{table:runtime} reports runtime for fine-tuning and pre-training on a single (except for Llama 1.3B pre-training were we use four)  NVIDIA H100 80GB GPU. We don't use activation checkpointing. We use a micro batch size of 1 for GSM8K fine-tuning, a micro batch size of 128 for Llama 350M model, and micro batch size of 64 for Llama 1.3B model.

\begin{table}[h]
    \setlength{\abovecaptionskip}{0pt} % Space abovecaption
    \caption{Runtime for fine-tuning (FT) and pre-training (PT) tasks.}
    \label{table:runtime}
    \begin{center}
    \begin{tabular}{lllcccc}
    \toprule
    Model & Task & Rank & \multicolumn{4}{c}{Optimizer} \\
    \cmidrule(lr){4-7}
    & & & Adam & LDAdam & LDAdam no-EF & GaLore \\
    \midrule
    \textbf{RoBERTa-base} & \textbf{MNLI (FT)} & \texttt{r=8} & 37m & 01h 07m & 01h 01m & 56m \\
    \midrule
    \textbf{Llama 350M} & \textbf{C4 (PT)} & \texttt{r=8} & 21h 59m & 23h 12m & - & - \\
    \textbf{Llama 350M} & \textbf{C4 (PT)} & \texttt{r=256} & 21h 59m & 24h 50m & 24h 44m & 23h 09m \\
    \midrule
    \textbf{Llama 1.3B} & \textbf{C4 (PT)} & \texttt{r=16} & 04d 13h 09m & 05d 04h 24m & - & - \\
    \midrule
    \textbf{Llama-2 7B} & \textbf{GSM8K (FT)} &\texttt{r=32} & 50m & 56m & 55m & 01h 08m \\
    \textbf{Llama-2 7B} & \textbf{GSM8K (FT)} & \texttt{r=512} & 50m & 01h 08m & 01h 07m & 01h 09m \\
    \bottomrule
    \end{tabular}
    \end{center}
\end{table}

\subsection{Impact of the rank on training}

\begin{figure}[H]
    \small \setlength{\abovecaptionskip}{0pt} % Space abovecaption
    \centering
    \caption{\label{fig:impact_rank} Training dynamics and validation perplexity for various rank when pre-training Llama 350M model. For training dynamics we used a single learning rate of $5e-4$ to allow comparison between runs and provide results for the first $10000$ optimization steps. We report the best validation perplexity for learning rates tuned over the set $\{5e-4, 1e-3, 5e-3\}$.}
    \begin{tabular}{cc}
        \includegraphics[width=0.45\textwidth]{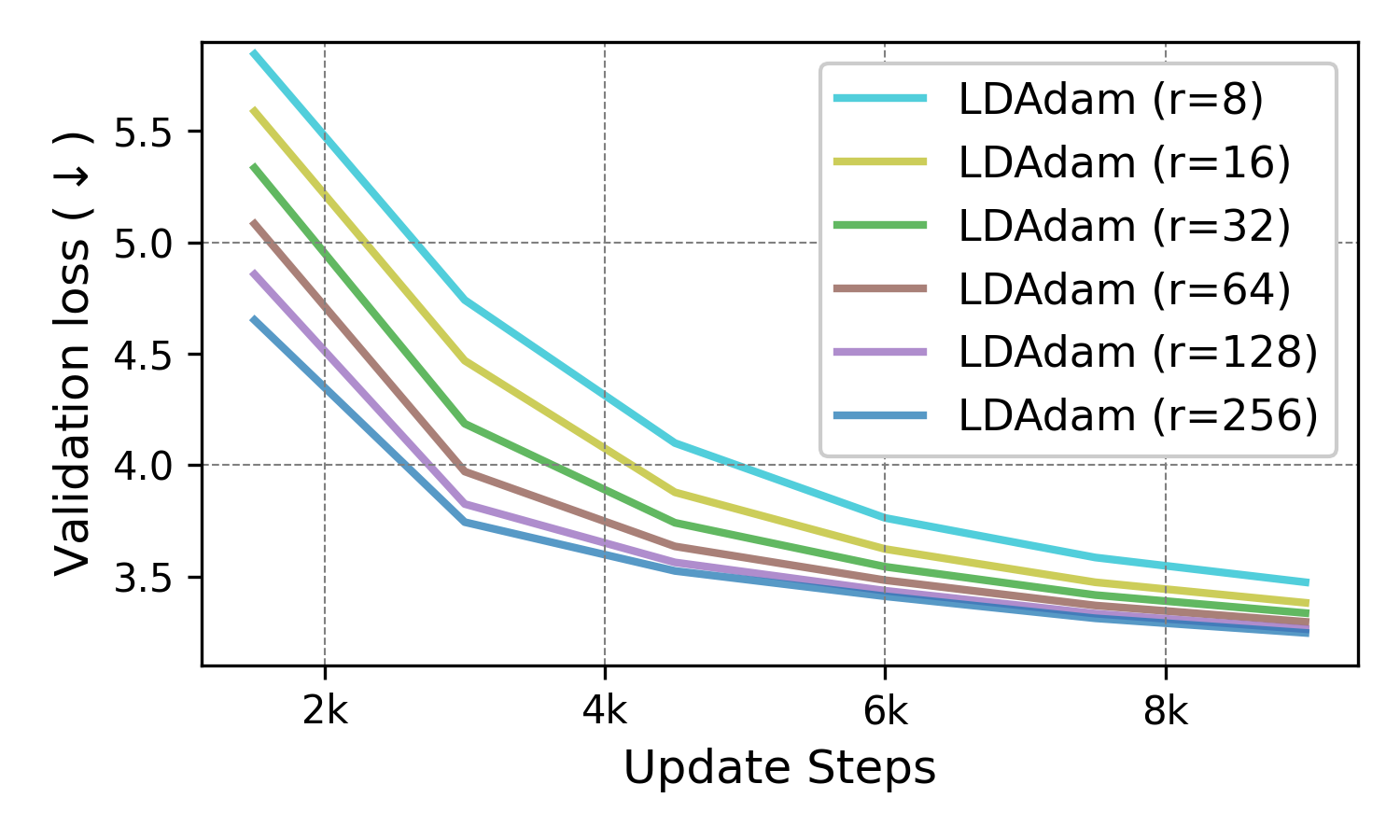} &
        \includegraphics[width=0.45\textwidth]{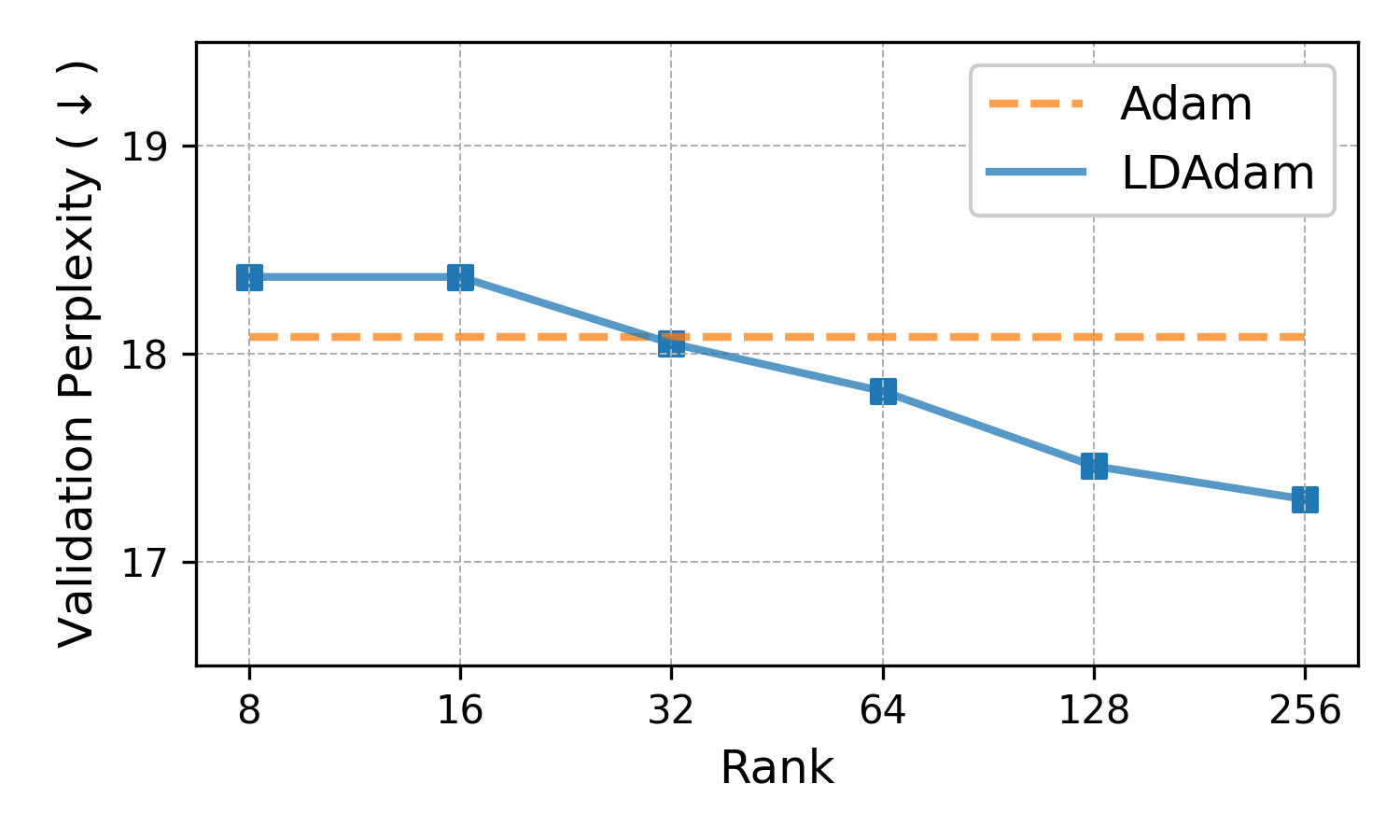}
    \end{tabular}
\end{figure}

\subsection{Fine-tuning Standard deviation} \label{appendix:glue_std}

\begin{table}[H]
    \setlength{\abovecaptionskip}{0pt} % Space abovecaption
    \caption{Standard deviation over 3 seeds for the results of fine-tuning RoBERTa-base model on the GLUE benchmark.}
    \label{table:glue_std}
    \begin{center}
    \begin{tabular}{lccccccccc}
    \toprule
    & \textbf{MRPC} & \textbf{STS-B} & \textbf{CoLA} & \textbf{SST-2} & \textbf{QNLI} & \textbf{QQP} & \textbf{MNLI} \\
    \small Training Samples & \small 3.7k & \small 7k & \small 8.5k & \small 67k & \small 105k & \small 364k & \small 393k \\
    \midrule
    \normalsize % Revert to normal size
    Adam & 1.068 & 0.226 & 2.604 & 0.517 & 0.074 & 0.050 & 0.167 \\
    \midrule
    LDAdam & 0.617 & 0.211 & 0.633 & 0.463 & 0.275 & 0.080 & 0.177 \\
    LDAdam no-EF & 0.849 & 0.555 & 0.903 & 0.066 & 0.285 & 0.687 & 0.087 \\
    GaLore & 0.861 & 0.330 & 2.348 & 0.413 & 0.166 & 0.034 & 0.111 \\
    \bottomrule
    \end{tabular}
    \end{center}
\end{table}

\subsection{Error Buffer Norm During Training}

\begin{figure}[H]
    \small \setlength{\abovecaptionskip}{0pt} % Space abovecaption
    \centering
    \caption{\label{fig:glue_norm} Error buffer norm and gradient norm during the fine-tuning of the RoBERTa-base model on the GLUE benchmark.}
    \includegraphics[width=0.8\textwidth]{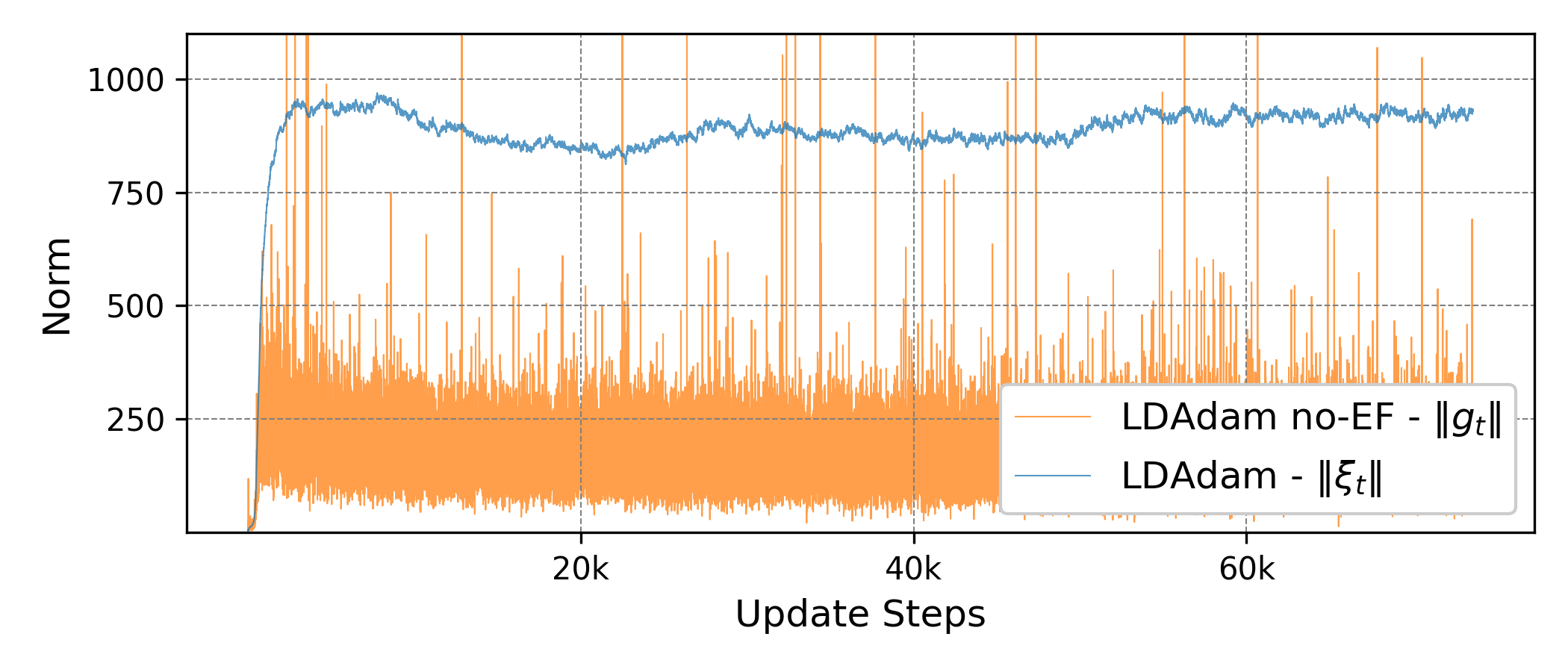}
\end{figure}

\clearpage

\section{Materials for Reproducibility}\label{appendix:hyperparameters}

\subsection{Optimizer States Memory Estimates}

Table \ref{table:architecture} describes the architecture of the trained models architecture and the low-rank structure of the low-dimensional optimizer states. It allows optimizer states token count, thus memory estimates in GB by applying factor $\frac{2}{1024^3}$ for half precision training.

\begin{table}[h]%\label{table:architecture}
    \setlength{\abovecaptionskip}{0pt} % Space abovecaption
    \caption{Weights and low-dimensional optimizer states shape for used models.}
    \label{table:architecture}
    \begin{center}
    \begin{tabular}{lcc}
    \toprule
    & \textbf{Weight} & \textbf{Low-rank optimizer states} \\
    \midrule
    \textbf{RoBERTa-base} \\
    \hspace{1em}Token Embedding & \texttt{50265*768+768} & - \\
    \hspace{1em}Positional Embedding & \texttt{564*768} & - \\
    \hspace{1em}Attention Head &  \texttt{12*4*768*768} & \texttt{12*4*3*768*r} \\
    \hspace{1em}MLP Block & \texttt{12*3*3072*768} & \texttt{12*3*(2*3072*r+768*r)}\\
    \hspace{1em}Normalization & \texttt{2*768+12*(9*768+3072)} & - \\
    \hspace{1em}Dense Layer & \texttt{768*768+768} & - \\
    \hspace{1em}Output Layer & \texttt{768*n\_label+n\_label} & - \\
    \midrule
    \textbf{Llama 130M} \\
    \hspace{1em}Embedding Layer & \texttt{32000*768} & - \\
    \hspace{1em}Attention Head &  \texttt{12*4*768*768} & \texttt{12*4*3*768*r} \\
    \hspace{1em}MLP Block & \texttt{12*3*2048*768} & \texttt{12*3*(2*2048*r+768*r)} \\
    \hspace{1em}Layer Normalization & \texttt{(12*2+1)*768} & - \\
    \hspace{1em}Output Layer & \texttt{768*32000} & - \\
    \midrule
    \textbf{Llama 350M} \\
    \hspace{1em}Embedding Layer & \texttt{32000*1024} & - \\
    \hspace{1em}Attention Head &  \texttt{24*4*1024*1024} & \texttt{24*4*3*1024*r} - \\
    \hspace{1em}MLP Block & \texttt{24*3*2736*1024} & \texttt{24*3*(2*2736*r+1024*r)} \\
    \hspace{1em}Layer Normalization & \texttt{(24*2+1)*1024} & - \\
    \hspace{1em}Output Layer & \texttt{1024*32000} & \\
    \midrule
    \textbf{Llama-2 7B} \\
    \hspace{1em}Embedding Layer & \texttt{32000*4096} & - \\
    \hspace{1em}Attention Head &  \texttt{32*4*4096*4096} & \texttt{32*4*3*4096*r} \\
    \hspace{1em}MLP Block & \texttt{32*3*4096*11008} & \texttt{32*3*(2*11008*r+4096*r)} \\
    \hspace{1em}Layer Normalization & \texttt{(32*2+1)*4096} & - \\
    \hspace{1em}Output Layer & \texttt{4096*32000} & - \\
    \bottomrule
    \end{tabular}
    \end{center}
\end{table}

\subsection{Fine-Tuning Hyperparameters}\label{appendix:hyperparameters_ft}

Tables \ref{table:glue_hyperparameters} and \ref{table:GSM8K_hyperparameters} detail all the hyperparameters we use when fine-tuning respectively RoBERTa-base model on the GLUE benchmark and Llama-2 7B model on the GSM8K dataset.

\begin{table}[h]
    \setlength{\abovecaptionskip}{0pt} % Space abovecaption
    \caption{Hyperparameters used for fine-tuning RoBERTa-base model on the GLUE benchmark.}\label{table:glue_hyperparameters}
    \begin{center}
    \begin{tabular}{lccc}
    \toprule
    & Adam & LDAdam & GaLore \\
    \midrule
    Epochs & \multicolumn{3}{c}{3} \\
    Warm-up & \multicolumn{3}{c}{\xmark} \\
    Batch Size & \multicolumn{3}{c}{16} \\
    Maximum Length & \multicolumn{3}{c}{128} \\
    Data Type & \multicolumn{3}{c}{\texttt{bfloat32}} \\
    \midrule
    Learning Rate & \multicolumn{3}{c}{$\{1e-5, 2e-5, \ldots, 5e-5\}$} \\
    Learning Rate Scheduling & \multicolumn{3}{c}{linear to 0\%}  \\
    Decay Rate $\beta_1$ & 0.9 & 0.908 & 0.9 \\
    Decay Rate $\beta_2$ & 0.999 & 0.99 & 0.999 \\
    Weight Decay & \xmark & \xmark & \xmark \\
    Dropout & \xmark & \xmark & \xmark \\
    Gradient Clipping & \xmark & \xmark & \xmark \\
    \midrule
    Interpolation Factor $\rho$ & \xmark & 0.908 & \xmark \\
    Error Feedback & \xmark & \cmark & \xmark \\
    Subspace Frequency & \xmark & 1 & 500 \\
    Learning Rate Scaling & \xmark & \xmark & \xmark \\
    \bottomrule
    \end{tabular}
    \end{center}
\end{table}

\begin{table}[h]
    \setlength{\abovecaptionskip}{0pt} % Space abovecaption
    \caption{Hyperparameters used for fine-tuning Llama-2 7B model on the GSM8K dataset.}\label{table:GSM8K_hyperparameters}
    \begin{center}
    \begin{tabular}{lccc}
    \toprule
    & Adam & LDAdam & GaLore \\
    \midrule
    Epochs & \multicolumn{3}{c}{3} \\
    Training Steps & \multicolumn{3}{c}{702} \\
    Warm-up Steps & \multicolumn{3}{c}{20} \\
    Batch Size & \multicolumn{3}{c}{32} \\
    Maximum Length & \multicolumn{3}{c}{512} \\
    Data Type & \multicolumn{3}{c}{\texttt{\texttt{bfloat16}}} \\
    \midrule
    Learning Rate & \multicolumn{3}{c}{$\{5e-5, 6e-5, \ldots, 9e-5\}$} \\
     & \multicolumn{3}{c}{$\{1e-4, 2e-4, \ldots, 5e-4\}$} \\
    
    Warm-up Scheduling & \multicolumn{3}{c}{linear from 0\%}  \\

    Learning Rate Scheduling & \multicolumn{3}{c}{linear to 0\%}  \\
    Decay Rate $\beta_1$ & 0.9 & 0.908 & 0.9 \\
    Decay Rate $\beta_2$ & 0.999 & 0.99 & 0.999 \\
    Weight Decay & \xmark & \xmark & \xmark \\
    Dropout & \xmark & \xmark & \xmark \\
    Gradient Clipping & 1.0 & \xmark & \xmark  \\
    \midrule
    Interpolation Factor $\rho$ & \xmark & 0.908 & \xmark \\
    Error Feedback & \xmark & \cmark & \xmark \\
    Subspace Frequency & \xmark & 1 & 200 \\
    Learning Rate Scaling & \xmark & \xmark & \xmark \\
    \bottomrule
    \end{tabular}
    \end{center}
\end{table}

\clearpage

\subsection{Pre-Training Hyperparameters}\label{appendix:hyperparameters_pt}

Table \ref{table:c4_hyperparameters} details all the hyperparameters we use when pre-training Llama models on the C4 dataset.

\begin{table}[h]
    \setlength{\abovecaptionskip}{0pt} % Space abovecaption
    \caption{Hyperparameters used for pre-training Llama models on the C4 dataset.}\label{table:c4_hyperparameters}
    \label{table:hyperparameters_pt}
    \begin{center}
    \rotatebox{90}{
    \begin{tabular}{lcccccccc}
    \toprule
    & \multicolumn{3}{c}{\textbf{Llama 130M}} & \multicolumn{3}{c}{\textbf{Llama 350M}} & \multicolumn{2}{c}{\textbf{Llama 1.3B}} \\
    \cmidrule(lr){2-4} \cmidrule(lr){5-7} \cmidrule(lr){8-9}
    & Adam & LDAdam & GaLore & Adam & LDAdam & GaLore & Adam & LDAdam\\
    \midrule
    Training Steps & \multicolumn{3}{c}{20000} & \multicolumn{3}{c}{55000} & \multicolumn{2}{c}{200000} \\
    Warm-up Steps & \multicolumn{3}{c}{2000} & \multicolumn{3}{c}{5500} & \multicolumn{2}{c}{10000} \\
    Maximum Length & \multicolumn{3}{c}{256} & \multicolumn{3}{c}{256} & \multicolumn{2}{c}{256} \\
    Batch Size & \multicolumn{3}{c}{512} & \multicolumn{3}{c}{512} & \multicolumn{2}{c}{512} \\
    Token Batch Size & \multicolumn{3}{c}{131 072} & \multicolumn{3}{c}{131 072} & \multicolumn{2}{c}{131 072} \\
    Total Training Tokens & \multicolumn{3}{c}{2 621 440 000} & \multicolumn{3}{c}{7 208 960 000} & \multicolumn{2}{c}{26 621 440 000} \\
    Data Type & \multicolumn{3}{c}{\texttt{bfloat16}} & \multicolumn{3}{c}{\texttt{bfloat16}} & \multicolumn{2}{c}{\texttt{bfloat16}} \\
    \midrule
    Learning Rate & \multicolumn{3}{c}{$\{5e-4, 1e-3, 5e-3\}$} & \multicolumn{3}{c}{$\{5e-4, 1e-3, 5e-3\}$} & \multicolumn{2}{c}{$5e-4$}  \\
    Warm-up Scheduling & \multicolumn{3}{c}{linear from 0\%} & \multicolumn{3}{c}{linear from 0\%} & \multicolumn{2}{c}{linear from 0\%} \\
    Learning Rate Scheduling & \multicolumn{3}{c}{cosine to 10\%} & \multicolumn{3}{c}{cosine to 10\%} & \multicolumn{2}{c}{cosine to 10\%}  \\
    Decay Rate $\beta_1$ & 0.9 & 0.908 & 0.9 & 0.9 & 0.908 & 0.9 & 0.9 & 0.908 \\
    Decay Rate $\beta_2$ & 0.999 & 0.98 & 0.999 & 0.999 & 0.98 & 0.999 & 0.999 & 0.98  \\
    Weight Decay & \xmark & \xmark & \xmark & \xmark & \xmark & \xmark & \xmark & \xmark \\
    Dropout & \xmark & \xmark & \xmark & \xmark & \xmark & \xmark & \xmark & \xmark \\
    Gradient Clipping & 1.0 & \xmark & \xmark & 1.0 & \xmark & \xmark & 1.0 & \xmark \\
    \midrule
    Interpolation Factor $\rho$ & \xmark & 0.908 & \xmark & \xmark & 0.908 & \xmark & \xmark & 0.908 \\
    Error Feedback & \xmark & \cmark & \xmark & \xmark & \cmark & \xmark & \xmark & \cmark \\
    Subspace Frequency & \xmark & 1 & 200 & \xmark & 1 & 200 & \xmark & 1 \\
    Learning Rate Scaling & \xmark & \xmark & \xmark & \xmark & \xmark & \xmark & \xmark & \xmark \\
    \bottomrule
    \end{tabular}
    }
    \end{center}
\end{table}

\clearpage

\section{GaLore Algorithm}

Algorithm \ref{alg:GaLore} describes GaLore's \citep{zhao2024galorememoryefficientllmtraining} proposal for performing an Adam-type update from a low-rank gradient projection. For experiments we use the author's implementation of the GaLore algorithm accessible at: \url{https://github.com/jiaweizzhao/GaLore}. Other than layer-wise learning rate rescaling, in our experiments we use the suggested hyperparameters.

\begin{algorithm}[H]
	\caption{GaLore}\label{alg:GaLore}
	\begin{algorithmic}[1]
        \Statex \rm \textbf{Hyperparameters:} step size $\eta_t$; decay rates $\beta_1$, $\beta_2$
        \Statex \rm \textbf{GaLore Hyperparameters:} projection rank $r$; subspace change frequency $\mathcal{T}$, scale factor $\alpha$
        \State \textbf{Initialization:} $ \mlow_0 = 0;\; \vlow_0 = 0;\; \orthpart_0 = 0$
		\For{$t=\{1, 2, \dots, T\}$}
                \If{$t\mod\mathcal{T} = 0$}
                    \State $\orthfull_t, \Sigma, V = \textsc{SVD}(\grad_t)$
                    \State $\orthpart_t = \orthfull_t[:,1:r]$
                \Else
                    \State $\orthpart_t = \orthpart_{t-1}$
                \EndIf
    		\State $\acumlow_t = \orthpartT_t \grad_t$
    		\State $\mlow_t = \beta_1  \mlow_t + (1-\beta_1) \acumlow_t$
    		\State $\vlow_t = \beta_2 \vlow_t + (1-\beta_2) \acumlow_t^2$
    		\State $\theta_{t+1} = \theta_{t} -  \alpha \eta_t \orthpart_t \cdot \frac{\hat{\mlow}_t}{\sqrt{\hat{\vlow}_t} + \epsilon}$
		\EndFor
	\end{algorithmic}
\end{algorithm}

\subsection{layer-wise Learning Rate Rescaling}

Note that the learning rate rescaling induced by multiplying by $\alpha \neq 1$ is equivalent to using different learning rates for different layers. This level of hyperparameter tuning is not usual, so for a fair comparison we use a single learning rate for all layers. However, our preliminary experiments suggest that LDAdam would also benefit from layer-wise learning rate tuning.

\subsection{Incompatibility of Gradient Accumulation and Per-layer Weight Update}\label{explanation_grad_acc}

GaLore's improvement in memory efficiency over the standard Adam implementation comes from three distinct additions, namely: low-rank gradient projection \citep{zhao2024galorememoryefficientllmtraining}, 8-bit quantization \citep{dettmers20228bitoptimizersblockwisequantization}, and per-layer weight update \citep{lv2024parameterfinetuninglargelanguage}. The latter saves memory by releasing the variable used to store the gradient after the model layer has been updated rather than after the entire model has been updated. A practical implementation of per-layer weight updates is to add a gradient hook that triggers the model update and releases the gradient variable immediately after the gradient for that particular layer has been computed. For example, in PyTorch, the per-layer weight update is built by adding a gradient hook using \texttt{p.register\_post\_accumulate\_grad\_hook(optimizer\_hook)}, with the \texttt{optimizer\_hook} function implementing both \texttt{optimizer.step()} and \texttt{optimizer.zero\_grad()} \citep{pytorch_optimizer_step}.

Therefore, when using per-layer weight update, one never has access to the gradient for the entire model, and furthermore one cannot accumulate the gradient over multiple micro batches and perform the model update (e.g. in PyTorch, run \texttt{optimizer.step()}) only after accumulation. For the same reason, gradient clipping is not possible when using per-layer weight update, since the norm of the gradient for the entire model is not computable.

To enable gradient accumulation for GaLore, the per-layer weight update has to be abandoned. The method is called by the author GaLore (no retaining grad) \citep{zhao2024galorememoryefficientllmtraining} and it results in an additional memory overhead equal to the size of the model compared to GaLore's claim.

\clearpage

\section{Deferred Proofs}

\begin{algorithm}[H]
    \caption{\label{alg:LDAdam}LDAdam: Analytical View}
    \begin{algorithmic}[1]
        \State \textbf{Hyperparameters:} step size $\eta_t$, decay rates $\beta_1$ and $\beta_2$, projection rank $r$ with contraction $q_r$.
        \State \textbf{Initialization:} error $\error_1 = 0_d$, moments $\mlow_0 = \vlow_0 = \vlowhat_0 = 0_r$ and $\orthpart_0 = 0_{d\times r}$.
        \For{$t=\{1, 2, \dots, T\}$}
        \Statex{\quad\bf Compute error corrected gradient and momentum}
        \State $\acum_t = \grad_t + \error_{t}$, where $\grad_t = \widetilde\nabla_{\theta}f(\theta_{t})$ is a mini-batch stochastic gradient at $\theta_t$ %\hfill$\diamond$ error corrected gradient
        \State $\bcum_t = \beta_1 \orthpart_{t-1}\mlow_{t-1} + (1-\beta_1)\acum_t$ %\hfill$\diamond$ error corrected momentum
        \Statex{\quad\bf Update the projection matrix}
        \State $\orthpart_t$ is any $d\times r$ orthogonal matrix such that $\|(I - \orthpart_t\orthpartT_t)\bcum_t\| \le q_r\|\bcum_t\|$ with $q_r<1$
        \Statex{\quad\bf Intermediate updates to adjust to the new low-dimensional space}
        \State $\acumlow_t = \orthpartT_t \acum_t$
        \State $\mlowpre{t} = \orthpartT_t \orthpart_{t-1} \mlow_{t-1}$
        \State $\vlowpre{t} = (1-\beta_2^{t-1}) \left| (\orthpartT_t\orthpart_{t-1})^2 \cdot (\frac{\vlow_{t-1}}{1-\beta_2^{t-1}} - (\frac{\mlow_{t-1}}{1-\beta_1^{t-1}})^2)+ (\orthpartT_t\orthpart_{t-1} \cdot \frac{\mlow_{t-1}}{1-\beta_1^{t-1}})^2 \right|$
        \Statex{\quad\bf Adam updates in the low-dimensional space}
        \State $\mlow_t = \beta_1  \mlowpre{t} + (1-\beta_1) \acumlow_t$
        \State $\vlow_t = \beta_2 \vlowpre{t} + (1-\beta_2) \acumlow_t^2$
        \State $\vlowhat_t = \max(\vlow_t,\|\vlowhat_{t-1}\|_{\max})$ \hfill$\diamond$ AMSGrad-type normalization
        \Statex{\quad\bf Update the main model}
        \State $\theta_{t+1} = \theta_{t} -  \eta_t \orthpart_t \cdot \frac{\mlow_t}{\sqrt{\vlowhat_t + \epsilon}}$
        \Statex{\quad\bf Update the error feedback}
        \State $\error_{t+1} = (\acum_t - \orthpart_t \cdot \acumlow_t) + \frac{\beta_1}{1-\beta_1} (\orthpart_{t-1} \cdot \mlow_{t-1} - \orthpart_t \cdot \mlowpre{t})$
        \EndFor
    \end{algorithmic}
\end{algorithm}

\subsection{Key Lemmas}\label{apx:key-lemmas}

\begin{Lemma}\label{lem:bound-perturb}
	With $\Sigma_T = T\sigma^2 + \sum_{t=1}^{T} \E\left[\|\nabla f(\theta_{t})\|^2\right]$, for any $t\ge1$ the following bounds hold:
	\begin{eqnarray}
		&& \|\bcum_t\| \le \frac{G}{1-q_r}, \quad
           \sum_{t=1}^T \E\left[\|\bcum_{t}\|^2\right] \le \frac{1}{(1-q_r)^2} \Sigma_T \label{bound-bcum} \\
		&& \|\error_t\| \le \frac{q_r G}{(1-\beta_1)(1-q_r)}, \quad
        \sum_{t=1}^T \E\left[\|\error_{t}\|^2\right] \le \frac{q_r^2}{(1-\beta_1)^2(1-q_r)^2} \Sigma_T \label{bound-error} \\
		&& \left\|\frac{\beta_1}{1-\beta_1}\bcum_{t} + (1-\beta_1)\error_{t+1} \right\| \le \frac{\beta_1 + (1-\beta_1)q_r}{(1-\beta_1)(1-q_r)} G \label{bound-perturb} \\
        && \sum_{t=1}^{T} \E\left[ \left\|\frac{\beta_1}{1-\beta_1}\bcum_{t-1} + (1-\beta_1)\error_{t} \right\|^2 \right] \le \frac{\beta_1 + (1-\beta_1)q_r^2}{(1-\beta_1)^2(1-q_r)^2} \Sigma_T. \label{bound-perturb-exp}
	\end{eqnarray}
\end{Lemma}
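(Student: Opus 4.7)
The plan is to unpack the bookkeeping in Algorithm \ref{alg:LDAdam} so that both $\error_t$ and the back-projected momentum $\orthpart_{t-1}\mlow_{t-1}$ become explicit functions of $\bcum_{t-1}$, and then to analyze $\bcum_t$ through a contractive linear recursion. From the Adam updates one directly gets $\mlow_t = \orthpartT_t \bcum_t$ and hence $\orthpart_t\mlow_t = \orthpart_t\orthpartT_t\bcum_t$. Substituting into the error-feedback line collapses it to the single identity $(1-\beta_1)\error_{t+1} = (I - \orthpart_t\orthpartT_t)\bcum_t$, and back-substituting into the definition of $\bcum_t$ yields $\bcum_t = T_{t-1}\bcum_{t-1} + (1-\beta_1)\grad_t$ with the self-adjoint operator $T_{t-1} \eqdef I - (1-\beta_1)\orthpart_{t-1}\orthpartT_{t-1}$.

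For bound \eqref{bound-bcum}, I decompose $\bcum_{t-1}$ orthogonally as $p + q$ with $p = \orthpart_{t-1}\orthpartT_{t-1}\bcum_{t-1}$ and $q = (I - \orthpart_{t-1}\orthpartT_{t-1})\bcum_{t-1}$, so that $T_{t-1}\bcum_{t-1} = \beta_1 p + q$. Pythagoras together with the contraction assumption $\|q\| \le q_r\|\bcum_{t-1}\|$ gives $\|T_{t-1}\bcum_{t-1}\| \le c_1 \|\bcum_{t-1}\|$ with $c_1 \eqdef \sqrt{\beta_1^2 + (1-\beta_1^2)q_r^2}$. A short algebraic check establishes $c_1 \le 1 - (1-\beta_1)(1-q_r)$, so the recurrence plus induction prove the pointwise bound $\|\bcum_t\| \le \tfrac{G}{1-q_r}$. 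For the summed estimate, I unroll to obtain $\|\bcum_t\| \le (1-\beta_1)\sum_{\tau \le t} c_1^{t-\tau}\|\grad_\tau\|$; squaring via Jensen's inequality applied to the geometric weights, swapping the order of summation, and using $(1-c_1)^{-2} \le (1-\beta_1)^{-2}(1-q_r)^{-2}$ together with $\E\|\grad_\tau\|^2 \le \|\nabla f(\theta_\tau)\|^2 + \sigma^2$ produces $\tfrac{1}{(1-q_r)^2}\Sigma_T$.

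Bound \eqref{bound-error} is immediate from $\|\error_t\| \le \tfrac{q_r}{1-\beta_1}\|\bcum_{t-1}\|$ combined with the two estimates from \eqref{bound-bcum}. Bound \eqref{bound-perturb} follows by a triangle inequality on $\tfrac{\beta_1}{1-\beta_1}\bcum_t + (1-\beta_1)\error_{t+1}$ and substitution of the pointwise bounds just derived. For the tighter sum-of-squares \eqref{bound-perturb-exp}, I rewrite $\tfrac{\beta_1}{1-\beta_1}\bcum_{t-1} + (1-\beta_1)\error_t = \beta_1 \cdot \tfrac{\bcum_{t-1}}{1-\beta_1} + (1-\beta_1)\cdot \error_t$ as a convex combination and invoke convexity of $\|\cdot\|^2$, yielding the per-term estimate $\tfrac{\beta_1 + (1-\beta_1)q_r^2}{(1-\beta_1)^2}\|\bcum_{t-1}\|^2$; summing and invoking \eqref{bound-bcum} then closes the bound.

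The main obstacle is obtaining the strict contraction $c_1 \le 1 - (1-\beta_1)(1-q_r)$: the naive operator-norm bound $\|T_{t-1}\bcum_{t-1}\| \le \|\bcum_{t-1}\|$ would leave the geometric series in the unrolling divergent, so the interplay between $\beta_1$, $q_r$, and the orthogonal splitting has to be squeezed to land exactly on this inequality. A secondary but related subtlety is that \eqref{bound-perturb-exp} requires Jensen's convexity of $\|\cdot\|^2$ rather than the triangle inequality, in order to avoid a cross term and recover $\beta_1$ (not $\beta_1^2$) in the numerator.
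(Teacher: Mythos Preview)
Your proof is correct and structurally identical to the paper's: the same identities $(1-\beta_1)\error_{t+1} = (I-\orthpart_t\orthpartT_t)\bcum_t$ and $\bcum_t = T_{t-1}\bcum_{t-1} + (1-\beta_1)\grad_t$, the same unrolling-plus-Jensen argument for the summed bounds, and the same convexity-of-$\|\cdot\|^2$ step for \eqref{bound-perturb-exp}. The one place you diverge is in establishing the contraction $\|T_{t-1}\bcum_{t-1}\| \le \tilde q_r\|\bcum_{t-1}\|$ with $\tilde q_r \eqdef \beta_1 + (1-\beta_1)q_r$. You use the orthogonal split $T_{t-1}\bcum_{t-1} = \beta_1 p + q$ and Pythagoras to reach the sharper factor $c_1 = \sqrt{\beta_1^2 + (1-\beta_1^2)q_r^2}$, after which you must separately verify $c_1 \le \tilde q_r$---the step you flag as the ``main obstacle.'' The paper sidesteps this entirely by rewriting $T_{t-1}\bcum_{t-1} = \beta_1\bcum_{t-1} + (1-\beta_1)(I-\orthpart_{t-1}\orthpartT_{t-1})\bcum_{t-1}$ and applying the triangle inequality directly, landing on $\tilde q_r$ in one line. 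Your route yields a nominally tighter contraction constant (which you then relax anyway), while the paper's is shorter and needs no auxiliary algebraic check; otherwise the arguments coincide.
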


\begin{proof} Let us start with the proof of the first bound on $\bcum_t$. Denote $\tilde{q}_r := (1-\beta_1)q_r + \beta_1 \in [q_r,1)$.
\begin{eqnarray*}
\|\bcum_{t+1}\|
&=& \left\|\beta_1\mhigh_t + (1-\beta_1)\acum_{t+1} \right\| \\
&=& \left\|\beta_1\mhigh_t + (1-\beta_1)\grad_{t+1} + (1-\beta_1)\error_{t+1} \right\| \\
&=& \left\|-(1-\beta_1)\mhigh_t + (1-\beta_1)\grad_{t+1} + \bcum_t \right\| \\
&=& \left\|-(1-\beta_1)\orthpart_t\orthpartT_t\bcum_t + (1-\beta_1)\grad_{t+1} + \bcum_t\right\| \\
&=& \left\|(1-\beta_1)(\bcum_t - \orthpart_t\orthpartT_t\bcum_t) + (1-\beta_1)\grad_{t+1} + \beta_1\bcum_t\right\| \\
&\le& (1-\beta_1) \|\bcum_t - \orthpart_t\orthpartT_t\bcum_t\| + (1-\beta_1)\|\grad_{t+1}\| + \beta_1\|\bcum_t\| \\
&\le& \left((1-\beta_1)q_r + \beta_1 \right) \|\bcum_t\| + (1-\beta_1)\|\grad_{t+1}\| \\
&=& \tilde{q}_r \|\bcum_t\| + (1-\beta_1)\|\grad_{t+1}\| \\
&=& \tilde{q}_r^t \|\bcum_1\| + (1-\beta_1)\sum_{\tau=2}^{t+1} \tilde{q}_r^{t+1-\tau}\|\grad_{\tau}\|
= (1-\beta_1)\sum_{\tau=1}^{t+1} \tilde{q}_r^{t+1-\tau}\|\grad_{\tau}\|.
\end{eqnarray*}

Using the bounded gradient assumption, we get
$$
\|\bcum_{t}\| \le (1-\beta_1)G\sum_{\tau=1}^{t} \tilde{q}_r^{t-\tau} \le \frac{(1-\beta_1)G}{1 - \tilde{q}_r} = \frac{G}{1-q_r}.
$$

To derive the bound with expectation, we apply Cauchy-Schwartz inequality and the bounded variance assumption:
\begin{eqnarray*}
\sum_{t=1}^T \E\left[\|\bcum_{t}\|^2\right]
&\le& (1-\beta_1)^2 \sum_{t=1}^T \E\left[ \left(\sum_{\tau=1}^{t} \tilde{q}_r^{t-\tau}\|\grad_{\tau}\| \right)^2 \right] \\
&\le& (1-\beta_1)^2 \sum_{t=1}^T \E\left[ \left(\sum_{\tau=1}^{t} \tilde{q}_r^{t-\tau} \right) \left(\sum_{\tau=1}^{t} \tilde{q}_r^{t-\tau}\|\grad_{\tau}\|^2 \right) \right] \\
&\le& \frac{(1-\beta_1)^2}{1-\tilde{q}_r} \sum_{t=1}^T\sum_{\tau=1}^{t} \tilde{q}_r^{t-\tau} \E\left[\|\grad_{\tau}\|^2 \right] \\
&\le& \frac{(1-\beta_1)^2}{(1-\tilde{q}_r)^2} \sum_{t=1}^T \E\left[\|\grad_{t}\|^2 \right] \\
&=& \frac{1}{(1-q_r)^2} \sum_{t=1}^{T} \E\left[\|\grad_{t} - \nabla f(\theta_t) + \nabla f(\theta_t)\|^2 \right] \\
&\le& \frac{1}{(1-q_r)^2} \sum_{t=1}^{T} \left( \sigma^2 + \E\left[\|\nabla f(\theta_{t})\|^2\right] \right) \\
&\le& \frac{T\sigma^2}{(1-q_r)^2} + \frac{1}{(1-q_r)^2} \sum_{t=1}^{T} \E\left[\|\nabla f(\theta_{t})\|^2\right] \\
\end{eqnarray*}

To bound the norm of the error $\|\error_t\|$, notice that
\begin{equation}\label{bound-error-via-bcum}
\|\error_{t+1}\| = \frac{1}{1-\beta_1}\|\bcum_t - \orthpart_t\orthpartT_t\bcum_t\| \le \frac{q_r}{1-\beta_1}\|\bcum_t\| \le \frac{q_r G}{(1-\beta_1)(1-q_r)}.
\end{equation}
To get the bound with expectation, we apply previous inequality on $\bcum_t$ and get
\begin{eqnarray*}
\sum_{t=1}^T \E\left[\|\error_{t}\|^2\right]
&=& \sum_{t=1}^{T-1} \E\left[\|\error_{t+1}\|^2\right] \\
&\le& \sum_{t=1}^{T-1} \frac{q_r^2}{(1-\beta_1)^2} \E\left[\|\bcum_t\|^2\right] \\
&\le& \frac{q_r^2}{(1-\beta_1)^2(1-q_r)^2} \left( T\sigma^2 + \sum_{t=1}^{T} \E\left[\|\nabla f(\theta_{t})\|^2\right] \right)
\end{eqnarray*}
The fifth bound \eqref{bound-perturb} follows from the triangle inequality and combining the obtained two bounds. For the last bound with expectation, we have
\begin{eqnarray*}
\sum_{t=1}^{T} \E\left[ \left\|\frac{\beta_1}{1-\beta_1}\bcum_{t-1} + (1-\beta_1)\error_{t} \right\|^2 \right]
&\le& \sum_{t=1}^{T} \left( \frac{\beta_1}{(1-\beta_1)^2} \E\left[\|\bcum_{t-1}\|^2\right] + (1-\beta_1) \E\left[\|\error_{t}\|^2\right] \right) \\
&\le& \frac{\beta_1}{(1-\beta_1)^2}\left( \frac{T\sigma^2}{(1-q_r)^2} + \frac{1}{(1-q_r)^2} \sum_{t=1}^{T} \E\left[\|\nabla f(\theta_{t})\|^2\right] \right) \\
&& \qquad + \frac{q_r^2}{(1-\beta_1)(1-q_r)^2} \left( T\sigma^2 + \sum_{t=1}^{T} \E\left[\|\nabla f(\theta_{t})\|^2\right] \right) \\
&=& \frac{\beta_1 + (1-\beta_1)q_r^2}{(1-\beta_1)^2(1-q_r)^2} \left( T\sigma^2 + \sum_{t=1}^{T} \E\left[\|\nabla f(\theta_{t})\|^2\right] \right).
\end{eqnarray*}
\end{proof}

\begin{Lemma}\label{lem:bound-perturb-exp-weight}
If $\gamma<1$ and $1 - \gamma \le \frac{1}{2}(1-\beta_1)(1-q_r)$, then
\begin{equation}\label{bound-perturb-exp-weight}
\sum_{t=1}^{T} \gamma^{T-t}\E\left[ \left\|\frac{\beta_1}{1-\beta_1}\bcum_{t-1} + (1-\beta_1)\error_{t} \right\|^2 \right]
\le 2C_2 \left( \frac{\sigma^2}{1-\gamma} + \sum_{t=1}^{T} \gamma^{T-t}\E\left[\|\nabla f(\theta_{t})\|^2\right] \right),
\end{equation}
where $C_2 = \frac{\beta_1 + (1-\beta_1)q_r^2}{(1-\beta_1)^2(1-q_r)^2}$.
\end{Lemma}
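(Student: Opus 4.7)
My plan is to adapt the proof of equation \eqref{bound-perturb-exp} from Lemma \ref{lem:bound-perturb} while tracking the geometric weight $\gamma^{T-t}$ through every step. First I would apply convexity of $\|\cdot\|^2$ to separate the two summands, writing $\|\tfrac{\beta_1}{1-\beta_1}\bcum_{t-1} + (1-\beta_1)\error_t\|^2 \le \tfrac{\beta_1}{(1-\beta_1)^2}\|\bcum_{t-1}\|^2 + (1-\beta_1)\|\error_t\|^2$, then absorb the error term into $\|\bcum_{t-1}\|^2$ via the bound $\|\error_t\| \le \tfrac{q_r}{1-\beta_1}\|\bcum_{t-1}\|$ already derived as \eqref{bound-error-via-bcum}. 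This collapses the claim to bounding $\sum_{t=1}^T \gamma^{T-t}\|\bcum_{t-1}\|^2$ with leading constant $\tfrac{\beta_1 + (1-\beta_1)q_r^2}{(1-\beta_1)^2}$.

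Next, I would reuse the recursive bound $\|\bcum_{t-1}\| \le (1-\beta_1)\sum_{\tau=1}^{t-1}\tilde{q}_r^{t-1-\tau}\|\grad_\tau\|$ with $\tilde{q}_r \eqdef \beta_1 + (1-\beta_1)q_r$ from the proof of Lemma \ref{lem:bound-perturb}, apply Cauchy-Schwarz to square it, and swap the order of summation. The interesting object becomes the inner sum $S_\tau = \sum_{t=\tau+1}^{T}\gamma^{T-t}\tilde{q}_r^{t-1-\tau}$, which after reindexing equals $\gamma^{T-\tau-1}\sum_{s=0}^{T-\tau-1}(\tilde{q}_r/\gamma)^s$.

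The main obstacle is controlling this two-rate geometric sum, and this is precisely where the hypothesis $1-\gamma \le \tfrac{1}{2}(1-\beta_1)(1-q_r)$ is used. Since $1-\tilde{q}_r = (1-\beta_1)(1-q_r)$, the hypothesis yields $\gamma \ge \tfrac{1+\tilde{q}_r}{2} > \tilde{q}_r$, so the geometric series converges, and more sharply $\gamma - \tilde{q}_r \ge \tfrac{1}{2}(1-\tilde{q}_r)$. Hence $S_\tau \le \tfrac{\gamma^{T-\tau}}{\gamma - \tilde{q}_r} \le \tfrac{2\gamma^{T-\tau}}{1-\tilde{q}_r}$. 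The extra factor $2$ here is exactly what turns the constant $C_2$ from \eqref{bound-perturb-exp} into $2C_2$ in the weighted version.

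Finally, I would assemble the pieces: the product of constants simplifies to $\tfrac{2(\beta_1+(1-\beta_1)q_r^2)}{(1-\tilde{q}_r)^2} = \tfrac{2(\beta_1+(1-\beta_1)q_r^2)}{(1-\beta_1)^2(1-q_r)^2} = 2C_2$, giving $\sum_{t=1}^T\gamma^{T-t}\|\cdots\|^2 \le 2C_2\sum_{\tau=1}^T\gamma^{T-\tau}\|\grad_\tau\|^2$. Taking expectation and applying the unbiasedness of $\grad_\tau$ together with Assumption \ref{ass:var} gives $\E[\|\grad_\tau\|^2] \le \sigma^2 + \E[\|\nabla f(\theta_\tau)\|^2]$, and the geometric bound $\sum_{\tau=1}^T \gamma^{T-\tau} \le \tfrac{1}{1-\gamma}$ separates the variance term from the gradient-norm term, yielding \eqref{bound-perturb-exp-weight} exactly.
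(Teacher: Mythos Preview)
Your proposal is correct and follows essentially the same approach as the paper: the same convexity split, the same absorption of $\error_t$ via \eqref{bound-error-via-bcum}, the same unrolled bound on $\|\bcum_{t-1}\|$, the same Cauchy--Schwarz plus order swap, and the same use of the hypothesis to turn $\gamma-\tilde{q}_r\ge\tfrac{1}{2}(1-\tilde{q}_r)$ into the factor of $2$. The only cosmetic difference is that you merge the $\bcum$ and $\error$ contributions into a single $\|\bcum_{t-1}\|^2$ sum before handling the weights, whereas the paper bounds $\sum_t\gamma^{T-t}\E[\|\bcum_t\|^2]$ and $\sum_t\gamma^{T-t}\E[\|\error_t\|^2]$ separately and combines them at the end.
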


\begin{proof}
From the condition on $\gamma$ and the notation $\tilde{q}_r := (1-\beta_1)q_r + \beta_1 \in [q_r,1)$ from the previous proof, we have $1-\gamma \le \frac{1-\tilde{q}_r}{2}$ or equivalently $\gamma - \tilde{q}_r \ge \frac{1-\tilde{q}_r}{2}$. Using this inequality on $\gamma$ and the previous bound \eqref{bound-perturb-exp} for $\E\left[\|\bcum_{t}\|^2\right]$, we have
\begin{eqnarray*}
\sum_{t=1}^T \gamma^{T-t} \E\left[\|\bcum_{t}\|^2\right]
&\le& (1-\beta_1)^2 \sum_{t=1}^T \gamma^{T-t} \E\left[ \left(\sum_{\tau=1}^{t} \tilde{q}_r^{t-\tau}\|\grad_{\tau}\| \right)^2 \right] \\
&\le& (1-\beta_1)^2 \sum_{t=1}^T \gamma^{T-t} \E\left[ \left(\sum_{\tau=1}^{t} \tilde{q}_r^{t-\tau} \right) \left(\sum_{\tau=1}^{t} \tilde{q}_r^{t-\tau}\|\grad_{\tau}\|^2 \right) \right] \\
&\le& \frac{(1-\beta_1)^2}{1-\tilde{q}_r} \sum_{t=1}^T\sum_{\tau=1}^{t} \gamma^{T-t} \tilde{q}_r^{t-\tau} \E\left[\|\grad_{\tau}\|^2 \right] \\
&=& \frac{(1-\beta_1)^2}{1-\tilde{q}_r} \sum_{\tau=1}^T\sum_{t=\tau}^{T} \gamma^{T-t} \tilde{q}_r^{t-\tau} \E\left[\|\grad_{\tau}\|^2 \right] \\
&=& \frac{(1-\beta_1)^2}{1-\tilde{q}_r} \sum_{\tau=1}^T \left(\sum_{t=\tau}^{T} \gamma^{\tau-t} \tilde{q}_r^{t-\tau}\right) \gamma^{T-\tau}\E\left[\|\grad_{\tau}\|^2 \right] \\
&=& \frac{(1-\beta_1)^2}{1-\tilde{q}_r} \sum_{\tau=1}^T \frac{1 - (\tilde{q}_r/\gamma)^{T-\tau+1}}{1 - \tilde{q}_r/\gamma} \gamma^{T-\tau}\E\left[\|\grad_{\tau}\|^2 \right] \\
&\le& \frac{(1-\beta_1)^2}{(1-\tilde{q}_r)(1-\tilde{q}_r/\gamma)} \sum_{t=1}^T \gamma^{T-\tau}\E\left[\|\grad_{t}\|^2 \right] \\
&\le& \frac{2(1-\beta_1)^2}{(1-\tilde{q}_r)^2} \sum_{t=1}^T \gamma^{T-\tau}\E\left[\|\grad_{t}\|^2 \right] \\
&=& \frac{2}{(1-q_r)^2} \sum_{t=1}^{T} \gamma^{T-\tau}\E\left[\|\grad_{t} - \nabla f(\theta_t) + \nabla f(\theta_t)\|^2 \right] \\
&\le& \frac{2}{(1-q_r)^2} \sum_{t=1}^{T} \gamma^{T-\tau} \left( \sigma^2 + \E\left[\|\nabla f(\theta_{t})\|^2\right] \right) \\
&\le& \frac{2}{(1-q_r)^2} \left( \frac{\sigma^2}{1-\gamma} + \sum_{t=1}^{T} \gamma^{T-\tau}\E\left[\|\nabla f(\theta_{t})\|^2\right] \right).
\end{eqnarray*}
Using the bound \eqref{bound-error-via-bcum} for $\|\error_{t}\|^2$, we have
\begin{eqnarray*}
\sum_{t=1}^T \gamma^{T-\tau}\E\left[\|\error_{t}\|^2\right]
= \frac{2 q_r^2}{(1-\beta_1)^2(1-q_r)^2} \left( \frac{\sigma^2}{1-\gamma} + \sum_{t=1}^{T} \gamma^{T-\tau}\E\left[\|\nabla f(\theta_{t})\|^2\right] \right).
\end{eqnarray*}
Combining these two bounds we get
\begin{eqnarray*}
&& \sum_{t=1}^{T} \gamma^{T-t}\E\left[ \left\|\frac{\beta_1}{1-\beta_1}\bcum_{t-1} + (1-\beta_1)\error_{t} \right\|^2 \right] \\
&&\qquad\qquad \le \sum_{t=1}^{T} \gamma^{T-t} \left( \frac{\beta_1}{(1-\beta_1)^2} \E\left[\|\bcum_{t-1}\|^2\right] + (1-\beta_1) \E\left[\|\error_{t}\|^2\right] \right) \\
&&\qquad\qquad \le \frac{\beta_1}{(1-\beta_1)^2}\frac{2}{(1-q_r)^2} \left( \frac{\sigma^2}{1-\gamma} + \sum_{t=1}^{T} \gamma^{T-t}\E\left[\|\nabla f(\theta_{t})\|^2\right] \right) \\
&&\qquad\qquad \qquad + \frac{2(1-\beta_1)q_r^2}{(1-\beta_1)^2(1-q_r)^2} \left( \frac{\sigma^2}{1-\gamma} + \sum_{t=1}^{T} \gamma^{T-t}\E\left[\|\nabla f(\theta_{t})\|^2\right] \right) \\
&&\qquad\qquad = \frac{2(\beta_1 + (1-\beta_1)q_r^2)}{(1-\beta_1)^2(1-q_r)^2} \left( \frac{\sigma^2}{1-\gamma} + \sum_{t=1}^{T} \gamma^{T-t}\E\left[\|\nabla f(\theta_{t})\|^2\right] \right).
\end{eqnarray*}
\end{proof}

\begin{Lemma}  \label{lem:bound-precond}
For $\Delta\Gamma_t = \Gamma_{t-1} - \Gamma_t$ we have
\begin{align*}
	&\sum_{t=1}^T \|\Delta\Gamma_t\| \leq \frac{2}{\sqrt\epsilon},\quad  \sum_{t=1}^T \|\Delta\Gamma_t\|^2 \leq \frac{2}{\epsilon}.
\end{align*}
\end{Lemma}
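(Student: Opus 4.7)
The plan rests on two structural properties of the preconditioner $\Gamma_t$. \emph{(i)} A uniform operator-norm bound $\|\Gamma_t\| \le 1/\sqrt{\epsilon}$, stemming from the initialization $\vlowhat_0 = 0$ and the fact that the diagonal entries of $\sqrt{\vlowhat_t + \epsilon}$ are always at least $\sqrt{\epsilon}$. \emph{(ii)} A monotonicity property induced by the uniform AMSGrad-type normalization on Line \ref{line:AMSGrad}: because $\vlowhat_t = \max(\vlow_t,\|\vlowhat_{t-1}\|_{\max})$ forces $\|\vlowhat_t\|_{\min} \ge \|\vlowhat_{t-1}\|_{\max}$, the whole spectrum of $\vlowhat_t$ dominates that of $\vlowhat_{t-1}$, hence $\|\Gamma_t\| \le \|\Gamma_{t-1}\|$ for every $t\ge1$.

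First, I would extract a pointwise bound. Combining \emph{(i)} and \emph{(ii)}, the triangle inequality gives $\|\Delta\Gamma_t\| \le \|\Gamma_{t-1}\| \le 1/\sqrt{\epsilon}$, which provides a uniform control on each increment.

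Second, I would telescope the $\ell_1$ sum. In the idealized case where consecutive $\Gamma_t$ and $\Gamma_{t-1}$ are simultaneously diagonalizable, monotonicity implies $\Delta\Gamma_t \succeq 0$, so $\sum_{t=1}^T \Delta\Gamma_t = \Gamma_0 - \Gamma_T \preceq \Gamma_0$ and $\sum_{t=1}^T \|\Delta\Gamma_t\| \le \|\Gamma_0\| = 1/\sqrt{\epsilon}$. When the low-dimensional subspaces $\orthpart_t$ differ between iterations, I would decompose $\Delta\Gamma_t$ into a spectrum-change contribution (telescoping to $\|\Gamma_0\|-\|\Gamma_T\| \le 1/\sqrt{\epsilon}$) and a basis-change contribution (controlled by $\|\Gamma_{t-1}\|+\|\Gamma_t\|$). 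Reassembling the two parts while using $\|\Gamma_t\|\le 1/\sqrt{\epsilon}$ yields the advertised bound $\sum_{t=1}^T\|\Delta\Gamma_t\| \le 2/\sqrt{\epsilon}$, where the factor $2$ is exactly the cost of the basis-change term.

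Third, I would combine the two bounds through the elementary inequality $\sum_t a_t^2 \le (\max_t a_t)(\sum_t a_t)$ valid for any non-negative sequence:
\[
\sum_{t=1}^T \|\Delta\Gamma_t\|^2 \;\le\; \Bigl(\max_{1\le t\le T} \|\Delta\Gamma_t\|\Bigr)\sum_{t=1}^T \|\Delta\Gamma_t\| \;\le\; \frac{1}{\sqrt{\epsilon}}\cdot\frac{2}{\sqrt{\epsilon}} \;=\; \frac{2}{\epsilon}.
\]

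The main obstacle is the shift of the low-dimensional subspace from one iteration to the next: a direct telescoping of $\sum_t (\Gamma_{t-1}-\Gamma_t)$ in the PSD order, as done in standard AMSGrad analyses, is unavailable because $\Gamma_{t-1}$ and $\Gamma_t$ act on different $r$-dimensional subspaces and need not commute. Carefully splitting each $\Delta\Gamma_t$ into a spectrum-change part (amenable to telescoping) and a basis-change part (controlled by the uniform norm bound) is the crux of the argument; once it is done, the second bound follows immediately from the first by the max-times-sum trick.
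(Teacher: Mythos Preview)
Your plan contains a genuine gap in step two. The claim that, in the commuting case, $\Delta\Gamma_t\succeq 0$ and $\sum_t\Delta\Gamma_t=\Gamma_0-\Gamma_T$ give $\sum_t\|\Delta\Gamma_t\|\le\|\Gamma_0\|$ is false: the operator norm is \emph{sub}additive, so $\sum_t\lambda_{\max}(\Delta\Gamma_t)\ge\lambda_{\max}\bigl(\sum_t\Delta\Gamma_t\bigr)$, which is the wrong direction. A concrete counterexample even under the strong spectral monotonicity $\lambda_{\max}(\Gamma_t)\le\lambda_{\min}(\Gamma_{t-1})$: take $\Gamma_0=\mathrm{diag}(2,2)$, $\Gamma_1=\mathrm{diag}(0,2)$, $\Gamma_2=\mathrm{diag}(0,0)$; then $\|\Delta\Gamma_1\|+\|\Delta\Gamma_2\|=4>2=\|\Gamma_0\|$. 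Your ``basis-change'' term is also problematic: bounding it pointwise by $\|\Gamma_{t-1}\|+\|\Gamma_t\|\le 2/\sqrt\epsilon$ yields $O(T/\sqrt\epsilon)$ after summing, not $O(1/\sqrt\epsilon)$; nothing in the plan makes that contribution summable.

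What actually works---and what the paper does---is to avoid any matrix telescoping and reduce everything to \emph{scalars}. First, the uniform AMSGrad rule gives $\lambda_{\max}(\Gamma_t)\le\lambda_{\min}(\Gamma_{t-1})$, which implies $\Gamma_t\preceq\Gamma_{t-1}$ regardless of the bases $\orthfull_t,\orthfull_{t-1}$ (so the ``changing subspace'' obstruction you worry about is not there). Hence $\|\Delta\Gamma_t\|=\lambda_{\max}(\Gamma_{t-1}-\Gamma_t)$, and Weyl's inequality gives $\lambda_{\max}(\Gamma_{t-1}-\Gamma_t)\le\lambda_{\max}(\Gamma_{t-1})-\lambda_{\min}(\Gamma_t)$. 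Now use $\lambda_{\max}(\Gamma_t)\le\lambda_{\min}(\Gamma_{t-1})$ once more to bound this by $[\lambda_{\max}(\Gamma_{t-1})-\lambda_{\max}(\Gamma_t)]+[\lambda_{\min}(\Gamma_{t-1})-\lambda_{\min}(\Gamma_t)]$, two nonnegative scalar increments that each telescope to at most $1/\sqrt\epsilon$, giving the factor $2$. Your step three (the max-times-sum trick for the squared bound) is correct and in fact slightly slicker than the paper's $(a-b)^2\le a^2-b^2$ argument, but it only kicks in once the first bound is secured.
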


\begin{proof}
From the definitions of $\Gamma_t$ \eqref{def:Gamma_t} and $\vlowhat_t = \max(\vlow_t, \|\vlowhat_{t-1}\|_{\max})$ we have
\begin{eqnarray*}
\Gamma_t
&=& \orthfull_t \diag^{-1/2}(\vlowhat_t + \epsilon, \|\vlowhat_{t}\|_{\min} + \epsilon) \orthfullT_t \\
&\preceq& \orthfull_t \diag^{-1/2}(\|\vlowhat_{t}\|_{\min} + \epsilon) \orthfullT_t \\
&=& \frac{1}{\sqrt{\|\vlowhat_{t}\|_{\min} + \epsilon}}\orthfull_t \orthfullT_t
= \frac{1}{\sqrt{\|\vlowhat_{t}\|_{\min} + \epsilon}}I \\
&\preceq& \frac{1}{\sqrt{\|\vlowhat_{t-1}\|_{\max} + \epsilon}}I
= \frac{1}{\sqrt{\|\vlowhat_{t-1}\|_{\max} + \epsilon}}\orthfull_{t-1} \orthfullT_{t-1} \\
&=& \orthfull_{t-1} \diag^{-1/2}(\|\vlowhat_{t-1}\|_{\max} + \epsilon) \orthfullT_{t-1} \\
&\preceq& \orthfull_{t-1} \diag^{-1/2}(\vlowhat_{t-1} + \epsilon, \|\vlowhat_{t-1}\|_{\min} + \epsilon) \orthfullT_{t-1}
= \Gamma_{t-1},
\end{eqnarray*}
which implies that $\Delta\Gamma_t = \Gamma_{t-1} - \Gamma_t$ is positive semidefinite. Hence, $\|\Delta\Gamma_t\| = \lambda_{\max}(\Delta\Gamma_t)\ge0$. Using the convexity of $\lambda_{\max}$ over symmetric matrices, we get
\begin{eqnarray*}
\sum_{t=1}^T \|\Delta\Gamma_t\|
&=& \sum_{t=1}^T \lambda_{\max}(\Gamma_{t-1} - \Gamma_t) \\
&\le& \sum_{t=1}^T \lambda_{\max}(\Gamma_{t-1}) + \lambda_{\max}(-\Gamma_t)
= \sum_{t=1}^T \lambda_{\max}(\Gamma_{t-1}) - \lambda_{\min}(\Gamma_t) \\
&=& \sum_{t=1}^T \frac{1}{\sqrt{\|\vlowhat_{t-1}\|_{\min} + \epsilon}} - \frac{1}{\sqrt{\|\vlowhat_t\|_{\max} + \epsilon}} \\
&=& \sum_{t=1}^T \frac{1}{\sqrt{\|\vlowhat_{t-1}\|_{\min} + \epsilon}} - \frac{1}{\sqrt{\|\vlowhat_{t-1}\|_{\max} + \epsilon}} + \frac{1}{\sqrt{\|\vlowhat_{t-1}\|_{\max} + \epsilon}} - \frac{1}{\sqrt{\|\vlowhat_t\|_{\max} + \epsilon}} \\
&\le& \sum_{t=1}^T \frac{1}{\sqrt{\|\vlowhat_{t-1}\|_{\min} + \epsilon}} - \frac{1}{\sqrt{\|\vlowhat_{t}\|_{\min} + \epsilon}} + \sum_{t=1}^T \frac{1}{\sqrt{\|\vlowhat_{t-1}\|_{\max} + \epsilon}} - \frac{1}{\sqrt{\|\vlowhat_t\|_{\max} + \epsilon}} \\
&=& \frac{1}{\sqrt{\|\vlowhat_0\|_{\min} + \epsilon}} - \frac{1}{\sqrt{\|\vlowhat_T\|_{\min} + \epsilon}} + \frac{1}{\sqrt{\|\vlowhat_0\|_{\max} + \epsilon}} - \frac{1}{\sqrt{\|\vlowhat_T\|_{\max} + \epsilon}} \\
&\le& \frac{2}{\sqrt{\|\vlowhat_0\|_{\min} + \epsilon}} \le \frac{2}{\sqrt{\epsilon}}.
\end{eqnarray*}

For the second sum of squared norms, notice that for scalars $a\ge b\ge 0$, it holds that
\begin{equation*}
	(a-b)^2\leq (a-b)(a+b)=a^2-b^2.
\end{equation*}
Therefore, the above derivation can be repeated without the square roots as follows:
\begin{eqnarray*}
\sum_{t=1}^T \|\Delta\Gamma_t\|^2
&=& \sum_{t=1}^T \left( \lambda_{\max}(\Gamma_{t-1} - \Gamma_t) \right)^2 \\
&\le& \sum_{t=1}^T \left( \lambda_{\max}(\Gamma_{t-1}) + \lambda_{\max}(-\Gamma_t) \right)^2
= \sum_{t=1}^T \left( \lambda_{\max}(\Gamma_{t-1}) - \lambda_{\min}(\Gamma_t) \right)^2 \\
&\le& \sum_{t=1}^T \left( \lambda_{\max}(\Gamma_{t-1}) \right)^2 - \left( \lambda_{\min}(\Gamma_t) \right)^2 \\
&=& \sum_{t=1}^T \frac{1}{\|\vlowhat_{t-1}\|_{\min} + \epsilon} - \frac{1}{\|\vlowhat_t\|_{\max} + \epsilon} \\
&=& \sum_{t=1}^T \frac{1}{\|\vlowhat_{t-1}\|_{\min} + \epsilon} - \frac{1}{\|\vlowhat_{t-1}\|_{\max} + \epsilon} + \frac{1}{\|\vlowhat_{t-1}\|_{\max} + \epsilon} - \frac{1}{\|\vlowhat_t\|_{\max} + \epsilon} \\
&\le& \sum_{t=1}^T \frac{1}{\|\vlowhat_{t-1}\|_{\min} + \epsilon} - \frac{1}{\|\vlowhat_{t}\|_{\min} + \epsilon} + \sum_{t=1}^T \frac{1}{\|\vlowhat_{t-1}\|_{\max} + \epsilon} - \frac{1}{\|\vlowhat_t\|_{\max} + \epsilon} \\
&=& \frac{1}{\|\vlowhat_0\|_{\min} + \epsilon} - \frac{1}{\|\vlowhat_T\|_{\min} + \epsilon} + \frac{1}{\|\vlowhat_0\|_{\max} + \epsilon} - \frac{1}{\|\vlowhat_T\|_{\max} + \epsilon} \\
&\le& \frac{2}{\|\vlowhat_0\|_{\min} + \epsilon} \le \frac{2}{\epsilon},
\end{eqnarray*}
which completes the proof.
\end{proof}

\begin{Lemma} \label{lem:bound-v}
	For all iterates $t\ge1$ the following bound holds
	$$\|\vlowhat_{t}\|_{\max} \le \frac{1+\beta_2}{1-\beta_2}\frac{(1 - \beta_1(1-q_r))^2}{(1-\beta_1)^2(1-q_r)^2}G^2.$$
\end{Lemma}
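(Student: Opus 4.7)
Denote $K := \left(\tfrac{1-\beta_1(1-q_r)}{(1-\beta_1)(1-q_r)}\right)^2 G^2$ for the factor in the target bound, so the claim reads $\|\vlowhat_t\|_\max \le \tfrac{1+\beta_2}{1-\beta_2} K$. Since the AMSGrad-type normalization $\vlowhat_t = \max(\vlow_t,\|\vlowhat_{t-1}\|_\max)$ preserves any uniform bound coordinate-wise, it suffices to prove $\|\vlow_s\|_\max \le \tfrac{1+\beta_2}{1-\beta_2} K$ for every $s$. The plan is to derive a linear recursion for $V_t := \|\vlow_t\|_\max$ by separately controlling the two ingredients of the update $\vlow_t = \beta_2\vlowpre{t} + (1-\beta_2)\acumlow_t^2$.

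The first ingredient is easy. Combining Lemma \ref{lem:bound-perturb} with the triangle inequality gives $\|\acum_t\|_2 \le G + \tfrac{q_r G}{(1-\beta_1)(1-q_r)} = \sqrt{K}$, and since $\orthpartT_t$ has orthonormal rows, $\|\acumlow_t\|_\infty \le \|\acumlow_t\|_2 \le \|\acum_t\|_2 \le \sqrt{K}$, whence $\|\acumlow_t^2\|_\max \le K$. I would record along the way the \emph{bias-corrected} bound $\|\mlow_t\|_2 \le \sqrt{K}(1-\beta_1^t)$, obtained by a one-line induction on $\mlow_t = \beta_1\orthpartT_t\orthpart_{t-1}\mlow_{t-1} + (1-\beta_1)\acumlow_t$ using $\|\orthpartT_t\orthpart_{t-1}\|_{\mathrm{op}} \le 1$ and $\|\acumlow_t\|_2 \le \sqrt{K}$.

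The main step is to bound $\|\vlowpre{t}\|_\max$. Writing $P := \orthpartT_t\orthpart_{t-1}$ and expanding the formula from Algorithm \ref{alg:LDAdam_both} with the bias-correction factors, one gets entrywise
\begin{equation*}
(\vlowpre{t})_i = \left|\sum_j P_{ij}^2\, \vlow_{t-1,j} \;-\; \frac{1-\beta_2^{t-1}}{(1-\beta_1^{t-1})^2}\left(\sum_j P_{ij}^2\, \mlow_{t-1,j}^2 - \Big(\sum_j P_{ij}\,\mlow_{t-1,j}\Big)^2\right)\right|.
\end{equation*}
The triangle inequality bounds the first sum by $V_{t-1}$ (using $\sum_j P_{ij}^2 \le 1$). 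For the bracketed cross-term, the two non-negative quantities $\sum_j P_{ij}^2\mlow_{t-1,j}^2$ and $(\sum_j P_{ij}\mlow_{t-1,j})^2$ are each at most $\|\mlow_{t-1}\|_2^2$ (the first since $\sum_j P_{ij}^2 \le 1$, the second by Cauchy--Schwarz with $\|P_{i,\cdot}\|_2 \le 1$), so the absolute value of the cross-term is at most $\|\mlow_{t-1}\|_2^2 \le K(1-\beta_1^{t-1})^2$ by the bias-corrected momentum bound. The pre-factor $(1-\beta_2^{t-1})/(1-\beta_1^{t-1})^2$ then cancels the $(1-\beta_1^{t-1})^2$ exactly, leaving $(1-\beta_2^{t-1})K \le K$. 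Hence $\|\vlowpre{t}\|_\max \le V_{t-1} + K$, which plugged into the $\vlow_t$-update yields the scalar recursion $V_t \le \beta_2 V_{t-1} + K$; summing the geometric series from $\vlow_0 = 0$ gives $V_t \le K/(1-\beta_2) \le (1+\beta_2)K/(1-\beta_2)$, as claimed.

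The main obstacle is this third step: both the absolute value in the definition of $\vlowpre{t}$ and the denominator $(1-\beta_1^{t-1})^2$, which a-priori could blow up for small $t$, require care. The key observation that resolves both issues simultaneously is that the \emph{bias-corrected} momentum bound $\|\mlow_{t-1}\|_2 \le \sqrt{K}(1-\beta_1^{t-1})$ -- and not the cruder $\|\mlow_{t-1}\|_2 \le \sqrt{K}$ -- is exactly tight enough to cancel the $(1-\beta_1^{t-1})^2$ factor, leaving a clean constant multiple of $K$.
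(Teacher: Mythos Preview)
Your proposal is correct and follows essentially the same structure as the paper's proof: bound $\|\acumlow_t\|$ via Lemma~\ref{lem:bound-perturb}, establish the bias-corrected momentum bound $\|\mlow_t\|\le (1-\beta_1^t)\sqrt{K}$ by induction, control $\|\vlowpre{t}\|$ in terms of $\|\vlow_{t-1}\|$ plus a constant multiple of $K$, and unroll the resulting linear recursion. The only tactical differences are that you work directly with $\|\cdot\|_{\max}$ (the paper passes through $\|\cdot\|_1$ and then uses $\|\cdot\|_{\max}\le\|\cdot\|_1$), and you bound the bracketed momentum cross-term $|A-B|$ as a single quantity by $\|\mlow_{t-1}\|_2^2$, whereas the paper bounds $A$ and $B$ separately, picking up a factor of $2$; consequently your recursion is $V_t\le\beta_2 V_{t-1}+K$ rather than the paper's $\|\vlow_t\|_1\le\beta_2\|\vlow_{t-1}\|_1+(1+\beta_2)K$, and you actually obtain the slightly sharper bound $K/(1-\beta_2)$ before relaxing to the stated one.
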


\begin{proof}
First, let us bound the $\acumlow_t$ term using Lemma \ref{lem:bound-perturb} and Assumption \ref{ass:boundgrad}.
$$
\|\acumlow_t\| = \|\orthpartT_t\acum_t\| \le \|\acum_t\| \le \|\error_t\| + \|\grad_t\| \le \frac{1 - (1-q_r)\beta_1}{(1-\beta_1)(1-q_r)}G =: CG.
$$
Next, we bound momentum $\mlow_t$:
\begin{eqnarray*}
\|\mlow_t\|
&\le& \beta_1\|\mlowpre{t}\| + (1-\beta_1)\|\acumlow_t\| \\
&\le& \beta_1\|\mlow_{t-1}\| + (1-\beta_1)CG \\
&\le& \beta_1^t\|\mlow_0\| + (1-\beta_1)CG\sum_{\tau=0}^{t-1} \beta_1^{\tau}
= (1-\beta_1^t)CG.
\end{eqnarray*}
Next, we bound the intermediate term $\|\vlowpre{t}\|_{1}$. Note that by the triangle inequality we have the following direct bound for it:
$$
\left\|\left(\orthpartT_t\orthpart_{t-1}\right)^2\vlow_{t-1}\right\|_{1}
+ \left\|\left(\orthpartT_t\orthpart_{t-1}\right)^2 \left(\frac{\mlow_{t-1}}{1-\beta_1^{t-1}}\right)^2\right\|_{1}
+ \left\|\left(\orthpartT_t\orthpart_{t-1} \cdot \frac{\mlow_{t-1}}{1-\beta_1^{t-1}}\right)^2\right\|_{1}
$$
Now let us bound each term separately. For the first term we have
\begin{eqnarray*}
\left\|\left(\orthpartT_t\orthpart_{t-1}\right)^2\vlow_{t-1}\right\|_{1}
&=& \sum_{i=1}^r \sum_{j=1}^r \left\langle \orthpart_t[:,i], \orthpart_{t-1}[:,j] \right\rangle^2 \vlow_{t-1,j} \\
&\le& \sum_{j=1}^r\vlow_{t-1,j} \cdot \max_{1\le j\le r} \sum_{i=1}^r \left\langle \orthpart_t[:,i], \orthpart_{t-1}[:,j] \right\rangle^2  \\
&=& \|\vlow_{t-1}\|_{1} \max_{1\le j\le r} \left\|\orthpartT_{t}\orthpart_{t-1}[:,j]  \right\|^2 \\
&\le& \|\vlow_{t-1}\|_{1} \max_{1\le j\le r} \left\|\orthpart_{t-1}[:,j] \right\|^2
= \|\vlow_{t-1}\|_{1},
\end{eqnarray*}
since columns $\orthpart_t[:,i]$ and $\orthpart_{t-1}[:,j]$ have unit length by construction. Similarly, for the second term we have
\begin{eqnarray*}
\left\|\left(\orthpartT_t\orthpart_{t-1}\right)^2 \left(\frac{\mlow_{t-1}}{1-\beta_1^{t-1}}\right)^2\right\|_{1}
&=& \sum_{i=1}^r \sum_{j=1}^r \left\langle \orthpart_t[:,i], \orthpart_{t-1}[:,j] \right\rangle^2 \left(\frac{\mlow_{t-1,j}}{1-\beta_1^{t-1}}\right)^2 \\
&\le& \sum_{j=1}^r \left(\frac{\mlow_{t-1,j}}{1-\beta_1^{t-1}}\right)^2 \cdot \max_{1\le j\le r}\sum_{i=1}^r \left\langle \orthpart_t[:,i], \orthpart_{t-1}[:,j] \right\rangle^2 \\
&=& \left\|\frac{\mlow_{t-1}}{1-\beta_1^{t-1}}\right\|^2 \max_{1\le j\le r} \left\|\orthpartT_{t}\orthpart_{t-1}[:,j]  \right\|^2
\le \left\|\frac{\mlow_{t-1}}{1-\beta_1^{t-1}}\right\|^2 
\le C^2G^2
\end{eqnarray*}
Finally, for the third term we have
\begin{eqnarray*}
\left\|\left(\orthpartT_t\orthpart_{t-1} \cdot \frac{\mlow_{t-1}}{1-\beta_1^{t-1}}\right)^2\right\|_{1}
= \left\|\orthpartT_t\orthpart_{t-1} \cdot \frac{\mlow_{t-1}}{1-\beta_1^{t-1}}\right\|^2
\le \left\|\frac{\mlow_{t-1}}{1-\beta_1^{t-1}} \right\|^2
\le C^2G^2.
\end{eqnarray*}
Combining all three bounds together, we arrive
$$
\|\vlowpre{t}\|_{1} \le \|\vlow_{t-1}\|_{1} + 2C^2G^2.
$$
From this we get the bound for $\vlow_t$ using the initialization $\vlow_0=0$:
\begin{eqnarray*}
\|\vlow_{t}\|_{\max} \le \|\vlow_{t}\|_{1}
&\le& \beta_2\|\vlowpre{t}\|_{1} + (1-\beta_2)\|\acumlow_t\|^2 \\
&\le& \beta_2 (\|\vlow_{t-1}\|_{1} + 2C^2G^2) + (1-\beta_2)C^2G^2 \\
&\le& \beta_2 \|\vlow_{t-1}\|_{1} + (1+\beta_2)C^2G^2 \\
&\le& \beta_2^t \|\vlow_0\|_{1} + (1+\beta_2)C^2G^2\sum_{\tau=0}^{t-1} \beta_2^\tau
\le \frac{1+\beta_2}{1-\beta_2}C^2G^2.
\end{eqnarray*}
Hence, using the update rule of $\vlowhat_t$ and initialization $\vlowhat_0=0$, we conclude
$$
\|\vlowhat_{t}\|_{\max}
\le \max( \|\vlow_{t}\|_{\max},  \|\vlowhat_{t-1}\|_{\max})
\le \frac{1+\beta_2}{1-\beta_2}C^2G^2
=   \frac{1+\beta_2}{1-\beta_2}\frac{(1 - (1-q_r)\beta_1)^2}{(1-\beta_1)^2(1-q_r)^2}G^2.
$$
\end{proof}

\subsection{Non-convex Analysis}\label{apx:non-cvx-thoery}

\begin{theorem}[Non-convex convergence rate]
	Let Assumptions \ref{ass:smooth}, \ref{ass:boundgrad} and \ref{ass:var} hold. Then, choosing step-size $\eta = \min(\eta_0, \frac{1}{\sqrt{T}})$ with $\eta_0=\frac{\epsilon}{4LC_0\sqrt{1+C_2}}$, Algorithm \ref{alg:LDAdam} satisfies
	\begin{eqnarray*}
		\frac{1}{T}\sum_{t=1}^T \E[\|\nabla f(\theta_t)\|^2]
		&\leq& 2C_0\left( \frac{f(\theta_1)-f^*}{\sqrt{T}}
        +\frac{L \sigma^2}{\epsilon\sqrt{T}} \right) \\
        && + 4C_0\left( \frac{f(\theta_1)-f^*}{2\eta_0 T}
        +\frac{L^2C_0C_2\sigma^2}{2\epsilon^2 T}
        +\frac{(1+C_1)G^2}{\sqrt\epsilon T}
        +\frac{(1+2C_1)C_1LG^2}{\epsilon T^{3/2}} \right),
	\end{eqnarray*}
	with constants
	$\squeeze C_0\eqdef \sqrt{\frac{1+\beta_2}{1-\beta_2}\frac{(1 - \beta_1(1-q_r))^2}{(1-\beta_1)^2(1-q_r)^2}G^2 + \epsilon},\; C_1 \eqdef \frac{\beta_1 + (1-\beta_1)q_r}{(1-\beta_1)(1-q_r)},\; C_2 \eqdef \frac{\beta_1 + (1-\beta_1)q_r^2}{(1-\beta_1)^2(1-q_r)^2} $.
\end{theorem}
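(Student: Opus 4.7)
The plan is a virtual-iterate (perturbed-iterate) analysis tailored to the low-rank adaptive setting with error feedback. I would first extend the rank-$r$ preconditioner $\orthpart_t\mathrm{Diag}(1/\sqrt{\vlowhat_t+\epsilon})\orthpartT_t$ to the full-space operator $\Gamma_t\in\R^{d\times d}$ used in Lemma~\ref{lem:bound-precond} by padding the complementary eigenvalues with $1/\sqrt{\|\vlowhat_t\|_{\min}+\epsilon}$. The model update in Algorithm~\ref{alg:LDAdam_both} then reads $\theta_{t+1}=\theta_t-\eta\Gamma_t\mhigh_t$ where $\mhigh_t\eqdef\orthpart_t\mlow_t$ (the padding is invisible to the update since $\mhigh_t\in\mathrm{range}(\orthpart_t)$), while $\Gamma_t$ is now symmetric positive definite and monotone non-increasing in the Loewner order---this is exactly the AMSGrad-type property established in Lemma~\ref{lem:bound-precond}. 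Using the identities $\mhigh_t=\orthpart_t\orthpartT_t\bcum_t$ and $(1-\beta_1)\error_{t+1}=(I-\orthpart_t\orthpartT_t)\bcum_t$, both immediate from the algorithm's definition of $\error_{t+1}$, I would introduce the virtual sequence
\begin{equation*}
\tilde\theta_t \eqdef \theta_t - \tfrac{\eta}{1-\beta_1}\Gamma_{t-1}\bigl(\beta_1\mhigh_{t-1}+(1-\beta_1)\error_t\bigr),
\end{equation*}
and verify by direct computation that
\begin{equation*}
\tilde\theta_{t+1}-\tilde\theta_t = -\eta\,\Gamma_t\grad_t + \eta(\Gamma_{t-1}-\Gamma_t)p_t,\qquad p_t\eqdef\tfrac{\beta_1}{1-\beta_1}\mhigh_{t-1}+\error_t.
\end{equation*}
Up to a drift driven by $\Delta\Gamma_t=\Gamma_{t-1}-\Gamma_t$, the virtual iterates therefore evolve as an adaptive SGD step on the \emph{uncompressed} stochastic gradient; this is the reformulation that unlocks the analysis.

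\textbf{Descent and summation.} Next, I apply $L$-smoothness at $\tilde\theta_t$, split $\nabla f(\tilde\theta_t)=\nabla f(\theta_t)+[\nabla f(\tilde\theta_t)-\nabla f(\theta_t)]$ and control the mismatch by $L\|\tilde\theta_t-\theta_t\|\le(\eta L/\sqrt\epsilon)\|p_t\|$, take conditional expectation so that $\E[\grad_t]=\nabla f(\theta_t)$, and lower-bound $\langle\nabla f(\theta_t),\Gamma_t\nabla f(\theta_t)\rangle\ge\|\nabla f(\theta_t)\|^2/C_0$ via $\lambda_{\min}(\Gamma_t)\ge1/\sqrt{\|\vlowhat_t\|_{\max}+\epsilon}\ge1/C_0$, where the bound on $\|\vlowhat_t\|_{\max}$ is Lemma~\ref{lem:bound-v}. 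Summing from $t=1$ to $T$ and telescoping $f(\tilde\theta_t)-f^*$ (with $\tilde\theta_1=\theta_1$ by initialization), three families of residuals remain on the right-hand side: (i) the quadratic descent terms $\eta^2\|\Gamma_t\grad_t\|^2\le(\eta^2/\epsilon)\|\grad_t\|^2$, which split under Assumption~\ref{ass:var} into an $L\sigma^2/\epsilon$ contribution and a $\sum_t\E\|\nabla f(\theta_t)\|^2$ contribution to be absorbed into the LHS; (ii) the virtual-shift penalties $L\|\tilde\theta_t-\theta_t\|^2\le(\eta^2L/\epsilon)\|p_t\|^2$, for which Lemma~\ref{lem:bound-perturb} supplies $\sum_t\E\|p_t\|^2\le C_2\Sigma_T$ with $\Sigma_T\eqdef T\sigma^2+\sum_t\E\|\nabla f(\theta_t)\|^2$; and (iii) the preconditioner-drift cross terms involving $\|\Delta\Gamma_t\|$ and $\|\Delta\Gamma_t\|^2$, which telescope to the absolute constants $2/\sqrt\epsilon$ and $2/\epsilon$ by Lemma~\ref{lem:bound-precond} and produce only $G^2$-scaled lower-order contributions. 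The boundary term $f(\tilde\theta_{T+1})-f^*$ is bounded using smoothness and the uniform bounds on $\bcum_T$ and $\error_{T+1}$ from Lemma~\ref{lem:bound-perturb}, accounting for the $\mathcal{O}(G^3/T)$ residual.

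\textbf{Closing the rate.} Collecting terms, the inequality has the schematic form
\begin{equation*}
\tfrac{\eta}{C_0}\sum_{t=1}^T\E\|\nabla f(\theta_t)\|^2 \le \bigl(f(\theta_1)-f^*\bigr) + \tfrac{\eta^2 L(1+C_2)}{\epsilon}\Sigma_T + \eta\cdot\mathcal{O}\!\bigl(G^2/\sqrt\epsilon\bigr) + \eta^2\cdot\mathcal{O}\!\bigl(G^3/\epsilon\bigr).
\end{equation*}
Expanding $\Sigma_T=T\sigma^2+\sum_t\E\|\nabla f(\theta_t)\|^2$ and choosing $\eta\le\eta_0=\epsilon/(4LC_0\sqrt{1+C_2})$ forces the coefficient of $\sum_t\E\|\nabla f(\theta_t)\|^2$ on the right to be at most half of the LHS coefficient, so that the gradient-norm sum can be absorbed and only the $T\sigma^2$ piece survives, yielding the $L\sigma^2/\epsilon$ leading coefficient. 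The final choice $\eta=\min(\eta_0,1/\sqrt T)$ and division by $\eta T/C_0$ recovers the stated bound: the $2C_0(f(\theta_1)-f^*+L\sigma^2/\epsilon)/\sqrt T$ term dominates, while the remaining $G$-scaled contributions decay at the faster $1/T$ rate because they are multiplied by the additional factor of $\eta$.

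\textbf{Main obstacle.} The technical crux is the simultaneous handling of the time-varying low-rank preconditioner and the momentum-error-feedback coupling. The drift $\Delta\Gamma_t$ forces an extra $(\Gamma_{t-1}-\Gamma_t)p_t$ term into the virtual update that must be telescoped---this is precisely what the uniform AMSGrad normalization in Algorithm~\ref{alg:LDAdam_both} (line~\ref{line:AMSGrad}) is designed to enable, via Lemma~\ref{lem:bound-precond}. At the same time, the virtual iterate must be chosen to cancel \emph{both} the projected-momentum and the error-buffer contributions at once---this is why the coefficient $\beta_1/(1-\beta_1)$ on $\mhigh_{t-1}$ appears inside $\tilde\theta_t$---so that the conditional residual step reduces cleanly to $-\eta\Gamma_t\nabla f(\theta_t)$. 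Without the contractive bounds of Lemma~\ref{lem:bound-perturb}, which yield $\sum_t\E\|p_t\|^2 = \mathcal{O}(C_2\Sigma_T)$ rather than a naive $\mathcal{O}(t\cdot G^2)$, the error-feedback residual would compound across iterations and destroy the $1/\sqrt T$ rate; these bounds are what ultimately route the compression effect into the multiplicative constant $C_0=\mathcal{O}(1/(1-q_r))$ instead of into the rate's exponent.
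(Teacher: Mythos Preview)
Your proposal is correct and follows essentially the same route as the paper: your virtual iterate $\tilde\theta_t=\theta_t-\eta\Gamma_{t-1}p_t$ coincides with the paper's $x_t$ via the identity $\tfrac{\beta_1}{1-\beta_1}\mhigh_{t-1}+\error_t=\tfrac{\beta_1}{1-\beta_1}\bcum_{t-1}+(1-\beta_1)\error_t$, the clean recursion $\tilde\theta_{t+1}-\tilde\theta_t=-\eta\Gamma_t\grad_t+\eta\Delta\Gamma_t p_t$ is exactly what the paper derives, and the subsequent descent decomposition, invocations of Lemmas~\ref{lem:bound-perturb}--\ref{lem:bound-v}, and step-size absorption all mirror the paper. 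One small caveat when you write it out: since $\Gamma_t$ depends on $\grad_t$ you cannot take $\E[\langle\nabla f(\theta_t),\Gamma_t\grad_t\rangle\,|\,\mathcal{F}_{t-1}]=\langle\nabla f(\theta_t),\Gamma_t\nabla f(\theta_t)\rangle$ directly---the paper (and your $\Delta\Gamma_t$ machinery) resolves this by first splitting $\Gamma_t=\Gamma_{t-1}-\Delta\Gamma_t$ so the leading factor is $\mathcal{F}_{t-1}$-measurable, and the boundary term is handled simply by $f(x_{T+1})\ge f^*$ rather than a separate smoothness estimate.
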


\begin{remark}
Further ignoring absolute constants, the bound becomes
\begin{equation*}
    \frac{C_0}{\sqrt{T}} \left( f(\theta_1)-f^* + \frac{L\sigma^2}{\epsilon} \right)
    + \frac{C_0}{T}
        \left(
              \frac{f(\theta_1)-f^*}{\eta_0}
            + \frac{L^2C_0C_2\sigma^2}{\epsilon^2}
            + \frac{C_1G^2}{\sqrt\epsilon}
            + \frac{C_1^2LG^2}{\epsilon\sqrt{T}}
        \right),
\end{equation*}
Let us assume that for the low-rank compression of the algorithm we have $1-q_r = \mathcal{O}(\frac{r}{d})$ (e.g., SVD option). Then $C_0 = \mathcal{O}(\frac{d}{r}G),\; C_1 = \mathcal{O}(\frac{d}{r}),\; C_2 = \mathcal{O}(\frac{d^2}{r^2})$ and $\frac{1}{\eta_0} = \mathcal{O}(\frac{d^2}{r^2}G)$. Plugging this asymptotic expressions into the bound and ignoring other parameters (e.g., $\sigma^2, L, \epsilon$), we get
\begin{equation*}
    \frac{d}{r} \frac{G}{\sqrt{T}}
    + \frac{d}{r} \frac{G}{T} \left(\frac{d^2 G}{r^2} + \frac{d^3 G}{r^3} + \frac{d G^2}{r} + \frac{d^2 G^2}{r^2\sqrt{T}} \right)
\end{equation*}
or equivalently
\begin{equation*}
    \frac{d}{r} \frac{G}{\sqrt{T}}
    + \left(\frac{d}{r} \right)^4 \frac{G^2}{T}
    + \left(\frac{d}{r} \right)^2 \frac{G^3}{T}
    + \left(\frac{d}{r} \right)^3 \frac{G^3}{T^{3/2}}.
\end{equation*}
\end{remark}

\begin{proof}
Let $\bar\Gamma_t := \orthpart_t \diag^{-1/2}(\vlowhat_t + \epsilon) \orthpartT_t$ be a preconditioning matrix and $\mhigh_t := \orthpart_t \mlow_t$ the exponential moving averages in the full space. With these notations, the update rule of the model becomes $\theta_{t+1} = \theta_{t} -  \eta_t \bar\Gamma_t \cdot \mhigh_t$. As it will be used later, we need to make sure that our preconditioning is positive definite. Due to the structure of $\bar\Gamma_t$, it is positive semi-definite only. To make it positive definite, notice that $\Gamma_t \orthpart_t = \bar\Gamma_t \orthpart_t$, where the full-rank preconditioner $\Gamma_t$ takes the form
\begin{equation}\label{def:Gamma_t}
\Gamma_t = \orthfull_t \diag^{-1/2}(\vlowhat_t + \epsilon, \|\vlowhat_t\|_{\min} + \epsilon) \orthfullT_t,
\end{equation}
where $\orthfull_t\in\R^{d\times d}$ is an orthogonal matrix with the same first $r$ columns as $\orthpart_t\in\R^{d\times r}$ and the diagonal matrix in the middle has been extended to meet the sizes of $\orthfull_t$ by adding the values $\|\vlowhat_{t}\|_{\min} + \epsilon$ on the diagonal. $\Gamma_t \orthpart_t = \bar\Gamma_t \orthpart_t$ comes from the observation that $\orthfullT_t \orthpart_t \in \R^{d\times r}$ is composed of two blocks: upper $r\times r$ block of the identity matrix and $(d-r)\times r$ block of the zero matrix. Hence, the added $(d-r)$ elements on the diagonal do not really affect, so does the last $(d-r)$ columns of $\orthfull_t$. Therefore, we can write the model update rule as
\begin{equation}\label{model-update-Gamma}
\theta_{t+1} = \theta_{t} -  \eta_t\Gamma_t \cdot \mhigh_t,
\end{equation}
with full-rank preconditioning $\Gamma_t$. Next, note that
\begin{equation}\label{bcum-update-new}
\bcum_t = \beta_1 M_{t-1} + (1-\beta_1)g_t + (1-\beta_1)e_t,
\end{equation}
\begin{equation}\label{mlowpre-update-new}
\mlowpre{t} = \orthpartT_t\mhigh_{t-1}.
\end{equation}
Then, for the low dimensional momentum and the erorr we get
\begin{equation}\label{mlow-update-new}
\mlow_t = \beta_1\mlowpre{t} + (1-\beta_1) \acumlow_t = \orthpartT_t(\beta_1\mhigh_{t-1} + (1-\beta_1)\grad_t + (1-\beta_1)\error_t) = \orthpartT_t \bcum_t,
\end{equation}
\begin{equation}\label{error-update-new}
(1-\beta_1)\error_{t+1} = (I - \orthpart_t\orthpartT_t)\bcum_t = \bcum_t - \mhigh_t.
\end{equation}

Letting $\bcum_0=0$, we define virtual iterates $x_t$ as follows:
	\begin{eqnarray}\label{def:virtual-iter}
	x_{t+1}
	&=& \theta_{t+1} - \eta\Gamma_t \left( (1-\beta_1)\error_{t+1} + \frac{\beta_1}{1-\beta_1}\bcum_t \right).
	\end{eqnarray}
	In particular, $x_1 = \theta_1$. Then, we derive the recurrence relation for the new sequence as follows:
	\begin{eqnarray*}
	x_{t+1}
	&=& \theta_{t+1} - \eta\Gamma_t \left( (1-\beta_1)\error_{t+1} + \frac{\beta_1}{1-\beta_1}\bcum_t \right) \\
	&\overset{\eqref{error-update-new}}{=}& \theta_t - \eta\Gamma_t M_t - \eta\Gamma_t \left( \bcum_t - \mhigh_t + \frac{\beta_1}{1-\beta_1}\bcum_t \right) \\
	&=& \theta_t - \frac{\eta}{1-\beta_1}\Gamma_t \bcum_t \\
	&\overset{\eqref{bcum-update-new}}{=}& \theta_t - \frac{\eta}{1-\beta_1}\Gamma_t \left(\beta_1\mhigh_{t-1} + (1-\beta_1)\grad_t + (1-\beta_1)\error_t \right) \\
	&\overset{\eqref{error-update-new}}{=}& \theta_t - \eta\Gamma_t \left(\frac{\beta_1}{1-\beta_1}(\bcum_{t-1} - (1-\beta_1)\error_t) + \error_t \right) - \eta\Gamma_t\grad_t \\
	&=& \theta_t - \eta\Gamma_t \left(\frac{\beta_1}{1-\beta_1}\bcum_{t-1} + (1-\beta_1)\error_t \right) - \eta\Gamma_t\grad_t \\
	&\overset{\eqref{def:virtual-iter}}{=}& x_t - \eta\Gamma_t\grad_t + \eta \Delta\Gamma_t \left(\frac{\beta_1}{1-\beta_1}\bcum_{t-1} + (1-\beta_1)\error_t \right),
	\end{eqnarray*}
	where $\Delta\Gamma_t := \Gamma_{t-1} - \Gamma_t$.
	
	Next we apply smoothness (Assumption~\ref{ass:smooth}) of the loss function $f$ over the iterates $x_t$. From the gradient Lipschitzness we have
	\begin{align*}
		f(x_{t+1})\leq f(x_t)+\langle \nabla f(x_t), x_{t+1}-x_t\rangle+\frac{L}{2}\| x_{t+1}-x_t\|^2.
	\end{align*}
 
Taking expectation, we obtain
	\begin{eqnarray}
		\E[f(x_{t+1})] - \E[f(x_t)]
		&\leq& -\eta\E\left[\left\langle \nabla f(x_t), \Gamma_t \grad_t \right\rangle\right] \nonumber\\
		&& + \eta \E\left[\left\langle \nabla f(x_t), \Delta\Gamma_t \left(\frac{\beta_1}{1-\beta_1}\bcum_{t-1} + (1-\beta_1)\error_t \right) \right\rangle\right] \nonumber\\
		&& + \frac{\eta^2L}{2} \E\left[\left\| \Gamma_t \grad_t - \Delta\Gamma_t \left(\frac{\beta_1}{1-\beta_1}\bcum_{t-1} + (1-\beta_1)\error_t \right) \right\|^2\right] \nonumber\\
		&=&\underbrace{-\eta \E\left[\left\langle \nabla f(\theta_t),  \Gamma_t \grad_t \right\rangle\right]}_{I}\\
		&&+\underbrace{\eta \E\left[\left\langle \nabla f(x_t), \Delta\Gamma_t \left(\frac{\beta_1}{1-\beta_1}\bcum_{t-1} + (1-\beta_1)\error_t \right) \right\rangle\right]}_{II} \nonumber\\
		&&+ \underbrace{\frac{\eta^2L}{2} \E\left[\left\| \Gamma_t \grad_t - \Delta\Gamma_t \left(\frac{\beta_1}{1-\beta_1}\bcum_{t-1} + (1-\beta_1)\error_t \right) \right\|^2\right]}_{III} \nonumber\\
		&&+ \underbrace{\eta\E\left[\left\langle \nabla f(\theta_t)-\nabla f(x_t), \Gamma_t \grad_t \right\rangle\right]}_{IV}, \label{eq0}
	\end{eqnarray}
	
	In the following, we bound all the four terms mentioned above.
	
	\textbf{Bounding term I.} Let $\|\Delta\Gamma\|$ be the operator norm (with respect to $\ell_2$ norm) of the matrix $\Delta\Gamma$. We have
	\begin{eqnarray}
		I&=&
		-\eta\E\left[\left\langle \nabla f(\theta_t), \Gamma_{t-1} \grad_t \right\rangle\right] - \eta\E\left[\left\langle \nabla f(\theta_t), \Delta\Gamma_t\grad_t\right\rangle\right] \nonumber\\
		&\leq& -\eta\E\left[\left\langle \nabla f(\theta_t), \Gamma_{t-1}\nabla f(\theta_t)\right\rangle\right]+\eta G^2\E[\|\Delta\Gamma_t\|].  \nonumber\\
		&\leq& -\eta\lambda_{\min}(\Gamma_{t-1})\E[\|\nabla f(\theta_t)\|^2]+\eta G^2\E[\|\Delta\Gamma_t\|] \nonumber\\
		&\leq& -\frac{\eta}{C_0}\E[\|\nabla f(\theta_t)\|^2]+\eta G^2\E[\|\Delta\Gamma_t\|], \label{eq:I}
	\end{eqnarray}
	where we use Assumption~\ref{ass:boundgrad} and Lemma~\ref{lem:bound-v} to bound
	$$
	\lambda_{\min}(\Gamma_{t-1})
	\ge \left(\|\vlowhat_{t-1}\|_{\max} + \epsilon\right)^{-1/2}
	\ge \left(\frac{1+\beta_2}{1-\beta_2}\frac{(1 - \beta_1(1-q_r))^2}{(1-\beta_1)^2(1-q_r)^2}G^2 + \epsilon\right)^{-1/2} = \frac{1}{C_0}.
	$$
	Note that the purpose of making $\Gamma$ matrix positive definite is to have negative term in \eqref{eq:I}.

	\textbf{Bounding term II.} Then we have
	\begin{align}
		II & \leq\eta\E\left[\left\langle \nabla f(\theta_t), \Delta\Gamma_t \left(\frac{\beta_1}{1-\beta_1}\bcum_{t-1} + (1-\beta_1)\error_t \right) \right\rangle\right] \nonumber\\
		&\quad + \eta\E\left[\left\langle \nabla f(x_t)-\nabla f(\theta_t), \Delta\Gamma_t \left(\frac{\beta_1}{1-\beta_1}\bcum_{t-1} + (1-\beta_1)\error_t \right) \right\rangle\right] \nonumber\\
		&\leq \eta\E\left[\|\nabla f(\theta_t)\| \left\|\Delta\Gamma_t \left(\frac{\beta_1}{1-\beta_1}\bcum_{t-1} + (1-\beta_1)\error_t \right) \right\|\right] \nonumber \\
		&\quad +\eta^2 L \E\left[\left\| \Gamma_t \left(\frac{\beta_1}{1-\beta_1}\bcum_{t-1} + (1-\beta_1)\error_t \right) \right\| \cdot \left\|\Delta\Gamma_t \left(\frac{\beta_1}{1-\beta_1}\bcum_{t-1} + (1-\beta_1)\error_t \right)\right\|\right] \nonumber\\
		&\overset{\eqref{bound-perturb}}{\leq} \eta C_1 G^2 \E[\|\Delta\Gamma_t\|] + \frac{\eta^2 C_1^2 LG^2}{\sqrt\epsilon} \E[\|\Delta\Gamma_t\|],  \label{eq:II}
	\end{align}
	where $C_1 = \frac{\beta_1 + (1-\beta_1)q_r}{(1-\beta_1)(1-q_r)}$ and we used the fact that the largest eigenvalue $\lambda_{\max}(\Gamma_t) = \|\Gamma_t\| = (\|\vlowhat_t\|_{\min} + \epsilon)^{-1/2} \le \epsilon^{-1/2}$. The second inequality is because of smoothness of $f(\theta)$, and the last inequality is due to Lemma~\ref{lem:bound-perturb}, Assumption~\ref{ass:boundgrad} and the property of norms.
	
	\textbf{Bounding term III.} This term can be bounded as follows:
	\begin{align}
		III & \leq \eta^2 L\E\left[\left\|\Gamma_t\grad_t \right\|^2\right] + \eta^2 L\E\left[\left\|\Delta\Gamma_t \left(\frac{\beta_1}{1-\beta_1}\bcum_{t-1} + (1-\beta_1)\error_t \right)\right\|^2\right] \nonumber\\
		&\leq \frac{\eta^2 L}{\epsilon}\E[\| g_t-\nabla f(\theta_t)+\nabla f(\theta_t)\|^2] + \eta^2 L\E\left[\left\|\Delta\Gamma_t \left(\frac{\beta_1}{1-\beta_1}\bcum_{t-1} + (1-\beta_1)\error_t \right)\right\|^2\right] \nonumber\\
		&\leq \frac{\eta^2 L}{\epsilon} \left(\E[\|\nabla f(\theta_t)\|^2] + \sigma^2\right) + \eta^2 C_1^2 LG^2 \E[\|\Delta\Gamma_t\|^2] \nonumber\\
		&\leq \frac{\eta^2 L}{\epsilon}\E[\|\nabla f(\theta_t)\|^2] + \frac{\eta^2 L\sigma^2}{\epsilon}+\eta^2 C_1^2 LG^2 \E[\|\Delta\Gamma_t\|^2],  \label{eq:III}
	\end{align}
	where we used Assumption~\ref{ass:var} that $\grad_t$ is unbiased with bounded variance $\sigma^2$.
	
	\textbf{Bounding term IV.} We have
	\begin{align}
		IV
		&= \eta\E\left[\left\langle \nabla f(\theta_t)-\nabla f(x_t), \Gamma_{t-1}\grad_t \right\rangle\right] + \eta\E\left[\left\langle \nabla f(\theta_t)-\nabla f(x_t), \Delta\Gamma_t\grad_t \right\rangle\right] \nonumber\\
		&\leq \eta\E\left[\left\langle \nabla f(\theta_t)-\nabla f(x_t), \Gamma_{t-1}\nabla f(\theta_t) \right\rangle\right] + \eta^2 L\E\left[\left\| \Gamma_t \left(\frac{\beta_1}{1-\beta_1}\bcum_{t-1} + (1-\beta_1)\error_t \right) \right\| \|\Delta\Gamma_t \grad_t\|\right] \nonumber\\
		&\overset{(a)}{\leq} \frac{\eta \rho}{2\epsilon}\E[\|\nabla f(\theta_t)\|^2]+\frac{\eta}{2\rho}\mathbb E[\|\nabla f(\theta_t)-\nabla f(x_t)\|^2]+\frac{\eta^2 C_1LG^2}{\sqrt\epsilon} \mathbb E[\|\Delta\Gamma_t\|]  \nonumber\\
		&\overset{(b)}{\leq} \frac{\eta \rho}{2\epsilon}\E[\|\nabla f(\theta_t)\|^2]+\frac{\eta^3 L^2}{2\rho}\E\left[\left\| \Gamma_t \left(\frac{\beta_1}{1-\beta_1}\bcum_{t-1} + (1-\beta_1)\error_t \right) \right\|^2\right] + \frac{\eta^2 C_1LG^2}{\sqrt\epsilon} \E[\|\Delta\Gamma_t\|] \nonumber \\
        &\leq \frac{\eta \rho}{2\epsilon}\E[\|\nabla f(\theta_t)\|^2]+\frac{\eta^3 L^2}{2\rho\epsilon} \E\left[\left\|\frac{\beta_1}{1-\beta_1}\bcum_{t-1} + (1-\beta_1)\error_t \right\|^2\right] +\frac{\eta^2L C_1G^2}{\sqrt\epsilon} \E[\|\Delta\Gamma_t\|],  \label{eq:IV}
	\end{align}
	where (a) is due to Young's inequality and (b) is based on Assumption~\ref{ass:smooth}. Now integrating \eqref{eq:I}, \eqref{eq:II}, \eqref{eq:III}, \eqref{eq:IV} into \eqref{eq0},
	
	\begin{eqnarray*}
		I &\leq& -\eta\lambda_{\min}(\Gamma_{t-1})\E[\|\nabla f(\theta_t)\|^2]+\eta G^2\E[\|\Delta\Gamma_t\|] \\
		II &\le& \eta C_1 G^2 \E[\|\Delta\Gamma_t\|] + \frac{\eta^2 C_1^2 LG^2}{\sqrt\epsilon} \E[\|\Delta\Gamma_t\|] \\
		III &\le& \frac{\eta^2 L}{\epsilon}\E[\|\nabla f(\theta_t)\|^2] + \frac{\eta^2 L\sigma^2}{\epsilon}+\eta^2 C_1^2 LG^2 \E[\|\Delta\Gamma_t\|^2]\\
		IV &\le& \frac{\eta \rho}{2\epsilon}\E[\|\nabla f(\theta_t)\|^2]+ \frac{\eta^3 L^2}{2\rho\epsilon} \E\left[\left\|\frac{\beta_1}{1-\beta_1}\bcum_{t-1} + (1-\beta_1)\error_t \right\|^2\right] +\frac{\eta^2L C_1G^2}{\sqrt\epsilon} \E[\|\Delta\Gamma_t\|],
	\end{eqnarray*}

	and taking the telescoping summation over $t=1,\dots,T$, we obtain
	\begin{align*}
		&\E[f(x_{T+1})-f(x_1)] \\
		&\leq \left( -\frac{\eta}{C_0}+\frac{\eta^2 L}{\epsilon}+\frac{\eta \rho}{2\epsilon}\right)\sum_{t=1}^T\E[\|\nabla f(\theta_t)\|^2]
		+\frac{T\eta^2 L\sigma^2}{\epsilon}
		+ \frac{\eta^3 L^2}{2\rho\epsilon} \sum_{t=1}^T\E\left[\left\|\frac{\beta_1}{1-\beta_1}\bcum_{t-1} + (1-\beta_1)\error_t \right\|^2\right] \\
		&\quad + \left(\eta(1+C_1)G^2 + \frac{\eta^2 (1+C_1)C_1LG^2}{\sqrt\epsilon}\right)\sum_{t=1}^T\E[\|\Delta\Gamma_t\|]+\eta^2C_1^2LG^2 \sum_{t=1}^T\E[\|\Delta\Gamma_t\|^2] \\
        &\overset{\eqref{bound-perturb-exp}}{\leq} \left( -\frac{\eta}{C_0}+\frac{\eta^2 L}{\epsilon}+\frac{\eta \rho}{2\epsilon} + \frac{\eta^3 L^2 C_2}{2\rho\epsilon}\right)\sum_{t=1}^T\E[\|\nabla f(\theta_t)\|^2]
        +\frac{T\eta^2 L\sigma^2}{\epsilon}
        + \frac{T\eta^3 L^2 C_2 \sigma^2}{2\rho\epsilon} \\
        &\quad + \left(\eta(1+C_1)G^2 + \frac{\eta^2 (1+C_1)C_1LG^2}{\sqrt\epsilon}\right)\sum_{t=1}^T\E[\|\Delta\Gamma_t\|]+\eta^2C_1^2LG^2 \sum_{t=1}^T\E[\|\Delta\Gamma_t\|^2],
	\end{align*}
	where we used \eqref{bound-perturb-exp} of Lemma \ref{lem:bound-perturb} with constant $C_2 = \frac{\beta_1 + (1-\beta_1)q_r^2}{(1-\beta_1)^2(1-q_r)^2}$. Choosing $\rho=\frac{\epsilon}{2C_0}$ and $\eta\leq \eta_0 \eqdef \frac{\epsilon}{4L C_0 \sqrt{1+C_2}}$ and using Lemma \ref{lem:bound-precond}, we get
	\begin{align*}
		\E[f(x_{T+1})-f(x_1)]
		&\leq -\frac{\eta}{2C_0}\sum_{t=1}^T\E[\|\nabla f(\theta_t)\|^2]
		+\frac{T\eta^2 L \sigma^2}{\epsilon}
		+\frac{T\eta^3 L^2 C_0C_2 \sigma^2}{\epsilon^2} \\
		&\quad + \frac{2\eta (1+C_1)G^2}{\sqrt\epsilon}
		+ \frac{2\eta^2 (1+2C_1)C_1LG^2}{\epsilon}.
	\end{align*}
	
	Re-arranging terms, we get that
	\begin{eqnarray*}
		\frac{1}{T}\sum_{t=1}^T \E[\|\nabla f(\theta_t)\|^2]
		&\leq& 2C_0\left(\frac{f(\theta_1)-f^*}{T\eta}
		+ \frac{\eta L \sigma^2}{\epsilon} 
        + \frac{\eta^2 L^2 C_0C_2 \sigma^2}{\epsilon^2} \right) \\
		&& + 4C_0\left(\frac{(1+C_1)G^2}{T\sqrt\epsilon}
		+\frac{\eta (1+2C_1)C_1LG^2}{T\epsilon} \right),
	\end{eqnarray*}
	where in the last inequality we used $x_1=\theta_1$ and the lower bound $f^* \le f(\theta)$ for all $\theta\in\R^d$. Finally, choosing $\eta = \min(\eta_0, \frac{1}{\sqrt{T}})$ and considering the two cases, we arrive at the following rate
	\begin{eqnarray*}
		\frac{1}{T}\sum_{t=1}^T \E[\|\nabla f(\theta_t)\|^2]
		&\leq& 2C_0\left( \max\left(1,\frac{1}{\eta_0\sqrt{T}}\right) \frac{f(\theta_1)-f^*}{\sqrt{T}}
		+\frac{L \sigma^2}{\epsilon\sqrt{T}}
		+\frac{L^2C_0C_2\sigma^2}{\epsilon^2 T}\right) \\
		&& + 4C_0\left(\frac{(1+C_1)G^2}{\sqrt\epsilon T}
		+\frac{(1+2C_1)C_1LG^2}{\epsilon T^{3/2}} \right) \\
        &\leq& 2C_0\left( \frac{f(\theta_1)-f^*}{\sqrt{T}}
        +\frac{L \sigma^2}{\epsilon\sqrt{T}} \right) \\
        && + 4C_0\left( \frac{f(\theta_1)-f^*}{2\eta_0 T}
        +\frac{L^2C_0C_2\sigma^2}{2\epsilon^2 T}
        +\frac{(1+C_1)G^2}{\sqrt\epsilon T}
        +\frac{(1+2C_1)C_1LG^2}{\epsilon T^{3/2}} \right),
	\end{eqnarray*}
	which completes the proof of the theorem.
 \end{proof}

\subsection{Analysis Under PL Condition}\label{apx:PL-theory}

As in the non-convex analysis, here we derive the convergence rate with fixed step-size $\eta$.

\begin{theorem}{\bf(Convergence rate under PL)}
Let Assumptions \ref{ass:smooth}, \ref{ass:boundgrad}, \ref{ass:var} and \ref{ass:PL} hold. Then, choosing step-size $\eta = \min(\eta_0, \frac{2C_0\log T}{\mu T})$ with $\eta_0 = \min(\frac{\epsilon}{16LC_0}, \frac{C_0(1-\beta_1)(1-q_r)}{2\mu}, \frac{\epsilon^{3/4}}{6L\sqrt{C_0C_2}})$, Algorithm \ref{alg:LDAdam} satisfies
\begin{align*}
\E[f(\theta_{T+1})] - f^*
\le& \; \frac{\log T}{T} \left(\frac{2L C_0^2 \sigma^2}{\mu^2\epsilon}
+ \frac{6C_0(1+C_1)G^2}{\mu\sqrt{\epsilon}} \right)
+ \widetilde{\mathcal{O}}\left(\frac{G^4}{T^2} \right),
\end{align*}
\end{theorem}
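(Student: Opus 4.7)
\textbf{Proof plan for Theorem \ref{TheoremPLShort}.}
My plan is to reuse the virtual-iterate machinery from the non-convex proof (Appendix~\ref{apx:non-cvx-thoery}) but replace the telescoping sum argument by a one-step contraction argument tailored to the PL inequality. Specifically, I start from the same decomposition $\E[f(x_{t+1})] - \E[f(x_t)] \leq I+II+III+IV$ and the bounds already derived for the four terms. In the non-convex version, term~$I$ is bounded by $-\tfrac{\eta}{C_0}\E[\|\nabla f(\theta_t)\|^2] + \eta G^2\E[\|\Delta\Gamma_t\|]$. Under PL, this negative quadratic can be converted by Assumption~\ref{ass:PL} into $-\tfrac{2\mu\eta}{C_0}\E[f(\theta_t) - f^*]$, which is what enables the linear contraction. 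I will keep the small non-negative $\rho/(2\epsilon)\E[\|\nabla f(\theta_t)\|^2]$ term arising from term~$IV$, absorbing half of $I$ (with $\rho = \epsilon/(2C_0)$ and $\eta \leq \epsilon/(16LC_0)$) to retain a clean $-\tfrac{\mu\eta}{C_0}\E[f(\theta_t) - f^*]$ factor.

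The next step is converting $f(\theta_t) - f^*$ to $f(x_t) - f^*$ so that the recursion is self-contained. By smoothness, $|f(\theta_t) - f(x_t)| \leq \langle \nabla f(x_t), \theta_t - x_t\rangle + \tfrac{L}{2}\|\theta_t - x_t\|^2$; using the definition of $x_t$ together with $\|\Gamma_t\| \leq 1/\sqrt{\epsilon}$ and bound \eqref{bound-perturb} from Lemma~\ref{lem:bound-perturb}, the gap is $\mathcal{O}(\eta C_1 G/\sqrt{\epsilon})$ in norm, contributing an $\mathcal{O}(\eta^2 L C_1^2 G^2/\epsilon)$ term to the recursion. After choosing $\eta \leq \tfrac{C_0(1-\beta_1)(1-q_r)}{2\mu}$ so that the contraction factor $\gamma := 1 - \tfrac{\mu\eta}{C_0}$ satisfies the condition $1-\gamma \leq \tfrac{1}{2}(1-\beta_1)(1-q_r)$ of Lemma~\ref{lem:bound-perturb-exp-weight}, I get a one-step inequality of the form
\begin{equation*}
\E[f(x_{t+1}) - f^*]
\leq \gamma\,\E[f(x_t) - f^*]
+ \eta^2 A
+ \eta^3 B
+ \eta \Phi_t,
\end{equation*}
where $A$ collects the $L\sigma^2/\epsilon$-type noise contributions, $B$ collects the $L^2 C_2 \sigma^2/\epsilon^2$-type noise contributions, and $\Phi_t$ collects the compression-error contributions proportional to $G^2\E[\|\Delta\Gamma_t\|]$ together with the residual weighted-norm perturbation $\tfrac{\beta_1}{1-\beta_1}\bcum_{t-1} + (1-\beta_1)\error_t$.

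Unrolling this geometric recursion gives $\E[f(x_{T+1}) - f^*] \leq \gamma^T(f(\theta_1)-f^*) + \tfrac{\eta^2 A + \eta^3 B}{1-\gamma} + \eta\sum_{t=1}^T \gamma^{T-t}\Phi_t$. The second piece becomes $\tfrac{\eta C_0 A}{\mu} + \tfrac{\eta^2 C_0 B}{\mu}$. For the third piece, the $\|\Delta\Gamma_t\|$ contribution telescopes as in Lemma~\ref{lem:bound-precond}, yielding an $\mathcal{O}(\eta G^2/\sqrt{\epsilon})$ factor after multiplying by $\sum_t \gamma^{T-t} \leq 1/(1-\gamma) = C_0/(\mu\eta)$; the $\tfrac{\beta_1}{1-\beta_1}\bcum_{t-1}+(1-\beta_1)\error_t$ contribution is controlled directly by Lemma~\ref{lem:bound-perturb-exp-weight} (this is exactly why the step-size constraint $1-\gamma \leq \tfrac{1}{2}(1-\beta_1)(1-q_r)$ was imposed), giving both a $\sigma^2/(1-\gamma)$ term and a self-bounding $\sum \gamma^{T-t}\E[\|\nabla f(\theta_t)\|^2]$ term that I re-absorb into the left-hand side via one more application of PL, adjusting the step-size restriction $\eta \leq \epsilon^{3/4}/(6L\sqrt{C_0C_2})$ to keep the coefficient of this absorption strictly less than~$1$.

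Finally I set $\eta = \min(\eta_0, \tfrac{2C_0 \log T}{\mu T})$, so that $\gamma^T \leq \exp(-\mu\eta T/C_0) \leq 1/T^2$, making the transient term $\widetilde{\mathcal{O}}(G^4/T^2)$. The surviving leading terms are $\tfrac{\eta C_0 A}{\mu} = \tfrac{2L C_0^2\sigma^2 \log T}{\mu^2 \epsilon T}$ from the noise and $\tfrac{C_0 (1+C_1)G^2 \log T}{\mu\sqrt{\epsilon} T}$ from the compression overhead, reproducing the claimed bound. The main obstacle is the third paragraph: after applying Lemma~\ref{lem:bound-perturb-exp-weight} we get a residual $\sum \gamma^{T-t}\E[\|\nabla f(\theta_t)\|^2]$ of the wrong sign, and closing the recursion requires carefully balancing two applications of PL against two step-size restrictions (the $\epsilon/(16LC_0)$ and $\epsilon^{3/4}/(6L\sqrt{C_0 C_2})$ pieces of $\eta_0$) so that the self-bounding coefficient stays bounded away from~$1$ uniformly in~$T$.
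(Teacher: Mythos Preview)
Your overall route matches the paper's: virtual iterates $x_t$, a one-step contraction via PL, geometric unrolling, and Lemma~\ref{lem:bound-perturb-exp-weight} to control the weighted perturbation sum. The step-size restrictions you list are exactly those in the paper. However, there is a concrete gap in your absorption step.

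After unrolling and applying Lemma~\ref{lem:bound-perturb-exp-weight}, you correctly obtain a residual $+\,\mathrm{const}\cdot\sum_{t=1}^T\gamma^{T-t}\E[\|\nabla f(\theta_t)\|^2]$ with the wrong sign. Your proposed fix, ``re-absorb into the left-hand side via one more application of PL'', cannot work: the PL inequality is a \emph{lower} bound $\|\nabla f(\theta_t)\|^2 \ge 2\mu(f(\theta_t)-f^*)$, so it cannot be used to upper-bound a positive gradient-norm term. And the left-hand side is just $\E[f(x_{T+1})]-f^*$; there is nothing there that can absorb a weighted sum over all past iterates.

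The paper's remedy is to \emph{not} spend the entire negative gradient term on the PL contraction in the one-step inequality. Concretely, the paper bounds the inner-product term $I'=-\eta\E[\langle\nabla f(x_t),\Gamma_t g_t\rangle]$ so as to produce \emph{two} negative pieces: one $-\tfrac{\eta}{2C_0}\E[\|\nabla f(x_t)\|^2]$, to which PL is applied at $x_t$ (yielding the contraction factor $1-\tfrac{\eta\mu}{C_0}$ directly, with no intermediate $f(\theta_t)\to f(x_t)$ conversion needed), and a second piece $-\tfrac{\eta}{8C_0}\E[\|\nabla f(\theta_t)\|^2]$ that is carried through the recursion untouched. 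After unrolling, this second piece becomes $-\tfrac{\eta}{8C_0}\sum_t\gamma^{T-t}\E[\|\nabla f(\theta_t)\|^2]$, and the step-size bound $\eta\le \tfrac{\epsilon^{3/4}}{6L\sqrt{C_0C_2}}$ is precisely what makes the positive residual from Lemma~\ref{lem:bound-perturb-exp-weight} (whose coefficient is $\tfrac{4\eta^3 L^2 C_2}{\epsilon^{3/2}}$) dominated by this retained negative sum. In your displayed one-step inequality you have already converted everything via PL, so nothing negative remains for this cancellation; to close the argument you must hold back a $-c\eta\E[\|\nabla f(\theta_t)\|^2]$ term in the recursion before unrolling.
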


\begin{proof}
We start from descent lemma
\begin{eqnarray}
    \E[f(x_{t+1})]-\E[f(x_t)]
    &=& \underbrace{-\eta\E\left[\left\langle \nabla f(x_t), \Gamma_t\grad_t \right\rangle\right]}_{I^\prime} \nonumber\\
    && + \underbrace{\eta \E\left[\left\langle \nabla f(x_t), \Delta\Gamma_t\left( \frac{\beta_1}{1-\beta_1}\bcum_{t-1} + (1-\beta_1\error_t) \right)\right\rangle\right]}_{II} \nonumber\\
    && + \underbrace{\frac{\eta^2L}{2}\E\left[\left\|\Gamma_t\grad_t - \Delta\Gamma_t\left( \frac{\beta_1}{1-\beta_1}\bcum_{t-1} + (1-\beta_1\error_t) \right)\right\|^2\right]}_{III}. \label{eq0-pl}
\end{eqnarray}
We bound part $II$ and part $III$ in the same way as it was done in the non-convex analysis. We now provide a bound for part $I^\prime$:
\begin{align*}
  I^\prime 
  =& -\eta\E\left[\left\langle \nabla f(x_t), \Gamma_t\grad_t\right\rangle\right]\\
  =& -\eta\E\left[\left\langle \nabla f(x_t), \Gamma_{t-1}\grad_t\right\rangle\right] - \eta \E\left[\left\langle \nabla f(x_t), \Delta\Gamma_t \grad_t \right\rangle\right]\\
  =& -\eta\E\left[\left\langle \nabla f(x_t), \Gamma_{t-1}(\grad_t - \nabla f(x_t) + \nabla f(x_t))\right\rangle\right] - \eta \E\left[\left\langle \nabla f(x_t), \Delta\Gamma_t \grad_t \right\rangle\right]\\
  =& -\eta\E\left[\left\langle \nabla f(x_t), \Gamma_{t-1}\nabla f(x_t)\right\rangle\right]
  -\eta\E\left[\left\langle \nabla f(x_t), \Gamma_{t-1}(\grad_t - \nabla f(x_t))\right\rangle\right] \\
  & - \eta \E\left[\left\langle \nabla f(x_t), \Delta\Gamma_t \grad_t \right\rangle\right].
\end{align*}

We further expand and bound this equation as follows:
\begin{align*}
  I^\prime \leq
  &- \frac{\eta}{C_0} \E \left[ \left\| \nabla f(x_t) \right\|^2 \right]\\
  &- \eta \E\left[ \left\langle \nabla f(x_t) - \nabla f(\theta_t) + \nabla f(\theta_t), \Gamma_{t-1} \left( \nabla f(\theta_t) - \nabla f(x_t) \right)\right\rangle \right]\\
  &-\eta \E\left[ \left\langle \nabla f(x_t) - \nabla f(\theta_t) + \nabla f(\theta_t), \Delta\Gamma_t \grad_t\right\rangle \right]\\
  =&  - \frac{\eta}{C_0} \E \left[ \left\| \nabla f(x_t) \right\|^2 \right]\\
  &- \eta \E\left[ \left\langle \nabla f(x_t) - \nabla f(\theta_t), \Gamma_{t-1}\left( \nabla f(\theta_t) - \nabla f(x_t) \right)\right\rangle \right]\\
  &-\eta \E\left[ \left\langle \nabla f(\theta_t),  \Gamma_{t-1}\left( \nabla f(\theta_t) - \nabla f(x_t) \right)\right\rangle \right]\\
  & - \eta \E\left[ \left\langle \nabla f(x_t) - \nabla f(\theta_t), \Delta\Gamma_t\grad_t \right\rangle \right]\\
  & - \eta \E\left[ \left\langle  \nabla f(\theta_t), \Delta\Gamma_t\grad_t \right\rangle \right] \\
  \leq&  - \frac{\eta}{C_0} \E \left[ \left\| \nabla f(x_t) \right\|^2 \right] + \frac{\eta}{\sqrt{\epsilon}} \E \left[ \left\| \nabla f(x_t) - f(\theta_t) \right\|^2 \right]\\
 & -\eta \E\left[ \left\langle \nabla f(\theta_t),  \Gamma_{t-1}\left( \nabla f(\theta_t) - \nabla f(x_t) \right)\right\rangle \right]\\
  & + \eta \E\left[ \left\langle \nabla f(x_t) - \nabla f(\theta_t), \Delta\Gamma_t \grad_t \right\rangle \right]
  + \eta \E\left[ \left\langle \nabla f(\theta_t), \Delta\Gamma_t \grad_t \right\rangle \right].
  \end{align*}
To bound the second and third terms above we reuse derivation done in \eqref{eq:IV} to have
\begin{eqnarray}\label{aux-bound-1}
\eta \E\left[ \left\langle \nabla f(\theta_t),  \Gamma_{t-1}\left( \nabla f(\theta_t) - \nabla f(x_t) \right)\right\rangle \right]
&\overset{\eqref{eq:IV}}{\le}& \frac{\eta \rho}{2\epsilon}\E[\|\nabla f(\theta_t)\|^2]\\
&& +\frac{\eta^3 L^2}{2\rho\epsilon} \E\left[\left\|\frac{\beta_1}{1-\beta_1}\bcum_{t-1} + (1-\beta_1)\error_t \right\|^2\right] \nonumber
\end{eqnarray}
\begin{eqnarray}\label{aux-bound-2}
\E \left[ \left\| \nabla f(x_t) - f(\theta_t) \right\|^2 \right]
&\overset{\eqref{eq:IV}}{\le}& \frac{\eta^2 L^2}{\epsilon} \E\left[\left\|\frac{\beta_1}{1-\beta_1}\bcum_{t-1} + (1-\beta_1)\error_t \right\|^2\right]
\end{eqnarray}
Next, we use the Cauchy–Schwartz inequality to bound inner products above, $L$-smoothness inequality to bound $\|\nabla f(x_t) - \nabla f(\theta_t)\| \leq L \|x_t - \theta_t\| \leq \frac{\eta L C_1G}{\sqrt{\epsilon}}$, and the inequality $-\|a\|^2 \le -\frac{1}{2}\|b\|^2 + \|a-b\|^2$ for the first term:
\begin{align*}
  I^\prime 
  \leq& - \frac{\eta}{C_0} \E \left[ \left\| \nabla f(x_t) \right\|^2 \right]
        + \frac{\eta}{\sqrt{\epsilon}} \E \left[ \left\| \nabla f(x_t) - f(\theta_t) \right\|^2 \right] \\
      & + \frac{\eta \rho}{2\epsilon}\E[\|\nabla f(\theta_t)\|^2]
        + \frac{\eta^3 L^2}{2\rho\epsilon} \E\left[\left\|\frac{\beta_1}{1-\beta_1}\bcum_{t-1} + (1-\beta_1)\error_t \right\|^2\right] \\
      & + \eta G \E\left[ \|\nabla f(x_t) - \nabla f(\theta_t)\| \|\Delta\Gamma_t\| \right]
        + \eta G^2 \E\left[ \|\Delta\Gamma_t\| \right] \\
  \leq& - \frac{\eta}{2C_0} \E \left[ \left\| \nabla f(x_t) \right\|^2 \right]
        - \frac{\eta}{2C_0} \E \left[ \left\| \nabla f(x_t) \right\|^2 \right] \\
      & + \frac{\eta \rho}{2\epsilon}\E[\|\nabla f(\theta_t)\|^2]
        + \frac{\eta}{\sqrt{\epsilon}} \E \left[ \left\| \nabla f(x_t) - f(\theta_t) \right\|^2 \right]
        + \frac{\eta^3 L^2}{2\rho\epsilon} \E\left[\left\|\frac{\beta_1}{1-\beta_1}\bcum_{t-1} + (1-\beta_1)\error_t \right\|^2\right] \\
      & + \eta G \frac{\eta L C_1G}{\sqrt{\epsilon}} \E\left[ \| \Delta\Gamma_t \|  \right] 
        + \eta G^2\E\left[ \| \Delta\Gamma_t \|\right]\\
  \leq& - \frac{\eta}{2C_0} \E \left[ \left\| \nabla f(x_t) \right\|^2 \right]   
        -\frac{\eta}{4C_0}\E[\|\nabla f(\theta_t)\|^2]
        + \frac{\eta}{2C_0} \E[\|\nabla f(x_t) - \nabla f(\theta_t)\|^2]\\
      & + \frac{\eta \rho}{2\epsilon}\E[\|\nabla f(\theta_t)\|^2]
        + \frac{\eta}{\sqrt{\epsilon}} \E \left[ \left\| \nabla f(x_t) - f(\theta_t) \right\|^2 \right]
        + \frac{\eta^3 L^2}{2\rho\epsilon} \E\left[\left\|\frac{\beta_1}{1-\beta_1}\bcum_{t-1} + (1-\beta_1)\error_t \right\|^2\right] \\
      & + \frac{\eta^2 L C_1 G^2}{\sqrt{\epsilon}} \E\left[ \| \Delta\Gamma_t \|  \right]
        + \eta G^2\E\left[ \| \Delta\Gamma_t \|\right] \\
  \leq& - \frac{\eta}{2C_0} \E \left[ \left\| \nabla f(x_t) \right\|^2 \right]   
        -\frac{\eta}{4C_0}\E[\|\nabla f(\theta_t)\|^2] \\
      & + \frac{\eta \rho}{2\epsilon}\E[\|\nabla f(\theta_t)\|^2]
        + \frac{3\eta}{2\sqrt{\epsilon}} \E \left[ \left\| \nabla f(x_t) - f(\theta_t) \right\|^2 \right]
        + \frac{\eta^3 L^2}{2\rho\epsilon} \E\left[\left\|\frac{\beta_1}{1-\beta_1}\bcum_{t-1} + (1-\beta_1)\error_t \right\|^2\right] \\
      & + \frac{\eta^2 L C_1 G^2}{\sqrt{\epsilon}} \E\left[ \| \Delta\Gamma_t \|  \right]
        + \eta G^2\E\left[ \| \Delta\Gamma_t \|\right] \\
  \overset{\eqref{aux-bound-2}}{\leq}
      & - \frac{\eta}{2C_0} \E \left[ \left\| \nabla f(x_t) \right\|^2 \right]   
        -\frac{\eta}{4C_0}\E[\|\nabla f(\theta_t)\|^2]
        + \frac{\eta \rho}{2\epsilon}\E[\|\nabla f(\theta_t)\|^2] \\
      & + \frac{3\eta}{2\sqrt{\epsilon}} \frac{\eta^2 L^2}{\epsilon} \E\left[\left\|\frac{\beta_1}{1-\beta_1}\bcum_{t-1} + (1-\beta_1)\error_t \right\|^2\right]
        + \frac{\eta^3 L^2}{2\rho\epsilon} \E\left[\left\|\frac{\beta_1}{1-\beta_1}\bcum_{t-1} + (1-\beta_1)\error_t \right\|^2\right] \\
      & + \frac{\eta^2 L C_1 G^2}{\sqrt{\epsilon}} \E\left[ \| \Delta\Gamma_t \|  \right]
        + \eta G^2\E\left[ \| \Delta\Gamma_t \|\right] \\
  \leq& - \frac{\eta}{2C_0} \E \left[ \left\| \nabla f(x_t) \right\|^2 \right]   
        -\frac{\eta}{4C_0}\E[\|\nabla f(\theta_t)\|^2]
        + \frac{\eta \rho}{2\epsilon}\E[\|\nabla f(\theta_t)\|^2] \\
      & + \frac{\eta^3 L^2}{2\epsilon}\left(\frac{3}{\sqrt{\epsilon}} + \frac{1}{\rho}\right) \E\left[\left\|\frac{\beta_1}{1-\beta_1}\bcum_{t-1} + (1-\beta_1)\error_t \right\|^2\right]
        + \frac{\eta^2 L C_1 G^2}{\sqrt{\epsilon}} \E\left[ \| \Delta\Gamma_t \|  \right]
        + \eta G^2\E\left[ \| \Delta\Gamma_t \|\right].
  \end{align*}
  
Plugging the obtained bound for $I^\prime$ with previously obtained bounds for $II$ and $III$

\begin{eqnarray*}
    II &\le& \eta C_1 G^2 \mathbb E[\|\Delta\Gamma_t\|]+\frac{\eta^2 C_1^2 LG^2}{\sqrt\epsilon}\mathbb E[\|\Delta\Gamma_t\|] \\
    III &\le& \frac{\eta^2 L}{\epsilon}\mathbb E[\|\nabla f(\theta_t)\|^2] + \frac{\eta^2 L\sigma^2}{\epsilon}+\eta^2 C_1^2 LG^2 \mathbb E[\|\Delta\Gamma_t\|^2]
\end{eqnarray*}

into \eqref{eq0-pl}, choosing $\rho = \frac{\epsilon}{8C_0}$ and using the step-size bound $\eta \leq \frac{\epsilon}{16L C_0}$ we get
\begin{align*}
\E[f(x_{t+1})] - \E[f(x_t)]
\le& - \frac{\eta}{2C_0} \E \left[ \left\| \nabla f(x_t) \right\|^2 \right]
     -\frac{\eta}{4C_0}\E[\|\nabla f(\theta_t)\|^2]
     + \frac{\eta \rho}{2\epsilon}\E[\|\nabla f(\theta_t)\|^2] \\
   & + \frac{\eta^3 L^2}{2\epsilon}\left(\frac{3}{\sqrt{\epsilon}} + \frac{1}{\rho}\right) \E\left[\left\|\frac{\beta_1}{1-\beta_1}\bcum_{t-1} + (1-\beta_1)\error_t \right\|^2\right]
     + \frac{\eta^2 L C_1G^2}{\sqrt{\epsilon}} \E\left[ \| \Delta\Gamma_t \|  \right]   \\
   & + \eta C_1 G^2 \E[\|\Delta\Gamma_t\|]
     +\frac{\eta^2 C_1^2 LG^2}{\sqrt\epsilon}\E[\|\Delta\Gamma_t\|]
     + \eta G^2\E\left[ \| \Delta\Gamma_t \|\right]\\
   & + \frac{\eta^2 L}{\epsilon}\E[\|\nabla f(\theta_t)\|^2]
     + \frac{\eta^2 L\sigma^2}{\epsilon}+\eta^2 C_1^2 LG^2 \E[\|\Delta\Gamma_t\|^2] \\
\le& - \frac{\eta}{2C_0} \E\left[ \left\| \nabla f(x_t) \right\|^2 \right]
     - \frac{\eta}{8C_0}\E[\|\nabla f(\theta_t)\|^2]
     + \frac{\eta^2 L \sigma^2}{\epsilon} \\
   & + \frac{\eta^3 L^2}{2\epsilon}\left(\frac{3}{\sqrt{\epsilon}} + \frac{\epsilon}{8C_0}\right) \E\left[\left\|\frac{\beta_1}{1-\beta_1}\bcum_{t-1} + (1-\beta_1)\error_t \right\|^2\right] \\
   & + \eta (1+C_1) G^2 \E[\|\Delta\Gamma_t\|]
     + \frac{\eta^2 (1+C_1)C_1 LG^2}{\sqrt\epsilon}\E[\|\Delta\Gamma_t\|]
     + \eta^2 C_1^2 LG^2 \E[\|\Delta\Gamma_t\|^2] \\
\le& - \frac{\eta\mu}{C_0} (\E[f(x_t)] - f^*) 
     - \frac{\eta}{8C_0}\E[\|\nabla f(\theta_t)\|^2]
     + \frac{\eta^2 L \sigma^2}{\epsilon} \\
   & + \frac{2\eta^3 L^2}{\epsilon^{3/2}} \E\left[\left\|\frac{\beta_1}{1-\beta_1}\bcum_{t-1} + (1-\beta_1)\error_t \right\|^2\right] \\
   & + \eta (1+C_1) G^2 \E[\|\Delta\Gamma_t\|]
     + \frac{\eta^2 (1+C_1)C_1 LG^2}{\sqrt\epsilon}\E[\|\Delta\Gamma_t\|]
     + \eta^2 C_1^2 LG^2 \E[\|\Delta\Gamma_t\|^2],
\end{align*}
where in the last inequality we applied PL condition from Assumption \ref{ass:PL}. After some reshuffling of the terms, we obtain the following recursion:
\begin{align*}
\E[f(x_{t+1})] - f^*
\le& \left(1 - \frac{\eta\mu}{C_0}\right) (\E[f(x_t)] - f^*) 
     - \frac{\eta}{8C_0}\E[\|\nabla f(\theta_t)\|^2]
     + \frac{\eta^2 L \sigma^2}{\epsilon} \\
   & + \frac{2\eta^3 L^2}{\epsilon^{3/2}} \E\left[\left\|\frac{\beta_1}{1-\beta_1}\bcum_{t-1} + (1-\beta_1)\error_t \right\|^2\right] \\
   & + \eta (1+C_1) G^2 \E[\|\Delta\Gamma_t\|]
     + \frac{\eta^2 (1+C_1)C_1 LG^2}{\sqrt\epsilon}\E[\|\Delta\Gamma_t\|]
     + \eta^2 C_1^2 LG^2 \E[\|\Delta\Gamma_t\|^2].
\end{align*}

Now we unroll the obtained recursion and invoke Lemma \ref{lem:bound-perturb-exp-weight} with $\gamma = 1 - \frac{\eta\mu}{C_0}$. From $\eta \le \frac{\epsilon}{4LC_0}\le \frac{C_0}{4\mu}$, we conclude that $\gamma = 1 -\frac{\eta\mu}{C_0}\in(0,1)$. To satisfy the condition on $\gamma$ of Lemma \ref{lem:bound-perturb}, we enforce the bound $\eta \le \frac{C_0}{2\mu}(1-\beta_1)(1-q_r)$ on the step size. Therefore,
\begin{align}
\E[f(x_{T+1})] - f^*
\overset{\eqref{bound-perturb-exp}}{\le}&
    \left(1 - \frac{\eta\mu}{C_0}\right)^T (f(x_1) - f^*)
    - \frac{\eta}{8C_0} \sum_{t=1}^T \left(1 - \frac{\eta\mu}{C_0}\right)^{T-t} \E[\|\nabla f(\theta_t)\|^2] \nonumber \\
  & + \frac{\eta^2 L \sigma^2}{\epsilon} \sum_{t=0}^{T-1} \left(1 - \frac{\eta\mu}{C_0}\right)^t \nonumber\\
  & + \frac{2\eta^3 L^2}{\epsilon^{3/2}} \sum_{t=1}^T \left(1 - \frac{\eta\mu}{C_0}\right)^{T-t} \E\left[\left\|\frac{\beta_1}{1-\beta_1}\bcum_{t-1} + (1-\beta_1)\error_t \right\|^2\right] \nonumber \\
  & + \eta (1+C_1) G^2 \sum_{t=1}^T \left(1 - \frac{\eta\mu}{C_0}\right)^{T-t} \E[\|\Delta\Gamma_t\|] \nonumber\\
  & + \frac{\eta^2 (1+C_1)C_1 LG^2}{\sqrt\epsilon} \sum_{t=1}^T \left(1 - \frac{\eta\mu}{C_0}\right)^{T-t}\E[\|\Delta\Gamma_t\|] \nonumber\\
  & + \eta^2 C_1^2 LG^2 \sum_{t=1}^T \left(1 - \frac{\eta\mu}{C_0}\right)^{T-t} \E[\|\Delta\Gamma_t\|^2]. \label{eq:pl-rec}
\end{align}

For the second sum above we upper bound it by its infinite sum as
$$
\sum_{t=0}^{T-1} \left(1 - \frac{\eta\mu}{C_0}\right)^t
\le \sum_{t=0}^{\infty} \left(1 - \frac{\eta\mu}{C_0}\right)^t
= \frac{C_0}{\eta\mu}.
$$
For the third sum, we apply Lemma \ref{lem:bound-perturb-exp-weight} and for the other three sums we bound $1 - \frac{\eta\mu}{C_0} \le 1$ and apply the bounds in Lemma \ref{lem:bound-precond}:
$$
\sum_{t=1}^T \left(1 - \frac{\eta\mu}{C_0}\right)^t \E[\|\Delta\Gamma_t\|]
\le \sum_{t=1}^T \E[\|\Delta\Gamma_t\|]
\le \frac{2}{\sqrt{\epsilon}},
$$
$$
\sum_{t=1}^T \left(1 - \frac{\eta\mu}{C_0}\right)^t \E[\|\Delta\Gamma_t\|^2]
\le \sum_{t=1}^T \E[\|\Delta\Gamma_t\|^2]
\le \frac{2}{\epsilon}.
$$

Plugging all this bounds into \eqref{eq:pl-rec}, cancelling terms involving gradients with step size restriction $\eta \le \frac{\epsilon^{3/4}}{6L\sqrt{C_0C_2}}$ and noticing that $x_1=\theta_1$, we finally get
\begin{align*}
\E[f(x_{T+1})] - f^*
\le& \left(1 - \frac{\eta\mu}{C_0}\right)^T (f(\theta_1) - f^*)
    - \frac{\eta}{8C_0} \sum_{t=1}^T \left(1 - \frac{\eta\mu}{C_0}\right)^{T-t} \E[\|\nabla f(\theta_t)\|^2] \\
  & + \frac{\eta^2 L \sigma^2}{\epsilon} \frac{C_0}{\eta\mu} \\
  & + \frac{2\eta^3 L^2}{\epsilon^{3/2}} \cdot 2C_2 \left( \frac{C_0 \sigma^2}{\eta\mu} + \sum_{t=1}^T \left(1 - \frac{\eta\mu}{C_0}\right)^{T-t} \E[\|\nabla f(\theta_t)\|^2] \right) \\
  & + \eta (1+C_1) G^2 \frac{2}{\sqrt{\epsilon}}
    + \frac{\eta^2 (1+C_1)C_1 LG^2}{\sqrt\epsilon} \frac{2}{\sqrt{\epsilon}}
    + \eta^2 C_1^2 LG^2 \frac{2}{\epsilon} \\
\le& \left(1 - \frac{\eta\mu}{C_0}\right)^T (f(\theta_1) - f^*) \\
  & + \frac{\eta L C_0 \sigma^2}{\mu\epsilon} + \frac{4\eta^2 L^2 C_0 C_2 \sigma^2}{\mu\epsilon^{3/2}} \\
&+ \frac{2\eta (1+C_1) G^2}{\sqrt{\epsilon}}
+ \frac{2\eta^2 (1+C_1)C_1 LG^2}{\epsilon}
+ \frac{2\eta^2 C_1^2 LG^2}{\epsilon} \\
=& \left(1 - \frac{\eta\mu}{C_0}\right)^T (f(\theta_1) - f^*) \nonumber\\
&+ \eta \left(\frac{L C_0 \sigma^2}{\mu\epsilon} + \frac{2(1+C_1) G^2}{\sqrt{\epsilon}}\right)
+ \eta^2\left( \frac{4 L^2 C_0 C_2 \sigma^2}{\mu\epsilon^{3/2}}
+ \frac{2(1+2C_1)C_1 LG^2}{\epsilon} \right).
\end{align*}

The obtained rate above is with respect to the virtual iterates $x_t$ that we defined for the purposes of analysis. To convert this rate with respect to the iterates $\theta_t$ of the algorithm, we apply $L$-smoothness to bound the functional difference:
$$
|f(x_t) - f(\theta_t)| \le |\langle \nabla f(\theta_t), x_t-\theta_t) \rangle| + \frac{L}{2}\|x_t-\theta_t\|^2
\le \frac{\eta C_1 G^2}{\sqrt{\epsilon}} + \frac{\eta^2 L C_1^2G^2}{2\epsilon},
$$
which implies
\begin{align*}
\E[f(\theta_{T+1})] - f^*
\le& \left(1 - \frac{\eta\mu}{C_0}\right)^T (f(\theta_1) - f^*) \nonumber\\
&+ \eta \left(\frac{L C_0 \sigma^2}{\mu\epsilon} + \frac{3(1+C_1) G^2}{\sqrt{\epsilon}}\right)
+ \eta^2\left( \frac{4 L^2 C_0 C_2 \sigma^2}{\mu\epsilon^{3/2}}
+ \frac{5(1+C_1)C_1 LG^2}{\epsilon} \right).
\end{align*}

Plugging  $\eta = \min(\eta_0, \frac{2C_0\log T}{\mu T})$ with $\eta_0 = \min(\frac{\epsilon}{16LC_0}, \frac{C_0(1-\beta_1)(1-q_r)}{2\mu}, \frac{\epsilon^{3/4}}{6L\sqrt{C_0C_2}})$, we get

\begin{align*}
\E[f(\theta_{T+1})] - f^*
\le& \; \max\left(\frac{1}{T^2}, \left(1-\frac{\eta_0\mu}{C_0}\right)^T \right)(f(\theta_1) - f^*) \\
&+ \frac{\log T}{T} \frac{2C_0}{\mu} \left(\frac{L C_0 \sigma^2}{\mu\epsilon} + \frac{3(1+C_1) G^2}{\sqrt{\epsilon}}\right) \\
&+ \frac{\log^2 T}{T^2} \frac{4C_0^2}{\mu^2} \left( \frac{4 L^2 C_0 C_2 \sigma^2}{\mu\epsilon^{3/2}}
+ \frac{5(1+C_1)C_1 LG^2}{\epsilon} \right) \\
=& \; \frac{\log T}{T} \left(\frac{2L C_0^2 \sigma^2}{\mu^2\epsilon}
+ \frac{6C_0(1+C_1)G^2}{\mu\sqrt{\epsilon}} \right)
+ \widetilde{\mathcal{O}}\left(\frac{G^4}{T^2} \right),
\end{align*}
which completes the proof.
\end{proof}
 
%%%%%%%%%%%%%%%%%%%%%%%%%%%%%%%%%%%%%%%%%%%%%%%%%%%%%%%%%%%%
\end{document}